\documentclass{article}

\PassOptionsToPackage{numbers}{natbib}

    \usepackage[final]{neurips_2025}

\usepackage[utf8]{inputenc} %
\usepackage[T1]{fontenc}    %
\usepackage{hyperref}       %
\usepackage{url}            %
\usepackage{booktabs}       %
\usepackage{amsfonts}       %
\usepackage{nicefrac}       %
\usepackage{microtype}      %
\usepackage{xcolor}         %

\usepackage{amsmath,cleveref}
\usepackage{hyperref}
\newcommand{\todo}[1]{}

\usepackage{amsmath}
\usepackage{amssymb}
\usepackage{amsfonts}
\usepackage{amsthm}
\usepackage{dsfont}
\usepackage{graphicx}
\usepackage{enumitem}

\hypersetup{
    colorlinks,
    linkcolor={red!50!black},
    citecolor={blue!50!black},
    urlcolor={blue!80!black},
    pdftitle={Evolutionary Prediction Games}
}

\usepackage{apptools}
\AtAppendix{\counterwithin{proposition}{section}}
\AtAppendix{\counterwithin{lemma}{section}}
\AtAppendix{\counterwithin{theorem}{section}}
\AtAppendix{\counterwithin{assumption}{section}}
\AtAppendix{\counterwithin{definition}{section}}
\AtAppendix{\counterwithin{remark}{section}}

\ifx\draftmode\undefined
\else

\fi
\newcommand{\extended}[1]{} %
\renewcommand{\todo}[1]{}

\newcommand{\tocitec}[1]{}

\newcommand{\torefc}[1]{}
\newcommand{\nir}[1]{}
\newcommand{\eden}[1]{}

\newcommand{\sectionoutline}[1]{}
\renewcommand{\sectionoutline}[1]{}

\newcommand{\es}[1]{}

\newcommand{\squeeze}{\looseness=-1}

\newcommand{\PartialDerivative}[2]{\frac{\partial{#1}}{\partial{#2}}}

\newcommand{\Norm}[1]{\left\lVert{#1}\right\rVert}
\newcommand{\Size}[1]{{\left|{#1}\right|}}

\newcommand{\Set}[1]{{\left\{{#1}\right\}}}
\newcommand{\Reals}{\mathbb{R}}
\newcommand{\NonNegativeReals}{\Reals_{\ge 0}}

\newcommand{\UnitInterval}{\left[0,1\right]}
\newcommand{\Indicator}[1]{\mathds{1}\left\{{#1}\right\}}
\DeclareMathOperator*{\sgn}{sgn}
\DeclareMathOperator*{\support}{support}
\newcommand\expect[2]{\mathbb{E}_{#1}{\left[ {#2} \right]}}
\newcommand\prob[2]{\mathbb{P}_{#1}{\left[ {#2} \right]}}

\newcommand{\Features}{\mathcal{X}}
\newcommand{\Labels}{\mathcal{Y}}
\newcommand{\Loss}{\mathcal{L}}
\newcommand{\LossVec}{\vec{\mathcal{L}}}

\newcommand{\DistOver}[1]{{\Delta\left( {#1} \right)}}
\newcommand{\DistOverGroups}{\Delta^K}
\newcommand{\SingletonDistribution}{\mathbf{e}}
\newcommand{\acc}{\mathrm{acc}}

\newcommand{\Hypotheses}{\mathcal{H}}
\newcommand{\Learner}{\mathcal{A}}

\newcommand{\FeaturesLabels}{{\Features\times\Labels}}

\newcommand{\Retention}{\nu}

\newcommand{\yhat}{\hat{y}}
\newcommand{\opt}{\mathrm{opt}}
\newcommand{\const}{\mathrm{const}}
\newcommand{\otherwise}{\mathrm{otherwise}}
\newcommand{\tdt}{\tfrac{\mathrm d}{\mathrm d t}}
\newcommand{\fdt}{\frac{\mathrm d}{\mathrm d t}}

\newtheorem{theorem}{Theorem}
\newtheorem{lemma}{Lemma}

\newtheorem{definition}{Definition}

\newtheorem{proposition}{Proposition}

\newtheorem{remark}{Remark}

\newcommand{\appendixref}[1]{\hyperref[#1]{appendix~\ref*{#1}}}
\newcommand{\Appendixref}[1]{\hyperref[#1]{Appendix~\ref*{#1}}}

\newenvironment{itemizecompact}{\begin{itemize}[leftmargin=1em,topsep=0em,itemsep=0.2em]}{\end{itemize}}
\newenvironment{enumeratecompact}{\begin{enumerate}[leftmargin=1.7em,topsep=0em,itemsep=0.1em]}{\end{enumerate}}

\newcommand{\nnalg}{1\nobreakdash-NN}

\usepackage{amsmath,amsfonts,bm}

\def\eqref#1{equation~\ref{#1}}

\def\1{\bm{1}}

\def\vzero{{\bm{0}}}
\def\vone{{\bm{1}}}

\def\vb{{\bm{b}}}

\def\vp{{\bm{p}}}
\def\vq{{\bm{q}}}

\def\vv{{\bm{v}}}

\def\mI{{\bm{I}}}

\def\mPhi{{\bm{\Phi}}}

\DeclareMathAlphabet{\mathsfit}{\encodingdefault}{\sfdefault}{m}{sl}
\SetMathAlphabet{\mathsfit}{bold}{\encodingdefault}{\sfdefault}{bx}{n}

\DeclareMathOperator*{\argmax}{arg\,max}
\DeclareMathOperator*{\argmin}{arg\,min}

\DeclareMathOperator{\sign}{sign}

\newcommand{\ParamCoexistenceSvmRegularizationExperimentAlpha}{0.75}
\newcommand{\ParamEmpiricalCifarAugmentationCoexistenceAccuracyConfidence}{93.5 \pm 0.1\%}

\newcommand{\ParamEmpiricalCifarAugmentationNRepetitions}{20}
\newcommand{\ParamEmpiricalCifarAugmentationSingleGroupAccuracyConfidence}{92.6 \pm 0.1\%}

\newcommand{\ParamEmpiricalMnistLabelNoiseCoexistenceAccuracyConfidence}{80.4 \pm 0.2\%}

\newcommand{\ParamEmpiricalMnistLabelNoiseNRepetitions}{50}
\newcommand{\ParamEmpiricalMnistLabelNoiseTrainAccuracyConfidence}{98.7 \pm 0.1\%}

\newcommand{\ParamFolktablesGridNumRepetitions}{10}

\newcommand{\ParamFolktablesStateCaliforniaNumDatapoints}{195,665}
\newcommand{\ParamFolktablesStateNewYorkNumDatapoints}{103,021}
\newcommand{\ParamFolktablesStateTexasNumDatapoints}{135,924}
\newcommand{\ParamFolktablesSvmDisparityTime}{316}
\newcommand{\ParamFolktablesTestSetSize}{5,000}
\newcommand{\ParamFolktablesTrainingSetSize}{1000}

\newcommand{\ParamSensitivityGaussianProcessAlpha}{4}
\newcommand{\ParamSensitivityGaussianProcessLengthScale}{0.1}
\newcommand{\ParamSensitivityInitialStateRangeMax}{0.65}
\newcommand{\ParamSensitivityInitialStateRangeMin}{0.35}

\newcommand{\ParamSensitivityTimeToDominanceThreshold}{0.01}

\title{Evolutionary Prediction Games}

\newcommand{\Affiliation}[2]{
{#1}\\
Technion -- Israel Institute of Technology\\
Haifa, Israel\\
\texttt{{#2}@cs.technion.ac.il}
}
\author{%
    \Affiliation{Eden Saig}{edens}
    \And
    \Affiliation{Nir Rosenfeld}{nirr}
}

\begin{document}

\maketitle

\begin{abstract}

When a prediction algorithm serves
a collection of users,
disparities in prediction quality are likely to emerge.
If users respond to accurate predictions by increasing engagement,
inviting friends,
or adopting trends,
repeated learning creates a feedback loop that shapes both the model and the population of its users.
In this work, we introduce \emph{evolutionary prediction games},
a framework grounded in evolutionary game theory which models such feedback loops as natural-selection processes among groups of users. Our theoretical analysis reveals a gap between idealized and real-world learning settings: 
In idealized settings with unlimited data and 
computational power,
repeated learning 
creates competition 
and
promotes competitive exclusion 
across a broad class of behavioral dynamics.
However, under realistic constraints such as finite data, limited compute, or risk of overfitting,
we show that stable coexistence and mutualistic symbiosis between groups becomes possible.
We analyze these possibilities in terms of their stability and feasibility,
present mechanisms that can sustain their existence,
and empirically demonstrate our findings.
\squeeze

\end{abstract}

\section{Introduction} \label{sec:intro}
Accurate predictions have become essential for any platform that supports user decision-making. Improvements in prediction accuracy often directly translate to better quality of service with benefits to both the platform and its users.
This is one reason why modern platforms ranging from
content recommendation 
and online marketplaces
to personalized education and medical services
have come to rely on machine learning as their backbone,
and are investing much effort and resources in continually improving their predictions. However, while generally beneficial, promoting accuracy blindly can have unexpected,
and in some cases undesired, consequences \citep{chaney2018algorithmic, tsipras2018robustness,hardt2023performative}. It is therefore important to understand the possible downstream and long-term effects of learning on 
social outcomes.

Conventional learning approaches aim to maximize accuracy
on a given, predetermined data distribution.
But in social settings,
the distributions are composed of those users who \emph{choose} to use the platform.
When such choices depend on the quality of predictions,
learning becomes a driver of user self-selection,
and thus gains influence over its user population.
This creates a feedback loop:
model deployment shifts the population,
and population changes trigger model retraining.
We are interested in understanding the general tendencies and possible long-term outcomes of this process.
\squeeze

In particular, our focus is on user feedback dynamics in which accurate predictions encourage
engagement or adoption,
and prediction errors discourage them.
Such dynamics arise across domains: accurate recommendations drive network-effect growth as users invite peers \citep{leskovec2007dynamics}; 
successful content strategies on social media are mimicked by others \citep{bian2023influencer}; 
precise credit-risk models reduce premiums and therefore attract more clients to loan programs \citep{khandani2010consumer}; 
and medical providers with higher diagnostic accuracy draw more patients \citep{bertrand2022patient}. 
Common to the above is that
individual choices adhere to some form of group structure in the population, and that in aggregate, these groups tend to grow when prediction quality is higher. 
Note that this notion of ``group'' is flexible,
as groups can represent different demographics, behaviors, or roles,
and with memberships being either inherent or chosen.

Focusing on user choices as the primary driving force,
we seek to analyze the impact of learning on group proportions over time.
For this purpose,
we adopt a novel evolutionary perspective and model the 
joint dynamics of learning and user choices
through the lens of \emph{natural selection}.
Under the assertion that some degree of predictive error always exists,
our key modeling point is that accuracy becomes, in effect, a \emph{scarce resource}
over which different groups in the population ``compete''. %
By associating each group's accuracy with its evolutionary fitness,
we obtain an evolutionary process in which learning is the
driver of selective pressure,
and consequently, a determinant of long-term population outcomes.
We can then ask questions regarding the long-term tendencies of the population composition,
overall and per-group accuracies,
and the affect of different modeling and algorithmic choices on
temporal trends and long-term evolutionary outcomes.

\paragraph{Contributions.} 
Our main conceptual innovation is to analyze these feedback loops using a \emph{population game}---%
a core component of evolutionary game theory useful for studying the dynamics of agents driven by local interaction rules
\citep{ smith1982evolution, hofbauer1998evolutionary, sandholm2010population}. 
Evolutionary game theory relates between game-theoretic properties of population games (e.g., Nash equilibria), and the dynamics of a wide variety of natural selection dynamics, such as imitation, word-of-mouth influence, or social learning,
all of which are supported by our analysis.
This makes the framework well-suited for our setting,
as it allows us to address the common thread in the wide variety of feedback loops described above in a unified way.
\squeeze

Towards this,
we first propose a novel game, called an \emph{evolutionary prediction game},
in which evolutionary fitness is associated with prediction accuracy.
This allows us to
study the co-evolution of a (re)trained model and the population of its users.
We then analyze the structure of games induced by different learning algorithms and their corresponding dynamics,
and characterize the possible outcomes under different settings.
Our results include
conditions under which `survival of the fittest' is a likely outcome;
mechanisms that nonetheless enable coexistence;
a discussion of the role and effects of retraining;
and connections between survival, accuracy, and (evolutionary) stability.
We also discuss connections to fairness and social conservation.
\squeeze

We conclude by complementing our analysis
with experiments using both synthetic and real data and coupled with simulated dynamics.
Our empirical results shed light on when and how certain user groups are likely to dominate, disappear, or coexist,
and demonstrate how different design choices can shape social outcomes---even if inadvertently.
Together, these highlight the importance of understanding, anticipating, and accounting for the long-term effects of learning on its user population.
\squeeze

\if
Accurate predictions
have potential to improve many aspects of our lives,
whether as individuals, as groups, or as a society.
From everyday 
content recommendation to 
life-changing medical diagnoses,
higher quality predictions in human-facing tasks often lead to better decisions and therefore better outcomes
for users.
A reasonable conclusion is that we should work hard to further push the envelope on predictive accuracy across applications.
Considerable efforts in machine learning research and practice are devoted precisely to this purpose.
But at the same time, there is a growing recognition that blindly %
promoting accuracy in social contexts can have unexpected,
and sometimes undesired, consequences.
It is therefore important to consider not only \emph{how much},
but also \emph{how},
predictive accuracy is attained.
\squeeze

Concerns regarding the application of machine learning in society typically
consider its possible ill effects 
on a \emph{given} population of users.
Taking this notion one step further, here we argue that learning can have the potency to \emph{determine} what this population is---or will be. 
Our main observation is that when the benefit to users depends on the accuracy of the predictions they receive,
then consequently, so will the willingness of those users to use a given
platform or service. %
Since users' decisions of whether or not to use a system
are in aggregate and over time 
precisely what forms the population of system users,
learning gains influence over
the population's eventual composition.
Our goal is to study the capacity of learning to do so---%
a point we believe is often overlooked, but carries significant implications.

Our working hypothesis
is that not all users can be equally happy;
or more technically,
that in typical settings,
not all users in a population will get equally accurate predictions.
If we make the plausible assumption that users who get higher accuracy are more likely to stay (or even invite their friends),
whereas users who get lower accuracy are more prone to leave
(or convince others to leave as well),
then over time, learning will come to shape the population
and the relative sizes of different user groups.
From 
a learning perspective,
once the population changes,
it becomes beneficial to retrain the model on new data from the updated distribution.
This creates a feedback loop in which changes to the model induce changes in the population, which then lead to further changes to the model, ad infinitum. %
We are interested in understanding the long-term outcomes of this process,
both in terms of the attainable accuracy %
and the resulting composition of the population.

Towards this, we adopt a novel evolutionary perspective and model the 
joint dynamics of learning and user choices
through the lens of \emph{natural selection}.
Under the assertion that some degree of predictive error always exists,
our key modeling point is that accuracy becomes, in effect, a scarce resource
over which different groups in the population `compete'. %
By associating each group's accuracy with its evolutionary fitness,
we obtain an evolutionary process in which learning is the
driver of selective pressure,
and consequently, a determinant of long-term population outcomes.

Our main innovation is to model this process using a \emph{population game},
which we adopt from the field of evolutionary game theory \citep[e.g.,][]{hofbauer1998evolutionary},
which studies the dynamics of populations of agents driven by local, myopic interaction rules.
Population games are a core component of this framework,
defining the evolutionary fitness of each group 
as a function of the overall population state.
Formalizing evolution as a game connects key properties of natural selection processes (e.g. stationary points)
with properties of the corresponding population game (e.g. Nash equilibria).
Such links have previously been established for a large family of games and dynamics \citep[see, e.g., ][]{sandholm2010population},
and successfully used for explaining a wide array of
evolutionary phenomena from survival of the fittest and kin selection
to altruistic behavior and mating rituals. %

Population games are %
useful for studying the limiting behavior of global population dynamics driven by local 
interactions,
such as imitation, word-of-mouth, %
or social learning.
This makes them well-suited for adaptation to our setting.
Building on this as motivation, we propose a new form of population game,
which we call an \emph{evolutionary prediction game},
that models the coevolution of a (re)trained model and the population of users over which it makes predictions.
Evolutionary prediction game are unique in that the interaction between the different groups are induced by solutions to a non-trivial optimization problem, namely loss minimization.
This complexity introduces inherent challenges to characterizing the game's outcomes,
but at the same time, provides useful structure, which we exploit in our analysis.

A primary question that evolutionary game theory aims to answer is: which species survive?
Typically, there are two possible outcomes of interest:
survival of some species (and extinction of others), or coexistence.
Evolutionary prediction games allow us to explore which of these outcomes materialize, 
and when,
for a population of social groups in settings where learning acts as a selective force.
In nature, most interactions result in the `survival of the fittest',
an outcome which is so pronounced that it is often associated with the mechanism of natural selection itself.
Our first result establishes that this occurs in learning as well,
in two settings: (i) when the model is trained only once at the onset (i.e., is not updated over time), and
(ii) under the ideal conditions of infinite data and unlimited compute.
While intuitive, proving this for evolutionary prediction games is technically challenging due to the complexity introduced by the loss minimization operator.
Our main tool here is to show that evolutionary prediction games in such settings adhere to the structure of more general \emph{potential games} \citep{sandholm2001potential}.
This allows us to connect evolutionary outcomes to extrema of scalar-valued functions over the simplex, providing interpretation and simplifying analysis.

One interesting point that our analysis reveals is that,
in the idealized setting, although survival of the fittest is the likely outcome,
coexistence becomes \emph{a possibility} under retraining.
Technically, this manifests as the (possible) existence of a mixed equilibrium,
but which is unstable (and so will not materialize under plausible dynamics).
Our next set of results then show that \emph{learning under realistic conditions},
i.e., using finite data and/or limited compute,
can stabilize such mixed equilibria,
meaning that coexistence becomes possible (or even probable).
We demonstrate several mechanisms through which this phenomenon can arise, all common in machine learning practice, namely:
(i) the use of proxy losses (e.g., hinge loss or cross entropy),
(ii) the need to overcome overfitting, 
and (iii) asymmetric label noise.
We conduct experiments using synthetic and real data:
the former to provide insight into the inner workings of these mechanisms,
and the latter to demonstrate their capacity for coexistence under more realistic conditions and at scale.%
\fi

\subsection{Related Work}

\paragraph{Evolutionary game theory.}
Evolutionary game theory was originally developed as a framework for modeling natural selection processes in biology \citep{moran1958random,smith1973logic, smith1982evolution, hofbauer1998evolutionary, sigmund1999evolutionary},
and has since been applied to the analysis of large populations of myopic interacting agents in economics \citep{sandholm2010population, fudenberg1998theory}, 
and to the study of online learning algorithms \citep{freund1999adaptive, cesa2006prediction}. 
From a technical standpoint, the games we propose and analyze
are unique in this space as they are
defined implicitly as solutions to statistical optimization problems. Our results for oracle classifiers reflect the principle of competitive exclusion \citep{hardin1960competitive}, while our analysis of non-optimal classifiers relates to long-studied questions of coexistence \citep{chesson2000mechanisms}.

\paragraph{Performativity.}
Our work relates to the emerging field of 
\emph{performative prediction} \citep{perdomo2020performative}, 
which studies learning in settings where model deployments 
affect the underlying data distribution,
with emphasis on (re)training dynamics and equilibrium.
A central effort in this area is to identify general global properties
(e.g., appropriate notions of smoothness) that guarantee convergence.
Our work complements this effort by proposing \emph{structure},
in the form of user self-selection \cite{zhang2021classification,horowitz2024classification},
which provides an efficient low-dimensional representation of the distribution map.
This allows us to work in the
notoriously challenging \emph{stateful} performative setting \citep{brown2022performative},
and to characterize stronger notions of stability.
\squeeze 

\paragraph{Fairness.}
The notion of equilibrium in evolutionary prediction games relates to
the fairness criterion of \emph{overall accuracy equality} %
\citep{verma2018fairness}.
The main distinction is that in prediction games,
the question is not only \emph{whether} groups are treated fairly,
but also \emph{which} groups are even considered,
since fairness can be measured only for groups that are observed.
Complementing other voiced concerns on long-term fairness outcomes
\citep{hashimoto2018fairness, liu2018delayed,d2020fairness,raab2021unintended},
our work adds a novel  
counterfactual perspective:
if we observe fair outcomes at present,
could this be because some groups were historically driven out of the game?

See \Cref{appendix:related} for an extended discussion and additional related literature.

\begin{figure*}
    \centering
    \includegraphics[width=\textwidth]{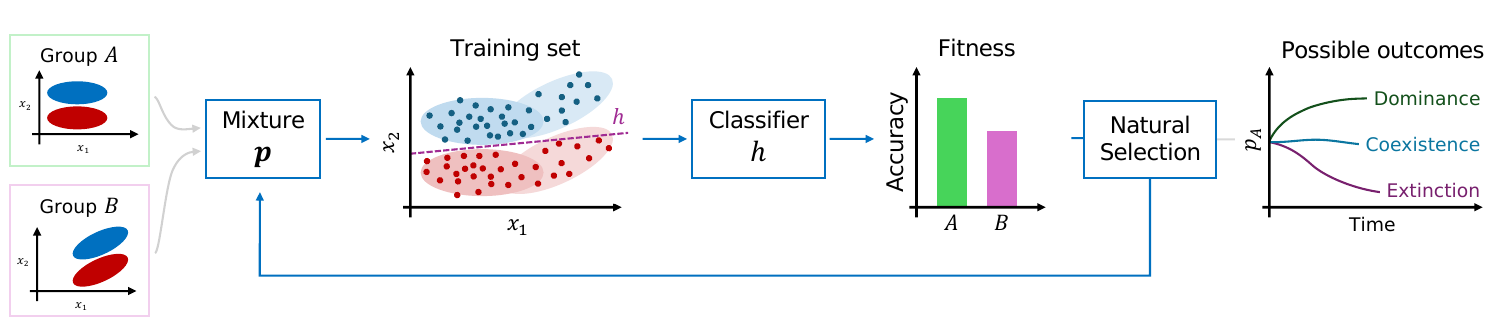}
    \caption{
    Natural selection in a two-group setting.
    Population is a mixture $\vp$ of groups;
    Classifier $h$ is learned using data from the mixture distribution $D_\vp$;
    Evolutionary fitness is associated with prediction accuracy;
    Differences in fitness drive change in mixture coefficients;
    Possible long-term tendencies are dominance (only $A$ survives), extinction (only $B$ survives), or coexistence (both survive).
    }
\label{fig:interaction_model_schematic_diagram}
\end{figure*}

\section{Setting}
\label{sec:setting}

The core of our setting is based on a standard supervised learning setup
in which examples describe user data.
Denote features by $x\in\Features$ and labels $y\in\Labels$.
For a given distribution $D$
over pairs $(x,y)$,
and given a training set $S = \Set{(x_i,y_i)}_{i=1}^n\sim D^n$,
the goal in learning is to use $S$ to find a predictor $h : \Features \to \Labels$
from a class $\Hypotheses$
which minimizes the expected error
under some loss function $\ell:\Labels\times\Labels\to\Reals$.
For concreteness, we focus on classification tasks with the 0-1 loss,
$\ell(\yhat,y)=\Indicator{\yhat \neq y}$,
whose minimization
is equivalent to 
maximizing expected accuracy:
\squeeze
\begin{equation}
\label{eq:expected_accuracy}
\acc_D(h)=
\prob{(x,y)\sim D}{h(x)=y}
\end{equation}
Denote the learning algorithm by $\Learner(S)$,
and by $h\sim\Learner(D)$ the process mapping distributions to learned classifiers,
with randomness due to sampling.
Since solving Eq.~(\ref{eq:expected_accuracy}) is both computationally and statistically hard, practical learning algorithms often resolve to optimizing an empirical surrogate on the sampled data.
One question we will ask is how such compromises affect long-term outcomes.
\squeeze

\paragraph{Population structure.}
As
data points $(x,y)$ are generated by users,
the data distribution $D$
represents a population.
We assume the population is a mixture of $K$ groups.
Each group $k\in[K]$ is associated with a group-specific distribution $D_k$ over $(x,y)$ pairs, which remains fixed, and with a relative size $p_k$, which evolves over time.
The overall data distribution is a mixture denoted by
$D=D_\vp=\sum_{k} p_k D_k$,
where $p_k$ is the current relative proportion of group $k$ (such that $\sum_k p_k = 1$ and $p_k \ge 0 \,\, \forall k$).
Given some $h$,
the marginal expected accuracy on group $k$ is 
$\acc_k(h)=\acc_{D_k}(h)$.
The \emph{population state} vector $\vp=(p_1,\dots,p_K)$
will be our main object of interest.
As $\vp$ encodes group proportions,
the space of population states is the $(K-1)$-dimensional simplex $\DistOverGroups$,
and since $\vp$ determines $D_\vp$, for brevity we denote $h \sim \Learner(\vp)=\Learner(D_\vp)$
and $\acc_\vp(h) = \acc_{D_\vp}(h)$.

\paragraph{Dynamics.}
When a classifier is deployed, users make decisions according to the quality of the predictions they receive. Collectively, individual responses reshape the population composition 
$\vp$, resulting in a \emph{subpopulation shift} \citep{yang2023change}.
The updated population then serves as the training distribution for the next classifier---which forms a feedback loop between $\vp$ and $h$.
For training, we consider a simple procedure of repeated training
on the current distribution.
In terms of user behavior,
our framework accommodates a wide family of dynamics,
including reproduction, network effects, %
imitation, social influence, rational decision-making, and competition,
as alluded to in \Cref{sec:intro}.
In Appendix~\ref{appx:microfoundations} we define these formally and
provide additional examples.
In \Cref{sec:prediction_games} we make our assumptions explicit.
\squeeze

\paragraph{Outcomes.}
Denote the initial population state by $\vp^0$, and the initial classifier by $h^0\sim\Learner(\vp^0)$. 
We consider dynamics which evolve towards fixed points, and denote the corresponding point by $\vp^*$.
Denote the classifier by $h^*\sim \Learner(\vp^*)$. We characterize fixed points in terms of three key properties:
\begin{itemizecompact}
    
\item 
\textbf{Population composition:}
For the population, dynamics may act as a force
diminishing
certain groups (i.e., driving towards a state satisfying $p^*_k = 0$ for some $k\in[K]$). In such cases, we say that the population is \emph{dominated} by the remaining groups. When the population composition at rest has at least two groups (i.e., $p^*_k>0$ for multiple values of $k$), we say that these groups \emph{coexist}.

\item
\textbf{Accuracy:}
For the classifier,  dynamics may result in overall performance 
improving over time,
(i.e.,~$\acc(h^*)>\acc(h^0)$),
deteriorating, or remaining unchanged.

\item
\textbf{Stability:}
For the joint system, some fixed points $\vp^*$ are \emph{stable} and attract all neighboring states, while other fixed points are unstable and may cause dynamics to diverge under small perturbations.

\end{itemizecompact}

An illustration of the setting is provided in \Cref{fig:interaction_model_schematic_diagram}.

\section{Evolutionary Prediction Games}
\label{sec:prediction_games}

\begin{figure*}
    \centering
    \includegraphics[width=\textwidth]{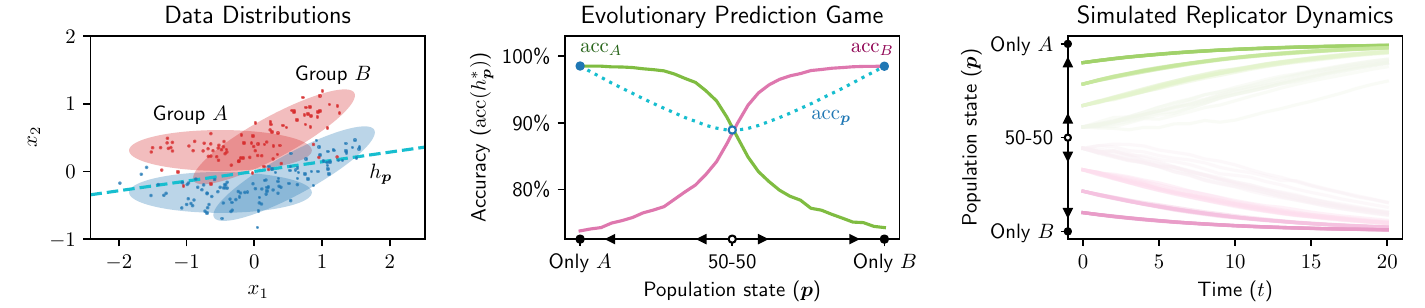}
    \vspace{-1em}
    \caption{
    Evolutionary dynamics for two groups, induced by an oracle linear classifier (\Cref{sec:retraining_optimal_classifiers}).
    \textbf{(Left)} Data distributions in feature space. 
    Dashed line demonstrates the optimal linear classifier $h_\vp$ for the uniform mixture $\vp=(0.5,0.5)$. 
    \textbf{(Center)} Evolutionary prediction game $F_k(\vp)=\acc_k(h_\vp)$. 
    The game has two stable equilibria with single-group dominance, and one unstable coexistence equilibrium. %
    Overall population accuracy $\acc_\vp(h_\vp)$ is convex, and maximized at the boundries.
    \textbf{(Right)} Replicator dynamics induced by the game, for various initial states $\vp^0$. Populations evolve towards fixed points with single-group dominance.
    }
\label{fig:optimal_classifier_retraining}
\end{figure*}

To analyze the feedback loop, we leverage \emph{evolutionary game theory}, which is a general framework for the analysis of natural selection processes~\citep{smith1973logic}.
The framework models evolutionary interactions between individuals from $K$ groups in a large mixed population:
Given a population state $\vp\in\DistOverGroups$, each group $k \in [K]$ 
is associated with a scalar \emph{fitness function}, $F_k(\vp)$,
which quantifies the average evolutionary fitness of individuals in that group.
Together, the fitness functions $F=\left(F_1,\ldots,F_K\right)$ define a \emph{population game} $F:\DistOverGroups\to\Reals^K$, which governs the dynamics of $\vp$. 

Formally, population games are symmetric normal-form games for a continuum of agents.
Evolutionary game theory uses population games to map
 myopic, often local agent decisions
to group-level dynamics governing the evolution of group proportions $\vp$ over time.
Modeling the game at the group level serves as useful abstraction for studying the general tendencies and possible long-term outcomes of local interactions in aggregate.
Note that the notion of ``group'' is flexible and can support memberships that are either inherent (e.g., by demographics) or chosen (e.g., service provider).

\paragraph{Accuracy as evolutionary fitness.}
We formalize the interplay between learning and population dynamics through a novel type of population game---%
an \emph{evolutionary prediction game}.
The game associates
the evolutionary fitness of each group with its expected marginal accuracy $\acc_k$ under a classifier $h$ trained on data sampled from the mixture $D_\vp$:
\begin{definition}[Evolutionary Prediction Game] \label{def:prediction_game}
Let 
$\Learner$ be a learning algorithm%
, and let 
$D_1,\dots,D_K\in\DistOver{\FeaturesLabels}$ be group distributions.
Denote by $\acc_k(h)$ the marginal accuracy of group $k\in[K]$ under classifier $h \sim \Learner(\vp)$.
The \emph{evolutionary fitness} of group $k$ under population state $\vp$ is:
\begin{equation}
    \label{eq:population_game_fitness}
    F_k(\vp)
    = 
    \expect
    {
    h\sim \Learner(\vp)
    }
    {\acc_{k}(h)}
\end{equation}
Together,
the tuple $F(\vp) = (F_1(\vp),\dots,F_K(\vp))$ defines an \emph{evolutionary prediction game}.
\end{definition}

In the notation $F_k(\vp)$, we assume for brevity that the learning algorithm $\Learner$ and group distributions $D_k$ are clear from context, and we state them explicitly otherwise.

Evolutionary prediction games capture an implicit interaction between groups:
while each group's fitness $F_k(\vp)$ is given by
the accuracy of its own members,
the classifier $h_\vp$ is trained using data from the entire population.
Thus, the evolution of different groups 
becomes coupled through the learning process.
For example,  \Cref{fig:optimal_classifier_retraining} (Left) illustrates a binary classification setting over two groups $\Set{A,B}$. \Cref{fig:optimal_classifier_retraining} (Center) shows the corresponding evolutionary prediction game $F(\vp)$ induced by an oracle linear classifier (defined in \Cref{sec:retraining_optimal_classifiers}). At $\vp=(0.7,0.3)$, we have $F_A(\vp) \approx 0.97$ and $F_B(\vp)\approx 0.78$, and thus the fitness of group $A$ is higher when the classifier is trained on data sampled from $D_{\vp}$.
\squeeze

\paragraph{Equilibrium.} %
Nash equilibrium in general population games is defined as follows:
\begin{definition}[Nash equilibrium of a population game; {e.g. \citep{sandholm2010population}}]
\label{def:nash_equilibrium}
Let $F(\vp)$ be a population game, and denote $\support(\vp)=\Set{k\mid p_k>0}$. A state $\vp^*\in\DistOverGroups$ is a Nash equilibrium of $F(\vp)$ if it satisfies:
\begin{equation}
\label{eq:nash_equilibrium}
\support(\vp^*)\subseteq\argmax\nolimits_{k\in[K]} F_k(\vp^*)  
\end{equation}
\end{definition}
This also applies to evolutionary prediction games;
e.g., the game in \Cref{fig:optimal_classifier_retraining} (Center) has three Nash equilibria at $\Set{(1,0),(0.5,0.5),(0,1)}$, and
all population games have at least one equilibrium 
\citep{sandholm2010population}.

\paragraph{Induced population dynamics.}
Formally, we encode dynamics as $\dot\vp=V_F(\vp)$, where $\dot\vp$ is the time derivative of $\vp$, and $V_F:\DistOverGroups\to T \DistOverGroups$ is a tangent vector field induced by the game $F(\vp)$.
Our results apply under mild structural assumptions on $V_F$, namely:
(i) $V_F$ is continuous; (ii) the direction of flow is aligned with fitness, formally $V_F(\vp)\cdot F(\vp)>0$ whenever $V_F\neq 0$, a property known as \emph{positive correlation}; (iii) Fixed points of the dynamical system coincide with equilibria of $F(\vp)$, formally through either \emph{Nash stationarity} or \emph{imitative dynamics}.
All are satisfied by the user-level dynamics highlighted in \Cref{sec:intro} and \Cref{appx:microfoundations},
where they emerge as group-level properties
in the large-population limit.
Moreover, many of them converge towards the canonical
\emph{replicator equation},
$\dot p_k=p_k\left(F_k(\vp)-\bar F(\vp)\right)$,
where $\bar F(\vp)=\sum_{k'}p_{k'}F_{k'}$ is the average fitness across the population \citep{taylor1978evolutionary}.
We also note 
that time-scales of convergence are determined (and can be adjusted) by the scale of $V_F$.
See \Cref{subsec:appendix_dynamics} for formal definitions.

\paragraph{Fairness.} In the context of fairness, a classifier satisfies \emph{overall accuracy equality} if all groups receive equal prediction accuracy \citep{verma2018fairness}. 
We show that this criterion is satisfied in expectation by classifiers trained on equilibria mixtures of evolutionary prediction games (proof in \Cref{sec:equilibria_are_fair}):
\begin{proposition}
\label{prop:equilibrium_fairness}
Let $F(\vp)$ be an evolutionary prediction game induced by a learning algorithm $\Learner$, and let $\vp^*$ be a Nash equilibrium. 
Then $h\sim \Learner(D_{\vp^*})$ satisfies overall accuracy equality in expectation.
\end{proposition}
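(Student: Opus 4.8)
The plan is to derive the statement directly from the two definitions in play, with essentially no computation. First I would pin down what the conclusion asserts: since, as stressed in \Cref{sec:intro}, prediction accuracy can only be measured for groups that are actually present, ``overall accuracy equality in expectation'' should be read as the claim that for every pair of groups $j,k\in\support(\vp^*)$ one has $\expect{h\sim\Learner(\vp^*)}{\acc_j(h)}=\expect{h\sim\Learner(\vp^*)}{\acc_k(h)}$, where the expectation is taken over the sampling randomness internal to $\Learner$.

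Next I would rewrite these expectations using \Cref{def:prediction_game}, which says precisely that $F_k(\vp^*)=\expect{h\sim\Learner(\vp^*)}{\acc_k(h)}$. The target therefore reduces to showing $F_j(\vp^*)=F_k(\vp^*)$ for all $j,k\in\support(\vp^*)$.

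Finally I would invoke the Nash condition of \Cref{def:nash_equilibrium}: $\support(\vp^*)\subseteq\argmax_{k\in[K]}F_k(\vp^*)$. The argmax over the finite index set $[K]$ is nonempty, and by definition every index it contains attains the common value $\max_{i\in[K]}F_i(\vp^*)$; hence every $k\in\support(\vp^*)$ satisfies $F_k(\vp^*)=\max_{i\in[K]}F_i(\vp^*)$, so in particular all these fitness values agree. Chaining this with the previous step yields the desired equality of expected marginal accuracies among the observed groups.

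There is essentially no obstacle here --- the argument is purely definitional --- so the only points I would be careful to state explicitly are (i) that the equality holds over $\support(\vp^*)$, i.e.\ among groups that are actually present, and (ii) that it need not extend to groups with $p_k^*=0$, whose expected accuracy $F_k(\vp^*)$ may lie strictly below the maximum. I would close by flagging point (ii) as exactly the counterfactual fairness concern raised in the introduction: observing accuracy equality at an equilibrium is fully consistent with worse-served groups having been driven out of the population along the way.
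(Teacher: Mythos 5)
Your argument is correct and is essentially identical to the paper's own proof: both interpret overall accuracy equality in expectation over $\support(\vp^*)$, identify $F_k(\vp^*)$ with $\expect{h\sim\Learner(\vp^*)}{\acc_k(h)}$ via \Cref{def:prediction_game}, and conclude from the Nash condition $\support(\vp^*)\subseteq\argmax_k F_k(\vp^*)$ that all supported groups share the same fitness value. Your closing remark about groups with $p_k^*=0$ is a nice observation that the paper makes elsewhere but is not needed for the proof itself.
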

As outcomes of the dynamics correspond to equilibria of a game, this lends to a natural form of emergent fairness. 
However, note that this criterion only applies to groups that appear in the data: 
in \Cref{subsec:fairness_empirical}, we 
further
show 
that a system may appear as fair 
due to other groups being driven out.

\paragraph{Heterogeneous fitness.} 
Finally, in some settings, incentives or outside alternatives may vary across groups. This may be captured using the notion of \emph{retention functions} \citep{hashimoto2018fairness},
which translate marginal prediction accuracy to the tendency to remain engaged. In such cases, the
transformed fitness function
becomes $F_k(\vp)=\Retention_k\left(\expect{h_\vp}{\acc_k(h)}\right)$, where $\Retention_k:[0,1]\to\Reals$ is a group-dependent, continuous strictly-increasing function.
A simple example is $\Retention_k(x)= x - b_k$ (for $b_k>0$)
which allows groups to differ in the minimum accuracy their users require to keep using the system.
All of our results apply to any set of strictly monotone $\Retention_k$ for $K=2$, and to positive affine $\Retention_k$ with per-group offsets $b_k$ for $K>2$.
For clarity we state our main results for $\Retention_k(x)=x$,
but also discuss them for broader $\Retention$.

\section{Competitive Exclusion Under Oracle Classifiers}

\label{sec:retraining_optimal_classifiers}

To understand how populations and classifiers evolve jointly, we start by characterizing natural selection in an `ideal' setting, in which a classifier is trained repeatedly using unlimited resources:

\begin{definition}[Oracle classifier]
Let $\Hypotheses$ be a hypothesis class, and let $D$ be a data distribution. An \emph{oracle classifier} with respect to $\Hypotheses$ is a 
minimizer of the 0-1 loss in expectation over $D$:
\squeeze
\begin{equation}
\label{eq:optimal_classifier}
h^\opt \in \argmin\nolimits_{h\in\Hypotheses} 
\expect{(x,y)\sim D} {\Indicator{h(x)\neq y}}
\end{equation}
\end{definition}
Oracle classifiers represent ERM classifiers in the population limit. This regime abstracts away the complexities due to estimation and approximation errors of practical learning algorithms (i.e., algorithms that learn from finite data in reasonable time).
Nonetheless, 
analysis of the dynamics remains challenging due to the $\mathrm{argmin}$ operator.
We focus on settings in which $h^\opt$ exists, and denote by $\Learner^\opt(D)$ the (theoretical) learning algorithm which returns the oracle classifier with respect to $D$.%
\footnote{
The Bayes-optimal classifier $h^{\mathrm{Bayes}}(x)=\argmax_y p(y\,|\,x)$ is a special case of this definition
for
$\Hypotheses=\Labels^\Features$.
\squeeze} 
We assume that fitness functions are continuous,
and tie-breaking is consistent.
To simplify presentation, we first assume that each group has a distinct oracle classifier, and then address the general case.
Our central result characterizes natural selection induced by oracle classifier retraining:

\begin{theorem}
\label{thm:optimal_learner}
Let $\Hypotheses$ be a hypothesis class, and denote by $\Learner^\opt$ the oracle learning algorithm with respect to $\Hypotheses$.
Assume that
at each population state $\vp$
the oracle classifier $h_\vp\sim\Learner^\opt(D_\vp)$ is deployed. 
Then it holds that:
\begin{enumeratecompact}
    \item \textbf{Accuracy:} 
    Overall accuracy increases over time,
    $
    \tfrac{\mathrm d}{\mathrm dt}
    \acc_{\vp}(h_{\vp}) \ge 0
    $.
    \item \textbf{Stability:} A stable equilibrium always exists,
    and there can be multiple such equilibria.
    \item \textbf{Competitive exclusion:} 
    For all stable equilibria,
    $\Size{\support(\vp^*)}=1$.%
    \item \textbf{Coexistence:}
    Equilibria with $\Size{\support(\vp^*)}\ge2$ may exist,
    but are unstable.

\end{enumeratecompact}
\end{theorem}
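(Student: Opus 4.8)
The plan is to prove all four parts at once by showing that, under the oracle learner, the evolutionary prediction game $F$ is a \emph{potential game} whose potential is precisely the overall population accuracy
\[
\Phi(\vp) \;:=\; \acc_\vp(h_\vp) \;=\; \max_{h\in\Hypotheses}\;\sum_{k}p_k\,\acc_k(h),
\]
and that this potential is a \emph{convex} function on the simplex. The equality above holds because $h_\vp$ minimizes the $0$--$1$ loss under $D_\vp=\sum_k p_k D_k$, i.e.\ maximizes $\sum_k p_k\,\acc_k(h)$; writing $\Phi$ as a supremum of functions that are \emph{affine in $\vp$} makes convexity immediate. By an envelope (Danskin-type) argument --- using consistent tie-breaking so that $F_k(\vp)=\acc_k(h_\vp)$, there being no sampling randomness in the oracle regime --- at every point where $\Phi$ is differentiable one has $\nabla\Phi(\vp)=(\acc_1(h_\vp),\dots,\acc_K(h_\vp))=F(\vp)$, and the assumed continuity of the fitness functions upgrades this to $\Phi\in C^1$ with $\nabla\Phi=F$. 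Part~1 (\textbf{Accuracy}) then follows in one line: along any trajectory, $\tdt\,\Phi(\vp)=\nabla\Phi(\vp)\cdot\dot\vp=F(\vp)\cdot V_F(\vp)\ge 0$ by positive correlation, so $\acc_\vp(h_\vp)$ is non-decreasing in time (and $\Phi$ is a Lyapunov function for the dynamics).

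Parts~2--4 then follow from the interplay between the standard potential-game correspondence (rest points $\leftrightarrow$ Nash equilibria; local maxima of $\Phi$ $\leftrightarrow$ (asymptotically) stable rest points, for positive-correlation/imitative dynamics) and the extra rigidity coming from convexity of $\Phi$. For \textbf{Stability} (Part~2), $\Phi$ is continuous and convex on the compact simplex, hence attains its maximum at an extreme point $\SingletonDistribution_{k^*}$; evaluating the KKT/Nash condition at a global maximizer (directional derivatives along $\SingletonDistribution_j-\SingletonDistribution_{k^*}$ are $F_j-F_{k^*}\le 0$) shows $\SingletonDistribution_{k^*}$ is a Nash equilibrium, and being a global maximizer of the Lyapunov function $\Phi$ makes it (Lyapunov) stable, and asymptotically stable when the maximum is strict. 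Multiplicity of stable equilibria is witnessed by an example --- e.g.\ two symmetric groups, or the setting of \Cref{fig:optimal_classifier_retraining}, where both single-group states are stable.

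For \textbf{Competitive exclusion} and the instability half of \textbf{Coexistence} (Parts~3--4), suppose $\vp^*$ is any Nash equilibrium with $T:=\support(\vp^*)$ and $\Size{T}\ge 2$. Then $\vp^*$ lies in the relative interior of the face $\Delta_T$, and the Nash condition ``$F_k(\vp^*)$ constant over $k\in T$'' says exactly that $\vp^*$ is an interior critical point of the \emph{convex} function $\Phi|_{\Delta_T}$ --- hence a \emph{global minimizer} of $\Phi$ over $\Delta_T$. Consequently every neighbourhood of $\vp^*$ inside $\Delta_T$ contains points with $\Phi$ no smaller than $\Phi(\vp^*)$ (strictly larger unless $\Phi$ is affine on $\Delta_T$, a degenerate case excluded by consistent tie-breaking); since $\Phi$ is non-decreasing along trajectories, a trajectory started at such a point can never converge to $\vp^*$, so $\vp^*$ is not attracting. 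Therefore every stable equilibrium has $\Size{\support(\vp^*)}=1$ (competitive exclusion), while coexistence equilibria --- which \emph{can} occur, e.g.\ the mixed equilibrium of \Cref{fig:optimal_classifier_retraining} --- are necessarily unstable. Note this argument never uses distinctness of the groups' oracle classifiers, so it covers the general case as well.

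The step I expect to be the main obstacle is making the potential-game identification fully rigorous in the presence of the $\argmin$ operator: $h_\vp$ can jump discontinuously in $\vp$, so a priori $\Phi$ is only convex and locally Lipschitz rather than $C^1$, and one must argue (via the assumed continuity of $F$, or otherwise via subgradients / Clarke generalized gradients) that $\nabla\Phi=F$ holds strongly enough for the Lyapunov identity $\tdt\Phi=F\cdot V_F$ to be valid along trajectories, and that local maxima of this possibly non-smooth $\Phi$ still correspond to stable rest points. A secondary, more routine obstacle is citing and checking the precise potential-games stability correspondence against the full admissible class of dynamics $V_F$ (continuity, positive correlation, Nash stationarity / imitative form). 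Finally, for the heterogeneous-fitness extension I would note that affine retention $\Retention_k(x)=x-b_k$ merely replaces $\Phi$ by the still-convex $\Phi(\vp)-\vb\cdot\vp$, and that for $K=2$ the one-dimensional sign analysis of $F_1-F_2$ on $[0,1]$ yields the same conclusions for arbitrary strictly-monotone $\Retention_k$.
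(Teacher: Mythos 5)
Your overall route is the same as the paper's: identify $\Phi(\vp)=\acc_\vp(h_\vp)=\max_{h\in\Hypotheses}\sum_k p_k\acc_k(h)$ as a convex potential (max of affine functions), establish $\nabla\Phi=F$ by an envelope/subgradient argument (the paper does this via subgradients of the concave optimal loss and an assumed continuous differentiability, which is the same move you flag as the "main obstacle"), and then read off monotonicity of accuracy, stability of vertex states, and instability of mixed equilibria from the potential-game/Lyapunov correspondence plus convexity. Parts 1 and 2 of your argument are fine and match the paper.

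The genuine gap is in your instability step for Parts 3--4, specifically in how you dispose of the degenerate case. First, the flat case is \emph{not} "excluded by consistent tie-breaking": tie-breaking only fixes which optimal $h_\vp$ is returned and has no bearing on whether $\Phi$ is affine (constant) on a face. What actually rules it out on a face $\Delta_T$ is the standing assumption, stated immediately before the theorem, that the groups have \emph{distinct} oracle classifiers (no oracle-equivalent subset in the sense of \Cref{def:identical_optimality}). Consequently your closing claim that the argument "never uses distinctness of the groups' oracle classifiers, so it covers the general case as well" is wrong: without distinctness the conclusion itself fails --- e.g.\ if $D_k=D_{k'}$, every mixture of $k,k'$ is an equilibrium and the paper's \Cref{thm:optimal_learner_equivalent_groups} (and \Cref{subsec:appendix_equivalent_groups}) shows stable equilibria supported on such oracle-equivalent sets, so competitive exclusion in the form $\Size{\support(\vp^*)}=1$ genuinely needs that hypothesis. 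Second, your dichotomy "strictly larger in every neighbourhood unless $\Phi$ is affine on $\Delta_T$" is not exhaustive for convex functions: $\Phi$ can be constant on a neighbourhood of $\vp^*$ inside the face without being affine on the whole face (this can happen even with distinct, non-equivalent oracle classifiers --- e.g.\ two groups with accuracy vectors $(0.9,0.6)$ and $(0.6,0.9)$ for their own optima plus a third hypothesis with accuracies $(0.8,0.8)$ gives a flat plateau of interior Nash equilibria). In that regime your Lyapunov non-convergence argument does not apply as written, since nearby points have $\Phi$ equal to $\Phi(\vp^*)$; the conclusion can be rescued by noting that those nearby points are themselves rest points (their fitnesses are equalized), hence $\vp^*$ attracts nothing, but that extra observation---or an appeal to the maximizer-stability correspondence the paper cites from Sandholm---needs to be made explicit rather than waved away via tie-breaking.
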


\paragraph{Proof sketch.}
The proof leverages structural properties that emerge despite the complexity of the learning problem.
Our main technical tool
is the framework of
\emph{potential games}, which are population games that can be expressed as a gradient of a scalar potential function defined over the simplex.
We start by showing that the expected accuracy of any fixed classifier is linear in $\vp$, then the core of the proof is a convexity argument: the optimality of $\Learner^\opt$ implies that $\acc_\vp(h_\vp)$ is convex as a pointwise maximum of linear functions, with gradients given by marginal accuracies. From this we identify $f(\vp)=\acc_\vp(h_\vp)$ as a potential function, and leverage the known correspondence between equilibria of population games and local extrema of potential functions.
Finally, distinct oracle classifiers imply strict convexity of $f(\vp)$, and the stability of single-population states follows from the fact that convex functions over the simplex are locally-maximized at the vertices, and the accuracy condition follows from the fact that $f(\vp)$ is a Lyapunov function.
Full proof appears in \Cref{sec:retraining_proofs}.

\paragraph{Extensions.} To simplify presentation, the statement of \Cref{thm:optimal_learner} assumes that the oracle classifiers for each group are distinct. When this assumption doesn't hold (e.g., when $D_k=D_{k'}$ for some $k\neq k'$), a generalized version of the theorem applies: rather than single-group stable equilibria, dynamics converge towards convex combinations of groups that share an identical oracle classifier with identical marginal accuracy (See \Cref{subsec:appendix_equivalent_groups}). 
For heterogeneous fitness where retention functions $\Retention_k$ may differ across groups, we show that the retention functions we support induce potential games with transformed potential functions. Survival is determined according to the maximizers of the transformed potential function, and the quantity that increases over time becomes the average fitness $\sum_{k} p_k \Retention_k\left(\acc_k(h_\vp)\right)$ (See \Cref{subsec:appendix_retention_functions}).
Finally, we also note that a similar characterization also holds for classifiers trained once at the outset (See \Cref{sec:training_once_proofs}).

\paragraph{Interpretation.}
\Cref{thm:optimal_learner} suggests that for repeatedly trained oracle classifiers, natural selection will amplify the dominance of groups having the highest fitness at the outset. From an evolutionary perspective, this reflects the \emph{competitive exclusion principle} \citep{hardin1960competitive},
which states that natural selection tends towards exclusive survival of the fittest when multiple species compete over the same resources. 
Stability implies that once a group dominates,
other groups will remain excluded. 
While this is the general trend,
coexistence is nonetheless \emph{enabled} by retraining,
in the form of possible equilibria in which multiple groups survive.
That is, 
there may exist population states $\vp^*$ such that $p^*_k>0$ for several groups $k$, and for which this remains to hold when the classifier is retrained.
The crux, unfortunately, is that such points %
are \emph{evolutionarily unstable}, in the sense that small perturbations around $\vp^*$ will push dynamics away.
This implies that optimal retraining---unconstrained by limited data and  compute---induces a strong form of competition between groups.
But the existence of mixed equilibria suggests that other algorithms might
be able to encourage symbiosis,
which we explore next.
\squeeze

\section{Avenues for Coexistence}
\label{sec:coexistence}

\begin{figure*}
    \centering
    \includegraphics[width=\textwidth]{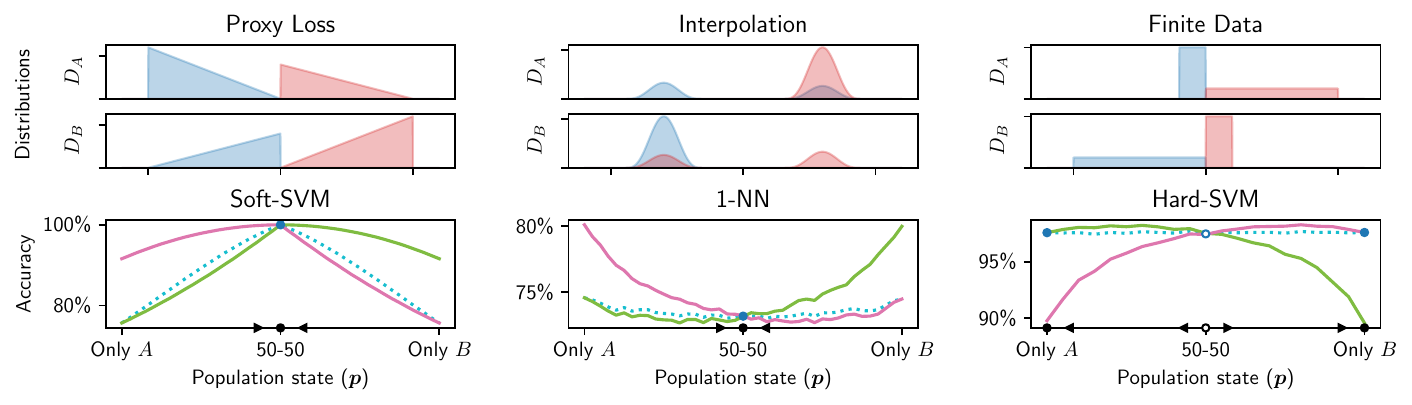}
    \vspace{-1em}
    \caption{
    Graphical illustration of theoretical coexistence results. Top row presents data distributions, bottom row shows the corresponding prediction games.
    \textbf{(Left)} Stable and mutualistic coexistence induced by the use of a proxy loss 
    (\Cref{thm:soft_svm_coexistence}).
    \textbf{(Center)} Stable coexistence induced by 
    interpolation 
    (\Cref{subsec:knn_coexistence_proof}).
    \textbf{(Right)} Mutualistic coexistence induced by 
    a finite training set
    (\Cref{subsec:finite_data_coexistence_proof}).
    }
\label{fig:coexistence_mechanisms}
\end{figure*}

Intuitively, oracle classifiers drive dynamics away from mixture equilibria because
the learning objective is fully aligned with, and so reinforces, 
self-selection.
This motivates the natural question:
Can repeated training drive dynamics also towards coexistence, and when?
Here we illustrate 
two ways for symbiosis to arise:
as a result of inherent limitations of learning in practice
(i.e., limited data or compute),
and by applying an ad-hoc, dynamics-aware learning algorithm
designed to stabilize outcomes.
\squeeze

\paragraph{A context-dependent view on coexistence.}
It is important to emphasize that coexistence in our setting is neither innately `good' or `bad',
but rather, its desirability depends on context.
Hence, the question of whether coexistence \emph{should} be encouraged depends on the task at hand,
the nature of user choices,
and the goals of the system designer.
For instance, recommendation platforms may aim to preserve both content and audience diversity, making coexistence desirable. 
Conversely,
in consumer markets having multiple competing standards
(e.g., video encoding format, compression schemes, payment platforms),
there are benefits to converging on the single best alternative.
This context-dependent stance aligns well with the notion of evolution,
which in itself is neutral.

\subsection{Learning in Practice}
\label{subsec:learning_in_practice}
While the goal in learning is to minimize the expected 0-1 loss,
in practice learning often resorts to optimizing proxy objectives over sampled data
via empirical risk minimization.
Here we show that, perhaps surprisingly,
basic aspects of this approach---%
namely the use of surrogate losses,
access to finite data,
and interpolation methods---can create conditions that facilitate \emph{mutualism}, in which all groups benefit from coexistence.
For each aspect,  we demonstrate
through a carefully crafted example
how it can act as a mechanism that enables stable coexistence.
Here we focus our discussion on surrogate loss as a representative construction;
for finite data and memorization,
we illustrate the underlying distribution and induced game in 
\Cref{fig:coexistence_mechanisms},
and defer the formal analysis to \Cref{subsec:soft_svm_coexistence_proof}.

\paragraph{Coexistence by surrogate loss.} 
A common approach to empirical risk minimization is to replace the 0-1 loss
with a convex surrogate loss, such as the \emph{hinge loss} used in the classic Soft-SVM algorithm \citep{cortes1995support}.
This enables optimization, but introduces bias:
whereas the 0-1 loss applies uniform penalties,
surrogates penalize misclassified points relative to their distance
from the decision boundary of $h$.
Our next result uses this artifact to construct a game
having two groups that are complementary in their incurred bias,
and hence balance each other to induce mutualistic coexistence.
\squeeze

\begin{theorem}
\label{thm:soft_svm_coexistence}
There exists an
evolutionary prediction game in which
learning with
the hinge loss and $\ell_2$ regularization
induces a mixed equilibrium that is both stable and fitness-maximizing.
\end{theorem}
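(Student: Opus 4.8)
The plan is to exhibit a single explicit two-group instance ($K=2$) in which the Soft-SVM solution can be tracked in closed form along the one-parameter family of mixtures $\vp=(p,1-p)$, and then to read the three claimed properties off the resulting fitness curves. The driving mechanism is the margin-sensitivity of the hinge loss: whereas the $0/1$ loss charges a unit penalty to every misclassified point, the hinge loss charges a point in proportion to its signed distance from the decision boundary, so a cluster of probability mass lying far on the ``wrong'' side exerts a disproportionate pull on the $\ell_2$-regularized optimum. I would choose $D_A$ and $D_B$ so that each, trained in isolation, induces such a pull but in \emph{opposite} directions; a mixture then balances the two pulls and places the SVM boundary where it is near-optimal for the $0/1$ accuracy of \emph{both} groups simultaneously. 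Concretely, I would take each $D_k$ to be a finite mixture of narrow, fixed-label slabs (short uniform intervals) in a low-dimensional feature space --- one dimension suffices, and \Cref{fig:coexistence_mechanisms} shows a two-dimensional rendering --- with the slabs wide enough that $F_k(\vp)=\acc_k(h_\vp)$ varies continuously (indeed monotonically) as the learned boundary sweeps across them, but the coarse layout simple enough that the SVM program has only a few breakpoints in $p$.

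First I would solve for the classifier as a function of the mixture. For a mixture of slabs, the regularized hinge objective $\tfrac{\reg}{2}\Norm{w}^2+\expect{(x,y)\sim D_\vp}{\max\bigl(0,\,1-y(w^{\top}x+b)\bigr)}$ is convex and piecewise quadratic in $(w,b)$; on each range of $p$ determined by which slabs are cut by the margin band, its minimizer $\bigl(w(p),b(p)\bigr)$ solves a small linear (KKT) system and is therefore an explicit, smooth function of $p$, and patching the pieces yields a continuous map $p\mapsto h_\vp$. From $\bigl(w(p),b(p)\bigr)$ I obtain $F_A(p)$ and $F_B(p)$ as the $D_A$- and $D_B$-masses landing on the correct side, and I would tune the slab positions so that $F_A$ is strictly decreasing and $F_B$ strictly increasing on the relevant range, with $F_A(1)<F_B(1)$ and $F_A(0)>F_B(0)$ --- i.e.\ negative frequency dependence, where whichever group is in the minority receives the more accurate classifier.

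Given these monotonicities, the three claims follow. \emph{(i) Equilibrium:} by the intermediate value theorem there is an interior $p^{\ast}\in(0,1)$ with $F_A(p^{\ast})=F_B(p^{\ast})=:c$; since $\support(\vp^{\ast})=\{A,B\}$ and both groups attain the common --- hence (for $K=2$) maximal --- fitness, $\vp^{\ast}$ is a Nash equilibrium in the sense of \Cref{def:nash_equilibrium}. \emph{(ii) Stability:} because $F_A-F_B$ strictly decreases through $p^{\ast}$, the induced vector field points toward $p^{\ast}$ from both sides for every admissible dynamics (continuity and positive correlation together with Nash stationarity or imitative updating); for the replicator equation $\dot p=p(1-p)\bigl(F_A(\vp)-F_B(\vp)\bigr)$ this says $\tfrac{\mathrm d}{\mathrm dp}\bigl(F_A-F_B\bigr)\big|_{p^{\ast}}<0$, so $p^{\ast}$ is asymptotically stable. \emph{(iii) Fitness-maximizing / mutualism:} using $\bar F(\vp)=\sum_k p_k F_k(\vp)=\acc_\vp(h_\vp)$, the average fitness coincides with the overall population accuracy; I would calibrate the geometry so that training on a single group alone leaves its hinge-pull uncompensated, giving $\acc_A(h_{(1,0)})=F_A(1)<c$ and $F_B(0)<c$, and more strongly so that $p F_A(p)+(1-p)F_B(p)\le c$ for all $p\in[0,1]$ (checked from the explicit piecewise formulas). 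Then $\vp^{\ast}$ globally maximizes $\bar F$ over $\DistOverGroups$ --- the retrained classifier there attains the highest achievable overall accuracy --- and in particular every group is strictly better off at coexistence than under its own dominance, which is the mutualism claim. The full construction is carried out in \Cref{subsec:soft_svm_coexistence_proof}.

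I expect the main obstacle to be the joint calibration of this example. The hinge bias must be strong enough that the single-group optima are genuinely $0/1$-suboptimal --- this is exactly what makes the fitness curves cross in the interior and what yields the mutualism inequality --- yet the slab layout must stay simple enough that the support-vector set changes at only a handful of values of $p$, so that $\bigl(w(p),b(p)\bigr)$, and with it both $F_A-F_B$ and $\bar F$, remain explicitly computable and one can certify the sign of the slope at $p^{\ast}$ and the global bound $\bar F\le c$. Moreover, unlike the oracle case of \Cref{thm:optimal_learner}, the envelope argument no longer makes $\acc_\vp(h_\vp)$ a potential for the fitness $F_k=\acc_k(h_\vp)$ (since $h_\vp$ is not $0/1$-optimal), so stability does not come for free from a maximization principle and must be verified directly alongside fitness-maximization; reconciling these two requirements within one tractable configuration is the delicate part, after which the remaining work is a finite case analysis over the breakpoints of $p$.
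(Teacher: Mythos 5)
Your proposal follows essentially the same route as the paper: a mirrored one-dimensional two-group construction, a population-limit analysis of the regularized hinge objective exploiting its class-imbalance pull on the intercept, cancellation of the opposing biases at the balanced mixture, local stability from the sign of the fitness difference under small perturbations, and mutualism because each group trained alone is $0/1$-suboptimal. The main difference is that the paper sidesteps the "delicate calibration" you anticipate by using mirrored triangular densities that are linearly separable at the origin, so at $p=0.5$ the optimizer has $b^*=0$ and both groups attain perfect accuracy, making fitness-maximization immediate without tracking the full piecewise solution or verifying a global bound on average fitness.
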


The construction is outlined in \Cref{fig:coexistence_mechanisms} (Left), and proof is provided in \Cref{subsec:soft_svm_coexistence_proof}.
The proof leverages the bias of hinge loss against minority classes:
For each group alone,
minimizing the hinge loss biases the learned classifier against the minority class, which is suboptimal.
But since each group has a different minority class,
these biases cancel out at the mixed equilibrium.
The surprising property is that this state is also evolutionarily stable:
when one group grows slightly
(i.e., $p_1=0.5+\epsilon$), it loses more from the fact that the other group shrinks (i.e., $p_2=0.5-\epsilon$) than it gains from its own growth.
In this sense, each group needs the other to maintain its existence,
giving rise to 
stable and mutualistic 
symbiosis.
\Cref{subsec:svm_regularization_experiment} expands on this idea
to show that varying the amount of regularization leads to \emph{bifurcations},
in which the game transitions between 3, 1, and 5 equilibria.%
\squeeze

\paragraph{Interpretation.}
In the example game above,
as in those we construct for finite data (\Cref{subsec:finite_data_coexistence_proof}; \Cref{fig:coexistence_mechanisms} (Right)) and interpolation (\Cref{subsec:knn_coexistence_proof}; \Cref{fig:coexistence_mechanisms} (Center)),
mutualism is enabled by how user self-selection
compensates for the algorithm's imperfections.
When the resulting bias is complementary across groups,
this can give rise to (implicit) cooperation.
One interpretation is that the existence of each group acts as regularization on others:
from the perspective of group $k$,
the objective can be written as
$\argmax_h \acc_k(h) + \lambda R(h)$ where
$R(h) = \sum_{j \neq k} \acc_k'(h)$ is a data-dependent regularizer with coefficient $\lambda=\frac{1}{p_k}\sum_{j \neq k} p_j$.
Nonetheless, our results should not be taken to imply that proxy losses or finite data are a good \emph{means} to achieve coexistence;
rather, they highlight how mutualism can organically materialize when learning affects different groups differently.
We explore this further empirically in 
\Cref{subsec:mnist_experiment}.

\subsection{Stabilizing Coexistence Equilibria}
\label{subsec:stabilization}

Thus far we have considered evolutionary prediction games in which learning pursues the conventional objective of maximizing accuracy at each timestep on the current distribution.
But if the learner is aware of population dynamics and of how the learned classifier may influence outcomes,
then it makes sense to consider learning algorithms that take this into account.
The question of course is how.
\squeeze

Here we focus on a particular aspect of this general question that
arises from our last result in Sec.~\ref{sec:retraining_optimal_classifiers},
namely:
given a system with a desirable but unstable mixed equilibrium $\vp^*$,
how do we stabilize it?
We propose a conceptually simple solution that works by
reweighing examples in the standard
accuracy objective in a way that accounts for dynamics.
The main idea is to invert the natural tendency of dynamics to push away from the unstable equilibrium, and instead pull towards it, by training `as if' the distribution was at a different state $\vp'$ than its actual current state $\vp$:

\begin{proposition}
\label{prop:stabilizing_coexistence}
Let $\Learner^\opt(\vp)$ be an oracle algorithm with equilibrium $\vp^*$.
If $\vp^*$ has full support (and thus is unstable),
then it becomes stable under $\Learner'(\vp)=\Learner^\opt(2\vp^*-\vp)$.
\end{proposition}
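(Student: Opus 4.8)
The plan is to recycle the potential-game structure from the proof of \Cref{thm:optimal_learner} and observe that reflecting the training state through $\vp^*$ turns the convex potential into a concave one. Recall that the oracle game $F(\vp)=(\acc_1(h_\vp),\dots,\acc_K(h_\vp))$ admits the potential $f(\vp)=\acc_\vp(h_\vp)=\max_{h\in\Hypotheses}\sum_k p_k\,\acc_k(h)$: as a pointwise maximum of functions linear in $\vp$ it is convex, and by an envelope argument its gradient is $\nabla f(\vp)=F(\vp)$. Under the standing assumption of distinct per-group oracle classifiers, $f$ is \emph{strictly} convex, so the full-support equilibrium $\vp^*$ is an interior critical point of $f$ on $\DistOverGroups$ and hence a strict local \emph{minimum} of $f$ --- which is exactly why it is unstable.

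First I would verify that the modified algorithm is well defined near $\vp^*$ and that $\vp^*$ remains an equilibrium. Write $g(\vp)=2\vp^*-\vp$ for the reflection of $\vp$ through $\vp^*$. Since $\vp^*$ has full support, there is a neighborhood $U\subseteq\DistOverGroups$ of $\vp^*$ on which $g(\vp)\in\DistOverGroups$, so $D_{g(\vp)}=\sum_k(2p^*_k-p_k)D_k$ is a bona fide mixture and $\Learner'(\vp)=\Learner^\opt(g(\vp))$ is well defined on $U$; since stability is a local property, it suffices to work inside $U$. The induced game is $\tilde F(\vp)=F(g(\vp))$, and because $g(\vp^*)=\vp^*$ we have $\tilde F(\vp^*)=F(\vp^*)$, so the Nash condition $\support(\vp^*)\subseteq\argmax_k\tilde F_k(\vp^*)$ transfers directly from $F$ to $\tilde F$.

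The core step is to exhibit a concave potential for $\tilde F$. Set $\tilde f(\vp)=-f(g(\vp))$. Since $g$ is affine with $Dg\equiv -I$, the chain rule gives $\nabla\tilde f(\vp)=-Dg^\top\nabla f(g(\vp))=\nabla f(g(\vp))=F(g(\vp))=\tilde F(\vp)$, so $\tilde f$ is a potential for $\tilde F$; and since $f$ is strictly convex and $g$ affine, $\tilde f$ is strictly concave. Hence $\vp^*$, being an interior critical point of a strictly concave function, is the unique critical point of $\tilde f$ on $\DistOverGroups$ and its strict maximizer. Invoking the correspondence between equilibria of a potential game and critical points of its potential (as used for \Cref{thm:optimal_learner}) together with positive correlation of the dynamics, along any trajectory of $V_{\tilde F}$ we have $\tfrac{\mathrm{d}}{\mathrm{d}t}\tilde f(\vp)=\nabla\tilde f(\vp)\cdot\dot\vp=\tilde F(\vp)\cdot V_{\tilde F}(\vp)\ge 0$, with equality only at equilibria. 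Thus $\tilde f$ is a strict Lyapunov function for $\vp^*$ on $U$, giving asymptotic stability, i.e. $\vp^*$ now attracts a neighborhood of states.

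The main obstacle is not a computation but making this last Lyapunov step rigorous across the admissible dynamics $V_{\tilde F}$ (continuity, positive correlation, and Nash stationarity or imitative structure): one must rule out other equilibria accumulating at $\vp^*$ --- handled by strict concavity of $\tilde f$ and by $\vp^*$ lying in the interior --- and ensure trajectories started near $\vp^*$ cannot leave $U$, which for replicator-type dynamics additionally uses invariance of the relative interior of $\DistOverGroups$. A secondary subtlety is the degenerate case where the per-group oracle classifiers are not all distinct, so that $f$ is only weakly convex and $\tilde f$ only weakly concave; there $\vp^*$ is Lyapunov-stable, but attraction should be phrased via the generalized form of \Cref{thm:optimal_learner}, i.e. convergence to the face spanned by groups sharing a common oracle classifier.
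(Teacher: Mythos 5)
Your proposal is correct and follows essentially the same route as the paper: it exhibits $\tilde f(\vp)=-f(2\vp^*-\vp)$ as a potential for the reflected game via the chain rule, uses convexity of $f(\vp)=\acc_\vp(h_\vp)$ together with the full-support (interior) assumption to conclude that $\vp^*$ is a global minimizer of $f$ and hence a maximizer of the new potential, and then invokes the potential-game stability correspondence (the Lyapunov argument you spell out is exactly what underlies the cited Sandholm result). The extra care about well-definedness of the reflection near $\vp^*$ and the degenerate non-distinct-oracle case goes slightly beyond the paper's proof but does not change the argument.
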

Proof in \Cref{subsec:steering_coexistence_proof}.
The idea is that $\Learner'$ trains `as if' the state is $2\vp^*-\vp$,
which can be achieved by weighing examples from each group $k$
by $w_k=\frac{p_k}{2p^*_k-p_k}$.
This inverts the natural tendency of dynamics to push away from the unstable equilibrium, and instead pull towards it. 
The statement regards optimal classifiers and requires knowledge of $\vp^*$,
but in practice we can set $w$ according to an estimated
$\tilde{\vp}$.
We further explore this approach in \Cref{subsec:cifar_experiment}.

\section{Experiments}
\label{sec:empirical}

To demonstrate how some of the principles we have discussed
so far apply in more realistic settings, we now turn to explore evolutionary prediction games
empirically using real data and simulated dynamics.
We include three experiments.
The first experiment complements our results on 
Sec.~\ref{subsec:stabilization} by demonstrating how stabilization can improve both overall and per-group accuracies via coexistence.
The second complements 
Sec.~\ref{subsec:learning_in_practice}, showing how mutualism can arise organically due to group balancing,
here from complementarities in label noise.
The third experiment explores fairness under population dynamics (Prop.~\ref{prop:equilibrium_fairness})
to illustrate the limitations of static fairness measures.

\subsection{Mutualistic Coexistence From Data Augmentation}
\label{subsec:cifar_experiment}
Data augmentation is a method for enriching the training set with 
class-preserving transformations of inputs
(e.g., camera angle, source of lighting) \citep[e.g.,][]{shorten2019survey}.
Augmentations are typically considered as artificial constructs,
but there are also cases where they emerge naturally
(e.g., medical images from different hospitals or imaging devices).
Here we show that when different sources correspond to different groups, data pooling can serve to foster beneficial coexistence
in which all groups gain from the existence of others.
This suggests that `natural' augmentations can be effective also in the long run.
\squeeze

\paragraph{Method.} We use the CIFAR-10 image recognition dataset \citep{krizhevsky2009learning}, which consists of 60,000 32x32 color images in 10 classes, with 6,000 images per class. 
As horizontal image flips are considered class-preserving for this dataset, group $A$ consists of images sampled directly from CIFAR-10, and group $B$ consists of images under horizontal flip. We use a ResNet-9 network for prediction \citep{he2016deep}, and train it using the \texttt{ffcv} framework with default optimization parameters \citep{leclerc2023ffcv}. For each $\vp$, we measure the prediction model's accuracy on the original images from the CIFAR test set (representing $\acc_A(h_\vp)$), and on their flipped counterparts (representing $\acc_B(h_\vp)$).
For stabilization, we use the method described in \Cref{subsec:stabilization} with $\vp^*=(0.5,0.5)$. We simulate the evolutionary process using discrete replicator dynamics, and use linear interpolation to determine intermediate fitness values. 

\paragraph{Results.}
\Cref{fig:empirical} (Left) presents the estimated evolutionary prediction game, averaged across $\ParamEmpiricalCifarAugmentationNRepetitions{}$ repetitions.
The game has three equilibria: Two stable equilibria with single-group dominance
($\ParamEmpiricalCifarAugmentationSingleGroupAccuracyConfidence{}$ accuracy),
and an unstable coexistence equilibrium ($\ParamEmpiricalCifarAugmentationCoexistenceAccuracyConfidence{}$ accuracy).
Although fitness is typically higher for the larger group,
the mixed equilibrium attains the highest overall population accuracy,
suggesting that each groups benefits from the existence of the other.
Reaching this point however requires stabilization;
\Cref{fig:empirical} (Center) shows how 
our algorithm from \Cref{subsec:stabilization}
is able to attain stable beneficial coexistence,
despite using an empirical proxy objective.
\Cref{app:additional_experiments} includes additional results on
sampling noise,
time to convergence,
and sensitivity analysis for stabilization.
\squeeze

\begin{figure*}
    \centering
    \includegraphics[width=\textwidth]{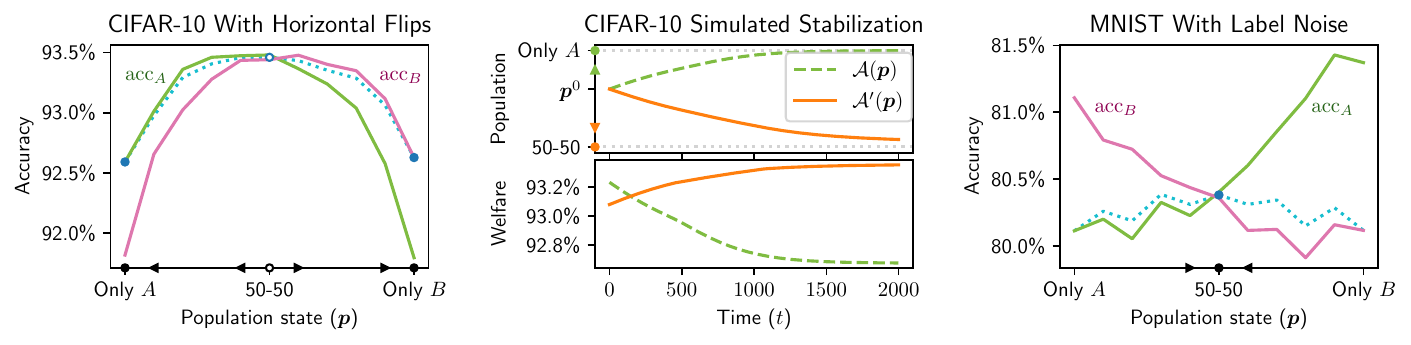}
    \vspace{-1em}
    \caption{
    Empirical evaluation in two-group settings.
    \textbf{(Left)}
    Game induced by
    CIFAR-10 with Resnet-9 and groups representing horizontal flips. The game has unstable mutualism (\Cref{subsec:cifar_experiment}).
    \textbf{(Center)}
    Simulated stabilization of replicator dynamics on the CIFAR-10 game (\Cref{subsec:stabilization}).
    \textbf{(Right)} 
    Game induced by
    MNIST with label noise, showing stable mutualism (\Cref{subsec:mnist_experiment}).
    }
\label{fig:empirical}
\end{figure*}

\subsection{Stable Coexistence From Overparameterization}
\label{subsec:mnist_experiment}

Modern neural networks are frequently overparameterized, and are known to achieve strong test-time performance despite memorizing the training data \citep[e.g.,][]{zhang2021understanding}. Recent research attributes this phenomenon to the implicit biases introduced by the training process \citep[e.g.,][]{vardi2023implicit}. 
Building on these ideas, here we demonstrate that stable coexistence can emerge when an overparameterized neural network is trained on
data in which groups have complementing label noise.
Our construction considers a multi-class settings where each group includes a majority of \emph{noisy} examples from some class,
and a minority of clean examples from other classes.
This aims to captures settings in which user groups
(e.g., by country) differ in the marginal distribution $p(y)$ over classes (e.g.,  spoken language),
and where the dominant class is more likely to be misclassified
(e.g., due to many accents).
\squeeze

\paragraph{Method.} 
We use the MNIST dataset, set $K=2$, and split the classes unevenly between two groups:
Group $A$ is biased towards even digits $\Set{0,2,4,6,8}$, whereas group $B$ is biased towards odd digits, both with a 4:1 imbalance. For each group, we introduce label noise to the majority classes, mapping the true label of each digit $d$ to the next digit with same parity with probability $0.2$ (i.e. for group $A$, label noising stochastically maps $0\mapsto2$, $2\mapsto4$, etc.). We train a convolutional neural network for 200 epochs using stochastic gradient decent with momentum. The training and test sets are split independently, and for each $\vp$ we measure the prediction model's accuracy on both splits of the test sets (representing $\acc_A(h_\vp)$ and $\acc_B(h_\vp)$). Under the given parameters, the label noising process leads to an accuracy upper bound of $84\%$ assuming perfect recognition.

\paragraph{Results.}
\Cref{fig:empirical} (Right) shows the estimated evolutionary prediction game, averaged across $\ParamEmpiricalMnistLabelNoiseNRepetitions{}$ repetitions.
Note how label noise has 'flipped' the game in that accuracy is higher for the minority group
(i.e., $\acc_B>\acc_A$ when $p_B<p_A$, and vice versa).
This indicates that groups naturally balance each other in this setting.
As a result, the game has a stable coexistence equilibrium,
with $\ParamEmpiricalMnistLabelNoiseCoexistenceAccuracyConfidence{}$ test accuracy
(vs. the theoretical upper bound of $84\%$ due to label noise).
Training accuracy is $\ParamEmpiricalMnistLabelNoiseTrainAccuracyConfidence{}$,
suggesting that interpolation is indeed the likely phenomenon at play.
See \Cref{subsec:knn_coexistence_proof} for further discussion on how interpolation facilitates coexistence as an underlying mechanism.
\squeeze

\subsection{Population Dynamics and Fairness}
\label{subsec:fairness_empirical}

\begin{figure*}
    \centering
    \includegraphics[width=\textwidth]{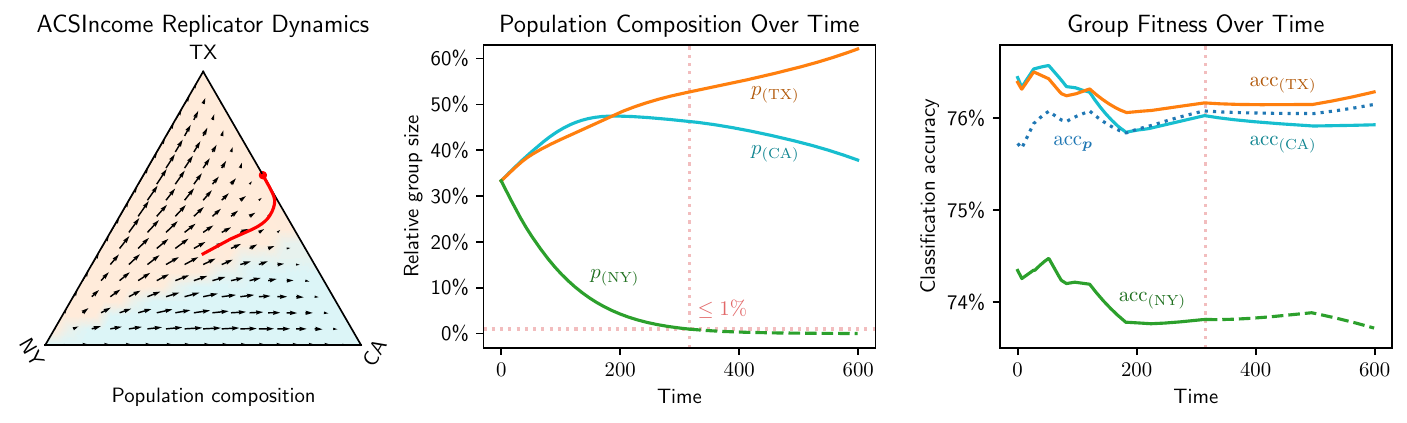}
    \vspace{-1em}
    \caption{
    Evolutionary dynamics in a three-group setting, induced by a linear SVM trained on the ACSIncome dataset (\Cref{subsec:fairness_empirical}).
    \textbf{(Left)}
    Replicator dynamics on the three-group simplex. Red line indicates a trajectory starting from the uniform mixture, and background colors indicate basins of attraction.
    \textbf{(Center)}
    Population composition over time for the same trajectory. Proportion of NY users becomes negligible at $t \approx \ParamFolktablesSvmDisparityTime$.
    \textbf{(Right)} 
    Fitness over time. Effective group disparity is significant at the outset, but shrinks by an order of magnitude after NY users are driven out.
    }
\label{fig:folktables_fairness}
\end{figure*}

Lastly, we examine how population dynamics interact with notions of fairness.
Our goal here is to show that a system can appear to be fair at a given point in time---but only because some groups have been driven out of the game, and fairness is measured on the remaining observable groups.
Our message here is that fairness should consider not only what has happened, but also what \emph{could} have happened, particularly with respect to groups that perhaps had previously been excluded.

\paragraph{Method.} We use a subset of the Folktables ACSIncome dataset \citep{ding2021retiring}, a tabular income-level classification task based on US census data. We associate groups with data from three states in the US, with the idea that firms operate in different states but train a model on data aggregated from all sources. 
Specifically, we set $K=3$, and use 2018 census data from the states of California (\ParamFolktablesStateCaliforniaNumDatapoints{} datapoints), 
New York (\ParamFolktablesStateNewYorkNumDatapoints{}), 
and Texas (\ParamFolktablesStateTexasNumDatapoints{}). 
We use linear SVM as the learning algorithm, and randomly hold out \ParamFolktablesTestSetSize{} points from each state for evaluating accuracy. 
To compute fitness at a given population state $\vp$,
we sample a dataset of size $n=\ParamFolktablesTrainingSetSize$ from the mixture, train a classifier, and measure group accuracies using the held-out sets. 
Results are averaged over $\ParamFolktablesGridNumRepetitions$ repetitions to obtain mean fitness values, and we simulate replicator dynamics starting from the uniform population state.
\squeeze

\paragraph{Results.}
\Cref{fig:folktables_fairness}
shows how
evolution transitions between two dynamical phases: an initial phase 
(up to time step ${\sim}200$) where one group (NY) diminishes while the other two (CA,TX) remain balanced with similar accuracies,
and a second phase (from time step ${\sim}300$) in which the two remaining groups compete until one of them dominates.
This illustrates a possible shortcoming of static fairness definitions: 
At the onset, all groups are present, but disparity in accuracy is $\approx2\%$. 
As time progresses, the system becomes more "fair" --- but only because 
the first group has been suppressed (i.e., its relative size is $\le 1\%$).
Auditing the prediction algorithm for fairness at this point in time
creates an appearance of fairness.
This holds only for the two groups that remain (disparity $\approx 0.2\%$),
and there is insufficient observed data to help identify the preceding existence of a missing group.
\Cref{app:fairness_additional_experiments} provides additional support, exploring similar phenomena under different learning algorithms.
\squeeze

\section{Discussion}

This work introduces evolutionary prediction games, a unified framework for analyzing self-selection feedback loops between learning algorithms and user populations. Our analysis reveals a gap: in idealized settings, repeated training drives competitive exclusion, whereas practical constraints enable a rich variety of long-term outcomes, including stable coexistence and mutualistic interactions.

The study of population dynamics is central to biology and ecology,
a task for which population games have proven to be highly effective.
We believe there is need to ask similar questions regarding the dynamics of social populations,
as they become increasingly susceptible to the effects of predictive learning algorithms.
The social world is of course quite distinct from the biological,
but there is nonetheless much to gain from adopting an ecological perspective.
One implication of our results is that, 
without intervention, learning might drive a population of users
to states which would otherwise be unfavorable.
Ecologists address such problems by studying and devising tools for effective conservation.
Similar ideas can be used in learning for fostering 
and sustaining heterogeneous user populations.
Our work intends to stir discussion about such ideas within the learning community.

\section*{Acknowledgements} 
The authors would like to thank 
Mor Nitzan,
Oren Kolodny,
Ariel Procaccia,
Fernando P. Santos,
Jill-J{\^e}nn Vie,
Evyatar Sabag,
and
Anonymous Reviewers
for their insightful remarks and valuable suggestions.
Nir Rosenfeld is supported by the Israel Science Foundation grant no. 278/22.
Eden Saig is supported by the Israel Council for Higher Education PBC scholarship for Ph.D. students in data science.

\bibliographystyle{plainnat}
\bibliography{citations.bib}

\appendix

\newpage

\newpage
\section{Broader Impact}

Our work seeks to shed light on the question:
how does the deployment of learned classifier impact the long-term composition of the population of its users?
Posing this question as one of equilibrium under a novel population game,
our results describe the possible evolutionary outcomes
under simple conditions (training once),
optimal conditions (perfect classifiers),
and realistic settings (namely empirical surrogate loss minimization).
The fact that learning can influence population dynamics can have major implications on social and individual outcomes.
Our work intends to illuminate this aspect of learning in social contexts,
and to provide a basic and preliminary understanding of such interactions.
\squeeze

\paragraph{Scope and limitations.}
As the bulk of our results are theoretical in nature,
naturally they 
rely on several assumptions.
Some regard the learning process:
for example, we consider only simple accuracy-maximizing algorithms
that optimize over a fixed-size training set sampled only from the current (and not past) data distribution.
Others concern dynamics:
for example,
we assume no exogenous forces (i.e., which could prevent extinction),
intra-group distribution shifts,
or inter-group dependencies (other than those indirectly formed through the classifier).
Whether our results hold also when these assumptions are relaxed is an important future question.

It is also important to interpret our results appropriately and with care.
For example, statements which establish `extinction' as a likely outcome
apply only limiting behavior, and so are silent on trajectories,
rates, or any finite timepoint.
Our results also rely on users being partitioned into non-overlapping groups.
In reality, group memberships are rarely exclusive (or even fixed),
and the dependence of individual behavior on group outcomes (in our case, marginal group accuracy---which defines fitness) is unlikely to be strict.

The above points should be considered when attempting to draw conclusions from our theoretical results to real learning problems.
However, and despite these limitations,
we believe the message we convey---which is that learning in social settings \emph{requires} intervention and conservation to ensure sustainability of social groups and individuals---%
applies broadly and beyond our framework's scope.

\paragraph{Diversity vs. homogeneity.}
Our results suggests that systems of the type we analyze will tend towards 
less diverse population compositions.
Nonetheless, and as we state throughout, our results should be taken as describing the \emph{tendency} of population dynamics --- when selective pressure is the primary force, when the learning algorithm is optimal retraining, absent any interventions, and at the limiting equilibrium. 

Of course, in reality, many systems are diverse.
Possible explanations for why a particular system at a given point in time can sustain several groups are:
\begin{itemizecompact}
\item
\textbf{Time}: dynamics have not yet reached equilibrium, or converge very slowly.

\item
\textbf{Learning}: the algorithm is not optimal (e.g., uses a proxy or finite data), or is designed to balance groups (e.g., via steering, regularization, or robust training).

\item
\textbf{Intervention}: system administrators may intentionally intervene to facilitate diversity; for example, consider how firms invest significant resources in marketing (e.g., Meta reports investing about \$2B per year\footnote{Meta Platforms, Form 10-K - Annual report, 2024, SEC Accession No. 0001326801-25-000017. “Advertising Expense”.}) to ensure a constant influx of users from diverse social groups.

\item
\textbf{Dynamics}: Other forces are at play, such as lack of alternatives, cross-group influence, rigidity of demand, non-linear heterogeneous utility, etc.
\end{itemizecompact}

Nonetheless, an important but subtle point is that when we observe a system consisting of several groups, we do not observe other groups that could have been supported, as they may have already been driven out. Thus, what we perceive as a mixed state may actually be a snapshot of a process in which “weaker” groups have already vanished, with others possibly to follow.

In biology, the question of coexistence is prevalent, despite consensus that natural selection is the primary force at play. The study of coexistence therefore focuses on how it is sustained despite natural selection. This has revealed many biological and ecological mechanisms that can facilitate coexistence, such as resource partitioning, environment variation, and stabilization via intermediates. In this work we focus on learning itself as a possible stabilizing mechanism (Sections 5,6), which we view as a first step towards the study of coexistence in this context (as also noted by Reviewer oLKx). We also believe that prediction games in particular, and evolutionary game theory more generally, can aid future research in exposing additional mechanisms that can facilitate diversity in environments with learning.

\paragraph{Practical implications.}
Evolutionary prediction games can help designers and operators of ML systems anticipate performative population shifts, and integrate this foresight into their decision-making. One key message is that applying conventional retraining risks favoring some groups over others in the long term, even if unintentionally. Evolutionary prediction games can help understand which groups are at risk, why they are at risk, and highlight possible avenues for interventions, if such are needed. One example is to decide on using a stabilizing learning algorithm (such as the one we propose in Prop. 2) instead of retraining to help steer towards desired states. Another is to target the retention of at-risk groups, either by providing benefits or reducing costs (e.g., via subsidies), in order to improve their fitness. A third is to understand if and when to increase the influx of certain groups, e.g. via marketing efforts, to balance the natural tendencies of the system. We will extend the discussion to emphasize these insights.

\section{Further Discussion of Related Work}
\label{appendix:related}
\subsection{Performative Prediction}
\label{sec:relation_to_performative}
\paragraph{Stateful performative prediction.}
Performative prediction \citep{perdomo2020performative} is a framework for modeling the effects of predictive models on the data distributions they are tested and trained on (see \citep{hardt2023performative} for a recent survey).
Dynamics induced by evolutionary prediction games are a special case of the \emph{stateful performative prediction} framework \citep{brown2022performative, ray2022decision}, which models stateful feedback loops.
The performative effects of a predictor $h\in\Hypotheses$ are formally modeled using a transition map $D_{t+1}=\mathcal{T}(D_t, h)$, where $D_t,D_{t+1}\in\DistOver{\FeaturesLabels}$ are the data distributions at times $t,t+1$, and $h\in\Hypotheses$ is the predictor being deployed. 
The fixed points of the dynamics induced by evolutionary prediction games (i.e. equilibrium states $\vp^*$), correspond to \emph{fixed point distributions} as defined in \citep[Definition 4]{brown2022performative}, as they satisfy $D_{\vp^*}=\mathcal{T}(D_{\vp^*},h_{\vp^*})$, where $h_{\vp^*}$ is the classifier trained on $D_{\vp^*}$. 
Moreover, we note that the notion of evolutionary stability is stronger than the notion of fixed point distributions, as evolutionarily stable points also remain stable under small perturbations around $\vp^*$.

\paragraph{Structural assumptions reduce effective dimensionality.}
Compared to the general performative prediction framework, natural selection dynamics induce transition maps which are lower-dimensional, and thus more amenable for analysis. The general space of feature-label distributions is infinite-dimensional under the common assumption $\Features\subseteq\Reals^d$, and the space of classifiers $\Hypotheses$ is high-dimensional when the hypotheses class $\Hypotheses$ is complex (e.g., deep neural networks). Thus, the general performative prediction framework allows for transition operators which are infinite-dimensional and notoriously challenging to analyze (e.g., as  noted by \citet{conger2024strategic}).
In contrast, in dynamics induced by evolutionary prediction games, the stateful transition operator $\mathcal{T}$ operates within a low-dimensional subspace of all possible feature label-distributions (mixtures of fixed distributions), and depends on predictors through low-dimensional statistics (their marginal accuracy on each group).

Formally, given a collection of group distributions $D_1,\dots,D_K\in\DistOver{\FeaturesLabels}$, denote the space of mixtures by $\Delta_\mathrm{mix}=\Set{\sum_k p_k D_k \mid \vp \in \DistOverGroups}$, and note that $\Delta_\mathrm{mix} \subseteq \DistOver{\FeaturesLabels}$. Any mixture $D\in\Delta_\mathrm{mix}$ can be represented by a point on the $(K-1)$-dimensional simplex $\vp\in\DistOverGroups$. Additionally, as performative response depends on %
predictor accuracy, the transition operator depends on the classifier $h$ through a $K$-dimensional vector of accuracies $\mathbf{acc}(h)=\left(\acc_1(h),\dots,\acc_K(h)\right)\in\UnitInterval^K$. Therefore, transition operators induced by the evolutionary model can be viewed as lower-dimensional mappings $\vp' = \mathcal{T}(\vp, \mathbf{acc}(h))$ from a $(2K-1)$-dimensional space to a $(K-1)$-dimensional space.

\paragraph{Further relations to performative risk and forecasting.}
The gaps between the average accuracies of equilibrium points relates to the distinction between performative optimality and performative stability \citep[e.g.,][]{miller2021outside}. Additionally, convex functions over the simplex (and their sub-gradients) emerge as a key component in our analysis of oracle classifiers (\Cref{sec:retraining_optimal_classifiers}), and also appears as a key component in the analysis of proper scoring rules and their performative counterparts \citep{oesterheld2023incentivizing, chan2022scoring} through the fundamental characterization theorem \citep{gneiting2007strictly,savage1971elicitation}. 
From this perspective, interpreting the convex potential of oracle-learner games as a generator of proper scoring rules is an intriguing direction for future work.
\squeeze

\subsection{Long-Term Fairness}
\label{sec:relation_to_fairness}
Group fairness is often formalized through measures of group disparity, such as the difference in error rates across groups (see, e.g., \citep{dwork2012fairness,barocas2023fairness}).
However, these measures may fail to capture fairness in dynamic settings. 
The long-term fairness literature therefore studies the long-term implications of population dynamics on algorithmic fairness;
one straightforward example is that if a classifier is trained to comply with fairness constraints, dynamics can cause this guarantee to erode over time \citep{liu2018delayed}
(See \citep{gohar2024long} for a recent survey).
Our work proposes a complementing (and somewhat nuanced) counterfactual perspective,
suggesting that a system can appear to comply with a fairness definition, but only because some groups were already driven out of the system, and disparity is measured only over the groups that remain.

Several works in the long-term fairness literature also consider replicator (and replicator-like) dynamics of different kinds and variations as models for user response behavior 
\citep[e.g.,][]{hashimoto2018fairness,raab2021unintended,yin2023long,dean2024emergent},
albeit not in an evolutionary game setting.
Perhaps closest to our approach is \citet{hashimoto2018fairness}, who study worst\nobreakdash-group accuracy dynamics under repeated training in a mixture population that evolves via stochastic replicator-like dynamics supplemented by a strong, balanced influx of new users that join the system regardless of its performance (parameterized by $b_k$ in their model). While their stability guarantees rely on the assumption of this strong user flow, our results offer a complementary view, showing that stable coexistence may emerge even without an exogenous user flow, under standard empirical risk minimization, and without compromising worst-group accuracy (see \Cref{sec:coexistence}).
We also extend their ERM instability results to a wider family of behavioral dynamics, and establish global convergence guarantees, rather than local (see \Cref{sec:retraining_optimal_classifiers}, and \Cref{appx:microfoundations}).

\subsection{Dynamics-Aware Learners}
Finally, our framework models the learning algorithm as an entity which is only informed by the current state of population $h\sim\Learner(\vp)$, and in particular does not take into account the implications of classifier deployment. While still common in many practical scenarios (see, e.g., \citep{dean2024accounting, kiyohara2025policy, kleinberg2024challenge}), we also note that various practical systems already optimize for long-term objectives. Examples includes learners that reduce hidden incentives to modify the population of users \citep[Sec. 7.3]{krueger2020hidden}, adaption to shifting user preferences \citep{kasirzadeh2023user, carroll2022estimating}, or directly maximizing long-term engagement via reinforcement learning (e.g., \citep{afsar2022reinforcement}). 
While our focus here is on local learning rules, analyzing dynamics-aware learners from an evolutionary perspective
is an intriguing direction for future inquiry.

\section{Microfoundations} \label{appx:microfoundations}

Natural selection dynamics appear in a variety of settings. This appendix offers a representative (though not exhaustive) set of examples showing how local, myopic user behaviors aggregate into population‐level dynamics that satisfy our core assumptions. 
Using online social platforms as a running example, we show how different behavioral patterns (e.g., invitation, imitation, resource allocation) and different types group affiliations (e.g., social, behavioral, demographic) may give rise to natural‐selection dynamics in various parts of the ecosystem. We note that similar microfoundations can be identified across a wide range of domains, such as finance, healthcare, and education.

\subsection{Reproduction}
Consider a recommendation system based on prediction. Accurate recommendations motivate users to invite peers from their social group, while the presence of alternatives creates a risk of dropout.

Formally, assume that users from $K$ distinct groups interact with a prediction system over time, and denote by $N_k$ the number of users in group $k\in[K]$. On average, users interact with the system $\lambda>0$ times per time step $t=1,2,\dots$ (e.g., three times per day), and at each interaction they receive a prediction (e.g., content relevance). If a prediction is accurate, users invite a peer from their group with probability $\alpha$. Independently, a user may leave the system with probability $\beta$ (e.g., due to friction or better alternatives). 
In expectation, the size of each group is multiplied by the following factor:
\begin{align*}
N_k^{t+1} 
&= \left(
1 + \lambda \left(\alpha \cdot \acc_k(h^t) - \beta\right)
\right) N_k^t
\end{align*}
In the continuous-time limit, users interact with the system $\lambda \mathrm d t$ times per step, and $\mathrm d t\to 0$. The expected growth approaches the differential equation:
$$
\dot N_k = \lambda (\alpha\cdot\acc_k(h)-\beta)N_k
$$
Where $\dot N_k=\tdt N_k$ is the time derivative of $N_k$. Denote $F_k=\lambda (\alpha\cdot\acc_k(h)-\beta)$, such that $\dot N_k=F_k N_k$. Note that the growth multiplier $F_k$ is time-dependent when the classifier $h$ is retrained. 

For group-level dynamics, denote by $N=\sum_k N_k$ the total number of users, and by $p_k=\tfrac{N_k}{N}$ the proportion of group $k$. The time derivative of $p_k$ is:
\begin{align*}
\dot p_k &= \fdt \frac{N_k}{N}
\\&
= \frac{\dot N_k N - \dot N N_k}{N^2}
\\&
= 
\underbrace{\frac{N_k}{N}}_{=p_k}
\bigg(
\underbrace{\frac{\dot N_k}{N_k}}_{\dot N_k = F_k N_k} - \underbrace{\frac{\dot N}{N}}_{\dot N=\sum F_k N_k}
\bigg)
\\&= p_k \left(
F_k - \sum_{k'\in[K]} p_{k'} F_{k'}
\right)
\end{align*}
Thus, the aggregate dynamics of group proportions $\vp$ satisfy the \emph{replicator equation} \citep{taylor1978evolutionary}, which satisfies our core assumptions. The fitness function in this case is $F_k(\vp)=\lambda \alpha \cdot \acc_k(h_\vp) - \lambda \beta$.

\subsection{Imitation}
Consider a media platform fostering content creators, whose profit depends on the extent to which their content reaches their target audience through accurate predictions. A successful approach (e.g., content style, advertising strategy) of one creator is likely to be mimicked by others.

Formally, suppose there a large population of content producers aiming to market some good. Each producer has a content production strategy $k$ (e.g., short versus long videos), the price of the good is $r$, and the cost of production under strategy $k$ is $c_k$ (e.g., longer videos are typically more costly to produce). A prediction algorithm observes content and makes recommendation decisions to content consumers. A prediction is considered accurate if the content is relevant to the consumer, and each successful prediction yields a conversion (e.g., sale) with probability $\alpha$. Denote by $x\sim D_k$ a content item produced by strategy $k$, and its corresponding label by $y$. The expected profit of a producer using strategy $k$ is: 
$$F_k=r \cdot \alpha \prob{D_k}{h(x)=y} - c_k$$
At each step, a content creator observes another creator's content (e.g., on social media). If their own strategy is less successful than the one they observe, they adopt the other's strategy with probability proportional to the difference in utilities. Denote the total size of the population by $N$ (here assumed to be large but fixed), the number of creators who adopt strategy $k$ by $N_k$, and the proportion of group $k$ by $p_k=N_k/N$. \citet[Chapter 4]{sandholm2010population} shows that this form of \emph{pairwise proportional imitation} converges in the large-population limit to the replicator dynamics \citep{taylor1978evolutionary}, which satisfy our core assumptions.

\subsection{Online resource allocation}
Natural‐selection dynamics between groups of users can also arise from interaction between prediction algorithms in multi‐stage pipelines.
Consider a firm that acquires users though $K$ marketing campaigns, each targeting a different demographic. 
The firm uses a classifier to personalize the experience of incoming users, and each accurate prediction yields a conversion with probability $\alpha$. Campaign $k$ therefore generates expected utility:
$$
F_k = r \cdot \alpha \cdot \acc_k(h) - c_k
$$
where $r$ is the revenue per conversion, and $c_k$ is the average cost of user acquisition for campaign $k$. 
The firm has a fixed advertising budget $B$, which it dynamically allocates across marketing campaigns. Denote the spend on campaign $k$ by $p_k B$, and assuming the firm reassigns spend according to a \emph{multiplicative weights} update rule \citep{freund1999adaptive} for dynamic allocation, we obtain:
$$p_{k}^{t+1} = p_k \frac{\beta ^ {F_k}}{\sum_{k'} \beta ^ {F_{k'}}}$$
where $\beta>1$ is a hyper-parameter of the MW algorithm. As $\beta\to1$, dynamics converge towards the replicator dynamics (cf. \Cref{eq:discrete_replicator}), which satisfies our core assumptions.

\section{Deferred Proofs}
\label{sec:proofs}

\subsection{Preliminaries}
\label{subsec:population_game_preliminaries}

\begin{figure*}
    \centering
    \includegraphics[width=\textwidth]{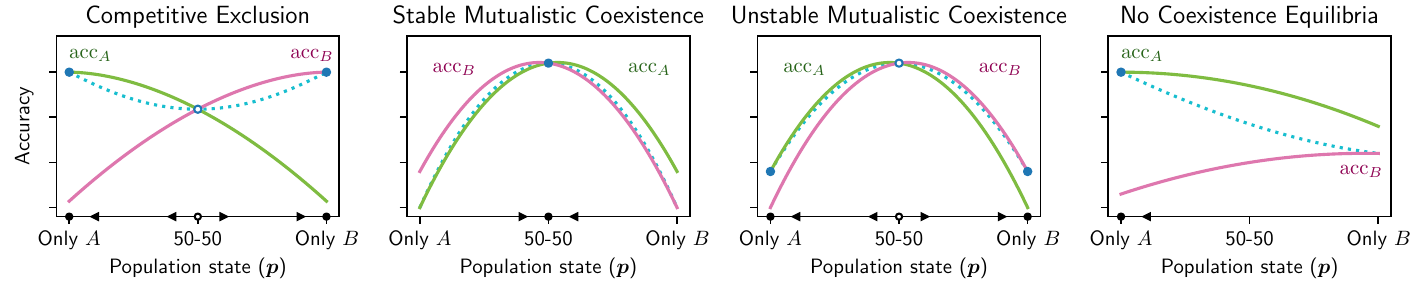}
    \vspace{-1em}
    \caption{
    Graphical representation of two-group evolutionary prediction games (\Cref{def:prediction_game}).
    Groups are denoted by $\Set{A,B}$. 
    Solid lines describe expected group accuracies for each population state $\vp$, dotted line describes population accuracy $\acc_\vp(h)$, dots indicate Nash equilibria, and x-axis arrows indicate evolutionary stability.
    Schematic plots illustrate a selection of possible scenarios:
    \textbf{(Left)}
    Competitive exclusion. There are two stable equilibria, each dominated by a single group, with locally maximal population accuracy. Additionally, there is an unstable coexistence equilibrium with lower overall accuracy.
    \textbf{(Center Left)}
    Stable mutualistic coexistence. There is a stable coexistence equilibrium, where the accuracy of both groups is maximized.
    \textbf{(Center Right)} 
    Unstable mutualisitic coexistence. Coexistence has maximal population accuracy, but it is unstable.
    \textbf{(Right)} 
    No coexistence equilibria. There is a single equilibrium dominated by a single group.
    }
\label{fig:prediction_game_types}
\end{figure*}

\paragraph{Distributions.}
The set of distribution over a set $A$ is denoted by $\DistOver{A}$, and the set of distributions over $[K]$ is denoted by $\DistOverGroups=\DistOver{[K]}=\Set{\vp\mid \sum_k p_k=1, p_k\ge 0\ \ \forall k\in[K]}$.
The support of a distribution $\vp\in\DistOverGroups$ is denoted by $\support(\vp)=\Set{k\in[K]\mid p_k>0}$.
We denote the space tangent to the $K$-simplex by $T\DistOverGroups=\Set{z\in\Reals^K\mid \sum_k z_k=0}$.
We denote $\vone=\left(1,\dots,1\right)\in\Reals^K$, and denote by $\mPhi=\mI-\vone\vone^T$ the orthogonal projection matrix from $\Reals^K$ to $T \DistOverGroups$. 
For $k\in[K]$, we denote by $\SingletonDistribution_k\in\DistOverGroups$ the singleton categorical distribution supported over $k$, representing a mixture which only contains the distribution $D_k$.

\paragraph{Supervised learning.}
We denote the space of features by $\Features$, the space of labels by $\Labels$, the hypotheses class by $\Hypotheses\subseteq\Labels^\Features$, and the loss function by $\Loss:\Labels^2\to\Reals$.
The expected loss of hypothesis $h\in\Hypotheses$ under population distribution $D\in\DistOver{\FeaturesLabels}$ is denoted by $\Loss_D(h)=\expect{(x,y)\sim D}{\Loss(h(x),y)}$. A supervised learning algorithm $\Learner$ is a possibly-stochastic mapping from a distribution $D\in\DistOver{\FeaturesLabels}$ to a classifier $h\sim\Learner(D)$, formally $\Learner:\DistOver{\FeaturesLabels}\to\DistOver{\Hypotheses}$.

\paragraph{Population state.}
Each group $k$ is associated with a feature-label distribution $D_k\in\DistOver{\FeaturesLabels}$.
When the population state is $\vp\in\DistOverGroups$, the training state is a mixture distribution denoted by $D_\vp=\sum_{k\in[K]} p_k D_k$. 
We use a vector subscript notation to distinguish between group distributions (e.g., $D_k$) and mixture distributions (e.g., $D_\vp$).
Given hypothesis $h\in\Hypotheses$, the expected loss of group $k$ is denoted by $\Loss_k(h)=\expect{(x,y)\sim D_k}{\Loss(h(x),y)}$, and the vector of expected group losses is denoted by $\LossVec(h)=\left(\Loss_1(h),\dots,\Loss_K(h)\right)$.
For $\vp\in\DistOverGroups$, the expected loss of $h$ under distribution $D_\vp$ is denoted by $\Loss_\vp(h) = \expect{(x,y)\sim D_\vp}{\Loss(h(x),y)}$.

\paragraph{Population games.}
We follow the notations of \citet{sandholm2010population}, and include the key definitions here for completeness. 
Population games associate each group $k\in[K]$ with a fitness function $F_k(\vp)$, which maps a population state $\vp\in\DistOverGroups$ to the evolutionary fitness of group $k$. 
A Nash Equilibrium (NE) of a population game is a population state $\vp^*$ which satisfies $\support(\vp^*)\subseteq\argmax_{k\in[K]} F_k(\vp^*)$, and every population game admits at least one Nash equilibrium \citep[Theorem 2.1.1]{sandholm2010population}. 
A Restricted Equilibrium (RE) of a population game is a population state $\vp^*$ which satisfies $\support(\vp^*)\subseteq\argmax_{k\in\support(\vp^*)} F_k(\vp^*)$, or equivalently $F_{k}(\vp^*)=F_{k'}(\vp^*)$ for all $k,k'\in\support(\vp^*)$. 
A population game is a potential game if it can be represented as a gradient of some function. Formally, $F:\DistOverGroups\to\Reals^K$ is a potential game if there exists $f:\DistOverGroups\to\Reals$ such that $F=\nabla f$, where $\nabla f$ is the gradient of $f$ over the simplex (see \citep{sandholm2010population}, Section 3.2 for rigorous definitions).

\paragraph{Competition, mutualism, and evolutionary stability.} 
Coexistence equilibria are fixed points in which two groups or more persist $\Size{\support(\vp^*)}\ge2$. We say that coexistence is \emph{competitive} if each group's accuracy at $\vp^*$ is lower than its accuracy in isolation (i.e., when it is the sole group in the population). Conversely, we say that interactions are \emph{mutualistic} if every group’s accuracy at $\vp^*$ exceeds the accuracy it obtains in isolation, indicating that coexistence benefits all groups (see \Cref{fig:prediction_game_types} for an illustration). Additionally, we say an equilibrium is \emph{evolutionarily stable} if it attracts nearby population states (see \citep[Section 8.3]{sandholm2010population} for formal definitions).

\subsection{Evolutionary Dynamics}
\label{subsec:appendix_dynamics}

\begin{figure*}
    \centering
    \includegraphics[width=0.9\textwidth]{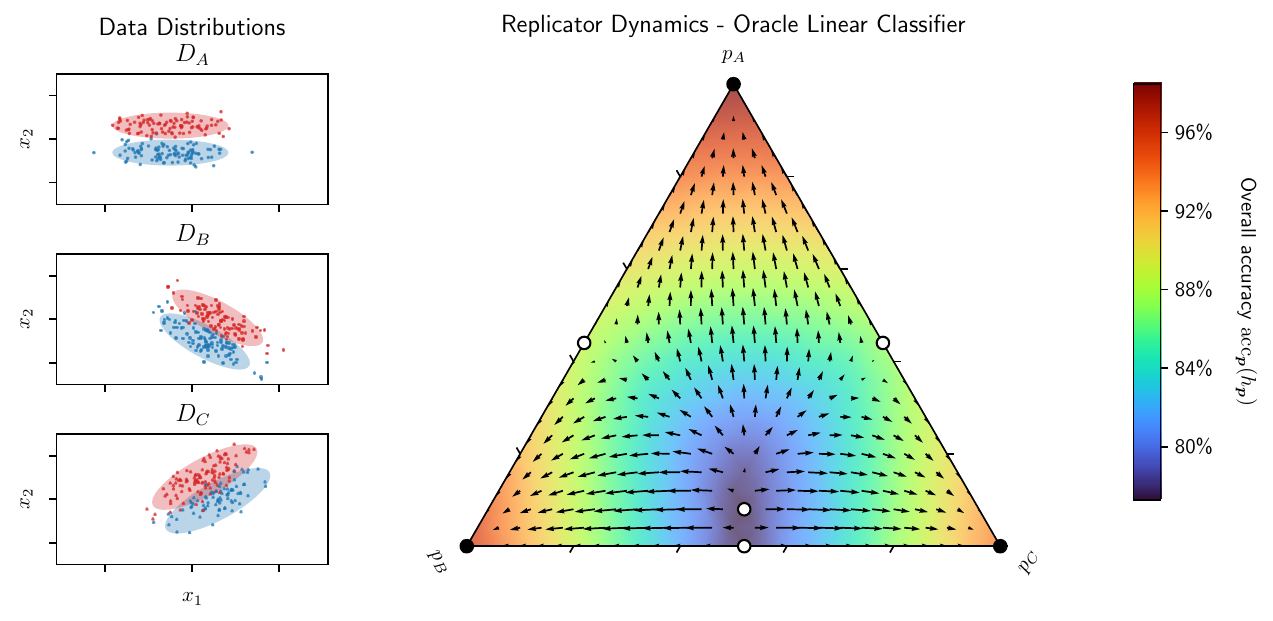}
    \vspace{-1em}
    \caption{
    Evolutionary dynamics of an oracle classifier in a three-group setting.
    \textbf{(Left)} Data distributions of groups $\Set{A,B,C}$.
    \textbf{(Right)} Phase diagram of replicator dynamics over the probability simplex, induced by the evolutionary prediction game. Background color represents the overall accuracy $\acc_\vp(h_\vp)$, vectors represent the flow of replicator dynamics, and markers represent fixed points, with marker color indicating stability. The dynamics have 7 fixed points: One unstable fixed point at the interior of the simplex (unstable coexistence equilibrium), three stable fixed points at the vertices of the simplex (stable equilibria with single-group dominance), and three additional unstable points at the boundaries (unstable restricted equilibria induced by the replicator dynamics).
    }
\label{fig:ternary}
\end{figure*}

For our theoretical analysis, we assume that $\vp$ evolves in time according to an ordinary differential equation of the form $\dot{\vp} = V_F(\vp)$, where $V_F:\DistOverGroups\to T\DistOverGroups$ is a vector field tangent to the simplex, induced by the fitness function $F(\vp)$, assumed to be continuous (see \citep[Theorem 4.4.1]{sandholm2010population}). 
A common example for such dynamical system is the replicator equation $\dot p_k = p_k (F_k(\vp)-\sum_{k'} F_{k'}(\vp)) $, which corresponds to the tangent vector field $V_F(\vp)=\vp \odot \left(F(\vp)-\sum_{k'} p_{k'} F_{k'}(\vp) \vone \right)$, where $\odot$ is the Hadamard product (see \Cref{fig:ternary}). The replicator dynamics are a common model of evolutionary dynamics (see \Cref{appx:microfoundations}, and e.g. \citep{traulsen2006coevolutionary,kleinberg2011beyond}), and satisfies our core assumptions. Moreover, we also emphasize that our analysis applies to any $V_F$ which satisfies the core assumptions below.

With respect to $V_F$, we make the following assumptions:
\begin{itemizecompact}
\item \textbf{Continuity:} We assume that $V_F(\vp)$ is continuous with respect to $\vp$.
\item \textbf{Positive correlation:} We assume that $V_F(\vp)$ is directionally aligned with the fitness vector $F(\vp)$ whenever a population is not at rest, formally $V_F(\vp)\cdot F(\vp) > 0$ whenever $V_F(\vp)\neq\vzero$. For a detailed discussion, see \citep[Section 5.2]{sandholm2010population}.
\end{itemizecompact}
For correspondence to game equilibria, we assume either of the following assumptions:
\begin{itemizecompact}
\item \textbf{Nash stationarity:} Under Nash stationarity, all fixed points $V_F(\vp^*)=\vzero$ correspond to Nash equilibria $\vp^*$ of the game $F$. See \cref{eq:nash_equilibrium} for the definition of Nash equilibrium, and \citep[Section 5]{sandholm2010population} for a detailed discussion.
\item \textbf{Imitative dynamics:} Under imitative dynamics, agents adopt strategies based on observing and copying the behaviors of others in a population. See \citep[Section 5.4]{sandholm2010population} for a formal definition using the revision protocol formalism.
\end{itemizecompact}

The replicator equation introduced in \Cref{sec:prediction_games} is a special case of imitative dynamics, which satisfy our assumptions. 
In contrast to dynamics with Nash stationarity, imitative dynamics admit additional rest points at restricted equilibria of the population game (formally, states satisfying $F_k(\vp^*)=F_{k'}(\vp^*)$ for all $k,k'\in\support(\vp^*)$). However, we note that this does not affect our main results (\Cref{thm:optimal_learner} and its extensions, and our constructions in \Cref{sec:coexistence}), as any Nash equilibrium is also a restricted equilibrium, and any non-Nash restricted equilibrium corresponds to an unstable fixed point of the imitative dynamics (see \citep[Section 8.1]{sandholm2010population}). See \Cref{fig:ternary} for an illustration of Nash and restricted equilibria in a three-group setting.

\paragraph{Time scales.} Rates of convergence can be tuned by scaling the vector field $V_F$. For example, replacing $V_F$ with $\alpha V_F$ for $\alpha>0$ compresses or stretches time by a factor of $\alpha$ while still maintaining our core assumptions. In particular, if $\vp^t$ satisfies $\dot \vp=V_F(\vp)$, then $\vq^t=\vp^{\alpha t}$ is a solution to $\dot\vq=\alpha V_F(\vq)$ with the same initial conditions. Thus, our convergence and stability results hold irrespective of whether the dynamics evolve quickly or slowly.

\paragraph{Dynamics induced by potential games.} Dynamics induced by potential games are closely related to the structure of the potential function $f(\vp)$. We informally state key results that are relevant to our setting:
By \citep[Theorem 3.1.3]{sandholm2010population}, a state $\vp^*$ is a Nash equilibrium of $F$ 
if and only if it is an extremum of the potential function $f$. 
By \citep[Lemma 7.1.1]{sandholm2010population} potential function are Lyapunov functions, and by \citep[Theorem 7.1.2]{sandholm2010population}, dynamics satisfying the positive correlation property converge towards Nash equilibria (restricted Nash equilibria in case of replicator dynamics). 
Finally, by \citep[Theorem 8.2.1]{sandholm2010population}, a population state $\vp^*$ is a stable equilbrium if and only if it is a local maximizer of $f(\vp)$.

\subsection{Basic Claims}
\subsubsection{Linearity}
\begin{proposition}[Expected loss is linear in $\vp$ for a fixed hypothesis]
\label{claim:fixed_h_loss_convex_combination}
For any $h\in\Hypotheses$ and $\vp\in\DistOverGroups$, The expected loss can be represented as a convex combination:
$$
\Loss_\vp(h) = \sum_{k\in[K]} p_k \cdot \Loss_k(h)
$$
\end{proposition}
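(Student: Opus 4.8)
The plan is to unwind the definitions and invoke linearity of expectation with respect to the mixture weights. Recall from the preliminaries that $\Loss_\vp(h)=\expect{(x,y)\sim D_\vp}{\Loss(h(x),y)}$, and that at population state $\vp$ the training distribution is the mixture $D_\vp=\sum_{k\in[K]} p_k D_k$. The key observation is that a mixture distribution acts linearly on expectations: for any bounded measurable $g:\FeaturesLabels\to\Reals$,
$$
\expect{(x,y)\sim D_\vp}{g(x,y)}
= \int g \,\mathrm d D_\vp
= \int g \,\mathrm d\Big( \sum_{k\in[K]} p_k D_k \Big)
= \sum_{k\in[K]} p_k \int g\,\mathrm d D_k
= \sum_{k\in[K]} p_k\, \expect{(x,y)\sim D_k}{g(x,y)},
$$
where the third equality is just linearity of the Lebesgue integral together with the defining property of a mixture measure, $D_\vp(E)=\sum_k p_k D_k(E)$ for every measurable event $E$.

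First I would apply this identity with $g(x,y)=\Loss(h(x),y)$. For the $0$-$1$ loss this $g$ takes values in $\{0,1\}$ (and more generally we only need $\Loss$ bounded on the relevant range), so every integral above is finite and the interchange of the finite sum and the integral is immediate. Then I would recognize each summand using the definition $\expect{(x,y)\sim D_k}{\Loss(h(x),y)}=\Loss_k(h)$, which yields $\Loss_\vp(h)=\sum_{k\in[K]} p_k\,\Loss_k(h)$. Finally I would note that since $\vp\in\DistOverGroups$ we have $p_k\ge 0$ for all $k$ and $\sum_k p_k=1$, so the right-hand side is indeed a convex combination of the per-group losses $\Loss_1(h),\dots,\Loss_K(h)$.

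There is essentially no obstacle here; the only point requiring any care at all is the formal step that integration against the mixture $D_\vp$ decomposes as the $\vp$-weighted sum of integrations against the $D_k$, which as noted follows directly from the definition of a mixture measure and linearity of the integral. I would state the lemma precisely because the corollary fact---that for fixed $h$ the map $\vp\mapsto\Loss_\vp(h)$ (equivalently $\vp\mapsto\acc_\vp(h)$) is affine, indeed linear, in $\vp$---is the starting point for the convexity argument underlying \Cref{thm:optimal_learner}.
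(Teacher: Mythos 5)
Your proof is correct and essentially matches the paper's: the paper phrases the decomposition via the law of total expectation over the group index $k\sim\vp$, while you phrase it as linearity of integration against the mixture measure $D_\vp=\sum_k p_k D_k$, which is the same elementary fact. No gaps.
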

\begin{proof}
Given a population state $\vp$, treat it as a categorical random variable over population groups, denote by $k \sim \vp$ the group from which a feature-label pair gets selected. Applying the law of total expectation and using the definition of $\Loss_\vp(h)$, we obtain:
\begin{equation*}
\begin{aligned}
\Loss_\vp(h)
&= \expect{(x,y)\sim D_\vp}{\Loss(h(x),y)} \\
&= \expect{k \sim \vp}{\expect{(x,y)\sim D_k}{\Loss(h(x),y)}} \\
&= \sum_{k\in[K]} p_k \cdot \expect{(x,y)\sim D_k}{\Loss(h(x),y)} \\
&= \sum_{k\in[K]} p_k \cdot \Loss_k(h)
\end{aligned}
\end{equation*}
\end{proof}

\subsubsection{Overall Accuracy Equality}
\label{sec:equilibria_are_fair}
Equilibria of evolutionary prediction games also satisfy a natural fairness criterion. A classifier $h$ satisfies \emph{overall accuracy equality} if all groups have equal prediction accuracy \citep{verma2018fairness}. In the context of natural selection, we consider this with respect to the groups currently present in the population:

\begin{definition}[Overall accuracy equality; {\citep[e.g.,][]{verma2018fairness}}]
Let $h:\Features\to\Labels$, and let $\vp\in\DistOverGroups$. $h$ satisfies overall prediction equality if $\acc_k(h)=\acc_{k'}(h)$ for all $k,k'\in\support(\vp)$.
\end{definition}

Under retraining, a different classifier $h\sim\Learner(\vp)$ is deployed at each time step. It is therefore natural to define this property with respect to the learning algorithm, rather than a single classifier. 
In the following definition, we take the expectation over the stochasticity of the sampling and learning process to representing the average over time under retraining:

\begin{definition}[Overall accuracy equality in expectation]
Let $\vp$ be a population state, and let $\Learner(\vp)$ be a learning algorithm. $\Learner$ satisfies overall prediction equality in expectation if $\expect{h\sim\Learner(\vp)}{\acc_k(h)}=\expect{h\sim\Learner(\vp)}{\acc_{k'}(h)}$ for all $k,k'\in\support(\vp)$.
\end{definition}

\begin{proof}[Proof of \Cref{prop:equilibrium_fairness}]
Let $\vp^*$ be a Nash equilibrium of the evolutionary prediction game induced by $\Learner(\vp)$. By eq.~(\ref{eq:nash_equilibrium}), it holds that $\support(\vp^*)\subseteq \argmax_k F_k(\vp^*)$, and therefore there exists some $a\in\Reals$ such that $F_k(\vp^*)=a$ for all $k\in\support(\vp^*)$. Then by the definition of the evolutionary prediction game (Def.~\ref{def:prediction_game}), it holds that:
$
\expect{h\sim\Learner(\vp)}{\acc_k(h)}=\expect{h\sim\Learner(D)}{\acc_{k'}(h)}=a
$
for all $k,k'\in\support(\vp^*)$.
\end{proof}

Furthermore, we note that similar reasoning can be utilized to prove an analogous correspondence between overall accuracy fairness and restricted equilibria of the game.

\subsection{Training Once}
\label{sec:training_once_proofs}

We begin with a characterization of evolutionary dynamics for a simple setting in which the classifier is trained only once on the initial distribution, and henceforth kept fixed.
This is useful as a first step since by fixing $h=h^0$,
outcomes can be attributed solely to the induced changes in the population.

We will prove the following theorem:
\begin{theorem} \label{prop:train_once}
Let $\Learner$ be a learning algorithm.
Denote the initial population state by $\vp^0$, and the initial classifier by $h^0\sim\Learner(\vp^0)$.
If $h^0$ is deployed at every time step (i.e., irrespective of the 
population state $\vp$), then
it holds that:
\begin{enumeratecompact}
    \item \textbf{Competitive exclusion:} $\vp^*$ is supported on $k^* \in \argmax_k \acc_{k}(h^0)$, i.e., initially-fittest groups.
    \item \textbf{Stability:} 
    $\vp^*$ is stable.
    \item \textbf{Accuracy:} 
    $
    \tfrac{\mathrm d}{\mathrm dt}
    \acc_{\vp^t}(h^0) \ge 0
    $, i.e., overall accuracy weakly improves over time.
\end{enumeratecompact}
\end{theorem}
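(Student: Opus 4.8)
The plan is to observe that once the classifier is frozen at $h^0$, the induced population game degenerates to one with \emph{constant fitness}, which trivializes the analysis. Since $h^0$ is redeployed at every step, \Cref{def:prediction_game} gives $F_k(\vp)=\acc_k(h^0)=:a_k$ for all $\vp\in\DistOverGroups$, so $F$ is a constant vector. Applying \Cref{claim:fixed_h_loss_convex_combination} to the $0$-$1$ loss (equivalently, to accuracy) shows that the overall accuracy $f(\vp):=\acc_\vp(h^0)=\sum_{k} p_k a_k$ is \emph{linear} in $\vp$ and that $\nabla f=(a_1,\dots,a_K)=F$. Hence the frozen-classifier game is a potential game with the explicit linear potential $f(\vp)=\acc_\vp(h^0)$, and I can draw on the potential-game facts recalled in \Cref{subsec:appendix_dynamics}.

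I would prove the accuracy claim (item~3) directly from positive correlation, without even invoking the potential structure: differentiating along a trajectory, $\tdt \acc_{\vp^t}(h^0)=\tdt \sum_k p_k^t a_k=\dot{\vp}^t\cdot F=V_F(\vp^t)\cdot F(\vp^t)\ge 0$, where the inequality is precisely the positive-correlation assumption on $V_F$ (with equality only at rest points). Thus overall accuracy is nondecreasing along the dynamics.

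For items~1 and~2 the key geometric fact is that a linear function on the simplex attains its maximum exactly on the face $\mathrm{conv}\{\SingletonDistribution_k : k\in\argmax_k a_k\}$ spanned by the initially-fittest groups; in particular, the local maximizers of $f$ are exactly the population states with $\support(\vp)\subseteq\argmax_k \acc_k(h^0)$. Under Nash-stationary dynamics, any rest point $\vp^*$ is a Nash equilibrium (\Cref{def:nash_equilibrium}), i.e.\ $\support(\vp^*)\subseteq\argmax_k F_k(\vp^*)=\argmax_k \acc_k(h^0)$, which is item~1; and since these states are the (global, hence local) maximizers of $f$, \citep[Theorem 8.2.1]{sandholm2010population} gives that each of them is a stable equilibrium, which is item~2 and also yields existence of a stable $\vp^*$. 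Under imitative dynamics (e.g.\ the replicator equation) there can additionally be \emph{unstable} restricted equilibria supported on groups of equal but non-maximal initial accuracy; these are excluded once items~1--2 are read as pertaining to stable fixed points, since by the same \citep[Theorem 8.2.1]{sandholm2010population} any stable fixed point is a local maximizer of the linear $f$ and hence supported on $\argmax_k \acc_k(h^0)$.

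I expect this proof to be short precisely because the genuinely hard obstacle in the paper — reasoning about the learning/$\argmin$ operator — is absent when the classifier is held fixed. The only real care needed will be the bookkeeping around ties in $\argmax_k a_k$: when several groups share the maximal initial accuracy the limiting $\vp^*$ need not be a vertex, and under replicator-type dynamics one must rule out convergence to an unstable restricted equilibrium on a lower face. Leaning on the equivalence ``stable $\Leftrightarrow$ local maximizer of the potential'' is what makes this clean, and it is also why items~1 and~2 are best read as describing the stable fixed points (equivalently, the generic limits of the dynamics).
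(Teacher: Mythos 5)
Your proposal is correct and follows essentially the same route as the paper: recognize the frozen-classifier game as a constant game, hence a potential game with the linear potential $f(\vp)=\acc_\vp(h^0)$, then read off equilibria and stability from the correspondence between Nash equilibria/stable states and (local) maximizers of the potential \citep[Theorems 3.1.3 and 8.2.1]{sandholm2010population}. Your direct positive-correlation computation for item~3 is just an inline version of the paper's Lyapunov-function argument, and your remark about unstable restricted equilibria under imitative dynamics matches the paper's own caveat in its dynamics appendix.
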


\begin{definition}[Train-once prediction game]
Let $\vp^0\in\DistOverGroups$ be the population state at time 0, and denote by $h^0\in\Hypotheses$ the hypothesis learned at time 0. 
The train-once prediction game is a population game with the following fitness function:
$$
F_k(\vp;h^0) = -\Loss_k(h^0)
$$
\end{definition}

\begin{definition}[Constant game; {\citep[Section 3.2.4]{sandholm2010population}}]
\label{def:constant_game}
A population game $F:\DistOverGroups\to\Reals^K$ is a constant game 
if there exist constants $\alpha_1,\dots,\alpha_K\in\Reals$ such that $F_k(\vp)=\alpha_k$ for all $\vp\in\DistOverGroups$.
\end{definition}

\begin{proposition}
\label{prop:train_once_general_loss}
let $\Loss$ be a loss function, let $\vp^0\in\DistOverGroups$ be the population state at time 0, and let $h^0\in\Hypotheses$ be the hypothesis learned at time 0.
Denote by $F_k(\vp)$ the corresponding train-once prediction game, and denote by $\vp^*$ a Nash equilibrium of $F$. It holds that:
\begin{enumerate}
    \item $\support(\vp^*) \subseteq \argmin_{k\in[K]} \Loss_k (h^0)$
    \item 
    $\Loss_{\vp^*}(h) \le \Loss_{\vp^0}(h)$
    \item $\vp^*$ is stable.
\end{enumerate}
\end{proposition}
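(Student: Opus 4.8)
The plan is to recognize the train-once prediction game as a \emph{constant game} in the sense of \Cref{def:constant_game}: since the deployed classifier $h^0$ is never updated, the fitness $F_k(\vp) = -\Loss_k(h^0)$ does not depend on $\vp$. Constant games are potential games, with the affine potential $f(\vp) = -\Loss_\vp(h^0)$. Indeed, by \Cref{claim:fixed_h_loss_convex_combination} we have $\Loss_\vp(h^0) = \sum_{k\in[K]} p_k \Loss_k(h^0)$, so $f$ is linear in $\vp$ and its simplex gradient is exactly $F$. With this identification in hand, all three claims reduce to bookkeeping using the potential-game machinery recalled in \Cref{subsec:appendix_dynamics}.

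For claim 1, I would apply the definition of Nash equilibrium (\Cref{def:nash_equilibrium}) directly to the constant game: $\support(\vp^*) \subseteq \argmax_{k\in[K]} F_k(\vp^*) = \argmax_{k\in[K]} \bigl(-\Loss_k(h^0)\bigr) = \argmin_{k\in[K]} \Loss_k(h^0)$. For claim 2, I would combine this with linearity: since $\vp^*$ places all its mass on indices attaining $\min_k \Loss_k(h^0)$, we get $\Loss_{\vp^*}(h^0) = \sum_{k} p^*_k \Loss_k(h^0) = \min_{k} \Loss_k(h^0) \le \sum_{k} p^0_k \Loss_k(h^0) = \Loss_{\vp^0}(h^0)$. (The same computation shows $\vp^*$ in fact minimizes $\vp \mapsto \Loss_\vp(h^0)$ over all of $\DistOverGroups$, which is also what makes $-\Loss_\vp(h^0)$ a Lyapunov function and underlies the monotone-accuracy statement of \Cref{prop:train_once}.)

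For claim 3, the point is that because $f$ is affine and $\vp^*$ attains its maximal value over $\DistOverGroups$ (equivalently, $\Loss_{\vp^*}(h^0)$ is the global minimum), $\vp^*$ is in particular a local maximizer of $f$; by the potential-game correspondence (Sandholm's Theorem 8.2.1, recalled in \Cref{subsec:appendix_dynamics}), local maximizers of the potential are stable equilibria under any dynamics satisfying our core assumptions. The only place requiring care is precisely this stability claim: one must invoke the correspondence for the actual dynamics class (positive correlation together with Nash stationarity or imitative dynamics), and be comfortable that a Nash equilibrium of a constant game --- which is any distribution supported on the loss-minimizing index set, and need not be a vertex --- still counts as a local maximizer of the affine potential so that the cited theorem applies verbatim. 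If one additionally wants asymptotic rather than merely Lyapunov stability in the case where $\argmin_k \Loss_k(h^0)$ is a singleton, this follows because $f$ then has a strict global maximum at the corresponding vertex.
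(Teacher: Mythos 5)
Your proposal is correct and follows essentially the same route as the paper's proof: identify the train-once game as a constant game with linear potential $f(\vp)=-\LossVec(h^0)\cdot\vp$, characterize Nash equilibria as mixtures over $\argmin_k\Loss_k(h^0)$, deduce the loss comparison from linearity, and obtain stability from the correspondence between local maximizers of the potential and stable equilibria (Sandholm, Thm.~8.2.1). The only (harmless) difference is that you derive claim~1 directly from the Nash-equilibrium definition of a constant game, whereas the paper routes it through the potential-game equilibrium characterization; your added care about non-vertex equilibria and Lyapunov vs.\ asymptotic stability is consistent with, and slightly more explicit than, the paper's argument.
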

\begin{proof}
Fix $h_0\in\Labels^\Features$.
In the train-once setting, the expected marginal accuracies are constant as a function of $\vp$, and therefore $F_k$ is a constant game by \cref{def:constant_game}.
By \citep[Proposition 3.2.13]{sandholm2010population},
$F_k$ a potential game with linear potential $f(\vp)=-\LossVec(h^0)\cdot\vp$.
By \citep[Theorem 3.1.3]{sandholm2010population}, the set of Nash equilibria is the set of local maximizers of $f(\vp)$, which is a convex combination over the set $\argmin_{k\in[K]} \Loss_k(h^0)$ due to linearity. Hence: 
$$\support(\vp^*)\subseteq\argmin_{k\in[K]}\Loss_k(h^0)$$
satisfying (1).

At equilibrium $\vp^*$, it holds that $\Loss_{\vp^*}=\min_{k\in[K]} \Loss_k(h^0)$. Apply \cref{claim:fixed_h_loss_convex_combination} to obtain:
\begin{equation*}
\begin{aligned}
\Loss_{\vp^0}(h^0) 
&= \sum_k p_k^0 \Loss_k(h^0) 
\\&\ge \sum_k p_k^0 \min_{k'\in[K]}\Loss_{k'}(h^0)
\\&= \min_{k'\in[K]}\Loss_{k'}(h^0)
\\&= \Loss_{\vp^*}(h^0)
\end{aligned}
\end{equation*}
satisfying (2).

Finally, observe that each $\vp^*$ is stable as a maximizer of the linear potential function \citep[Theorem 8.2.1]{sandholm2010population}.
\end{proof}

\begin{proof}[Proof of \Cref{prop:train_once}]
\Cref{prop:train_once} is a special case of \Cref{prop:train_once_general_loss} for the loss function: 
$
\Loss_k(h)
=-\acc_k(h)
$.
\end{proof}

\subsection{Retraining}
\label{sec:retraining_proofs}

\begin{figure*}
    \centering
    \includegraphics[width=0.65\textwidth]{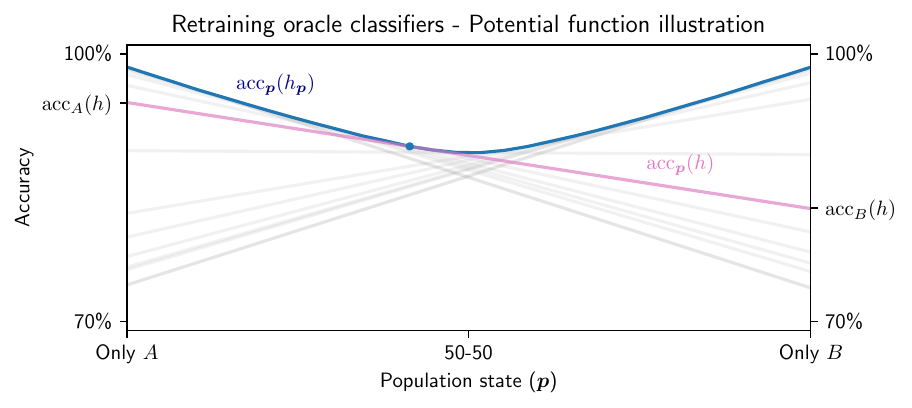}
    \vspace{-1em}
    \caption{
    Illustration of the proof of \Cref{thm:optimal_learner}. 
    Pink line: The expected accuracy of a fixed classifier $h$ is linear in $\vp$ (\Cref{claim:fixed_h_loss_convex_combination}). 
    Blue line: The overall accuracy of the oracle classifier $h_\vp$ is a pointwise maximum over linear functions, and is therefore convex (\Cref{claim:optimal_loss_is_concave}). The gradient of $\acc_\vp(h_\vp)$ over the simplex is the vector of marginal accuracies, showing that the overall accuracy is a potential function for the game (\Cref{claim:projected_subgradient_of_optimal_loss_fuction}). Properties of the game's equilibria then follow from convexity arguments.
    }
\label{fig:optimal_classifier_potential}
\end{figure*}

\begin{definition}[Retraining prediction game with an oracle predictor]
Let $\vp\in\DistOverGroups$, let $\Loss$ be a loss function, and let $\Hypotheses$ be a hypothesis class. Denote $h_\vp \in \argmin_{h\in\Hypotheses} \Loss_\vp(h)$, and assume ties are broken in a consistent way. The retraining prediction game with an optimal predictor is a population game with the following payoff function:
$$
F_k(\vp) = -\Loss_k(h_\vp)
$$

\end{definition}

\begin{definition}[Optimal loss function $l^*(\vp)$]
\label{def:optimal_loss_function}
Given hypotheses $\Hypotheses$ and population state $\vp\in\DistOverGroups$, the optimal loss at $\vp$ is:
$$
l^*(\vp) = \min_{h\in\Hypotheses} \Loss_\vp(h) 
$$
\end{definition}

\begin{proposition}[$l^*(\vp)$ is concave]
\label{claim:optimal_loss_is_concave}
The optimal loss $l^*(\vp)$
is a concave function of $\vp$.
\end{proposition}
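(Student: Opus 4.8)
The plan is to observe that $l^*(\vp)$ is a pointwise infimum over a family of functions each of which is affine in $\vp$, and then invoke the elementary fact that such an infimum is concave. The only input needed beyond this is \Cref{claim:fixed_h_loss_convex_combination}, which supplies the affinity.

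Concretely, I would fix two population states $\vp_0,\vp_1\in\DistOverGroups$ and a weight $\lambda\in\UnitInterval$, and set $\vp_\lambda = \lambda\vp_1 + (1-\lambda)\vp_0$, which again lies in $\DistOverGroups$ by convexity of the simplex. For an arbitrary fixed hypothesis $h\in\Hypotheses$, \Cref{claim:fixed_h_loss_convex_combination} gives $\Loss_\vp(h) = \sum_k p_k \Loss_k(h)$, i.e. $\vp\mapsto\Loss_\vp(h)$ is linear in $\vp$; hence
\begin{equation*}
\Loss_{\vp_\lambda}(h) = \lambda\,\Loss_{\vp_1}(h) + (1-\lambda)\,\Loss_{\vp_0}(h) \;\ge\; \lambda\, l^*(\vp_1) + (1-\lambda)\, l^*(\vp_0),
\end{equation*}
where the inequality uses $\Loss_{\vp_i}(h)\ge \min_{h'\in\Hypotheses}\Loss_{\vp_i}(h') = l^*(\vp_i)$ for $i\in\{0,1\}$. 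Since this lower bound holds for every $h\in\Hypotheses$, taking the infimum of the left-hand side over $h$ yields $l^*(\vp_\lambda)\ge \lambda\, l^*(\vp_1) + (1-\lambda)\, l^*(\vp_0)$, which is precisely the definition of concavity on $\DistOverGroups$.

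There is essentially no serious obstacle here; the one point to handle cleanly is the use of $\min$ versus $\inf$ in \Cref{def:optimal_loss_function}. The argument above only ever uses that $\Loss_{\vp_i}(h)\ge l^*(\vp_i)$, which holds by definition of the infimum regardless of whether the minimizer is attained, so the concavity conclusion is valid even when an oracle classifier fails to exist; attainment (guaranteed under the paper's standing assumption that $h^\opt$ exists with consistent tie-breaking) is only needed later to identify the minimizing $h$ as the deployed classifier and to differentiate the potential. I would also note in passing that this is the dual statement to the convexity of $\acc_\vp(h_\vp) = -l^*(\vp)$ as a pointwise \emph{maximum} of linear functions, which is the form used in the proof sketch of \Cref{thm:optimal_learner}.
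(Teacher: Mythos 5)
Your proposal is correct and follows essentially the same route as the paper: both identify $l^*(\vp)$ as a pointwise minimum of the functions $\vp\mapsto\Loss_\vp(h)$, which are linear in $\vp$ by \Cref{claim:fixed_h_loss_convex_combination}, and conclude concavity; you merely unpack the standard "infimum of affine functions is concave" fact into the explicit two-point inequality and note that attainment of the minimum is not needed for this step.
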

\begin{proof}
By definition of $l^*(\vp)$:
$$
l^*(\vp)
= \min_{h\in\Hypotheses} \Loss_\vp (h)
$$
From \cref{claim:fixed_h_loss_convex_combination}, each $\Loss_\vp(h)$ is linear in $\vp$ for all $h\in\Hypotheses$, and in particular concave. 
Hence, $l^*(\vp)$ is a point-wise minimum of concave functions, and is therefore concave.
\end{proof}

\begin{definition}[Subgradient; {\citep[e.g.,][Section 23]{rockafellar1970convex}}]
\label{def:convex_function_tangent_plane}
Let $f:\DistOverGroups\to\Reals$ be a convex function over the simplex. 
The vector $\vv\in T\DistOverGroups$ is a subgradient of $f$ at point $\vp$ if for all $\vp'$ it holds that:
$$
f(\vp) + \vv \cdot (\vp'-\vp) \le f(\vp')
$$
Where $f(\vp) + \vv \cdot (\vp'-\vp)$ is the tangent plane at point $\vp$.
\end{definition}
    
\begin{definition}[Subdifferential]
Let $f:\DistOverGroups\to\Reals$ be a convex function over the simplex. 
The subdifferential of $f$ at $\vp$ is set of all subgradients at point $\vp$, denoted by $\partial f(\vp)$.
\end{definition}

\begin{proposition}
\label{claim:projected_subgradient_of_optimal_loss_fuction}
Let $h_\vp \in \argmin_{h\in\Hypotheses} \Loss_\vp(h)$ be an optimal predictor. 
The projection of the loss vector of $h$ at point $\vp$ is a subgradient of $-l^*(\vp)$:
$$
\mPhi\LossVec(h_\vp) \in \partial \left(-l^*(\vp)\right)
$$
\end{proposition}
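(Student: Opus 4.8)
The plan is to verify the defining inequality of Definition~\ref{def:convex_function_tangent_plane} directly, exploiting that $l^*$ is the \emph{lower envelope} of the affine maps $\vp\mapsto\Loss_\vp(h)=\vp\cdot\LossVec(h)$, $h\in\Hypotheses$ (Proposition~\ref{claim:fixed_h_loss_convex_combination}), of which $h_\vp$ is a minimizing member at $\vp$. This is precisely the envelope argument for a pointwise minimum of linear functions, so nothing beyond Propositions~\ref{claim:fixed_h_loss_convex_combination}--\ref{claim:optimal_loss_is_concave} is needed.

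First I would record two elementary consequences of the definition of $h_\vp$. Optimality of $h_\vp$ at $\vp$ gives the equality $l^*(\vp)=\Loss_\vp(h_\vp)=\vp\cdot\LossVec(h_\vp)$. The fact that the same fixed classifier $h_\vp$ need not be optimal elsewhere gives, for every $\vp'\in\DistOverGroups$, the inequality $l^*(\vp')=\min_{h\in\Hypotheses}\Loss_{\vp'}(h)\le\Loss_{\vp'}(h_\vp)=\vp'\cdot\LossVec(h_\vp)$, again turning $\Loss_{\vp'}(h_\vp)$ into an inner product via Proposition~\ref{claim:fixed_h_loss_convex_combination}. Subtracting the first relation from the second yields, for all $\vp'$,
\[
l^*(\vp')-l^*(\vp) \le \LossVec(h_\vp)\cdot(\vp'-\vp),
\]
which says exactly that $\LossVec(h_\vp)$ is a supergradient of the concave function $l^*$ at $\vp$ (equivalently, $-\LossVec(h_\vp)$ satisfies the subgradient inequality for the convex function $-l^*$). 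At this point the bound lives in $\Reals^K$, whereas Definition~\ref{def:convex_function_tangent_plane} wants the gradient vector to lie in $T\DistOverGroups$.

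The remaining step is to replace $\LossVec(h_\vp)$ by its orthogonal projection $\mPhi\LossVec(h_\vp)\in T\DistOverGroups$ without disturbing the inequality. This is immediate: for $\vp,\vp'\in\DistOverGroups$ the test direction $\vp'-\vp$ already lies in $T\DistOverGroups$ (its coordinates sum to zero), so $\mPhi(\vp'-\vp)=\vp'-\vp$, and since $\mPhi$ is the symmetric idempotent projection onto $T\DistOverGroups$, we get $\big(\mPhi\LossVec(h_\vp)\big)\cdot(\vp'-\vp)=\LossVec(h_\vp)\cdot\mPhi(\vp'-\vp)=\LossVec(h_\vp)\cdot(\vp'-\vp)$. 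Hence the displayed bound holds verbatim with $\LossVec(h_\vp)$ replaced by $\mPhi\LossVec(h_\vp)$, establishing the claim. (With the accuracy parametrization $\acc_k=1-\Loss_k$ and $\mPhi\vone=\vzero$ this is the same statement as ``the gradient over the simplex of $\acc_\vp(h_\vp)$ is the projected vector of marginal accuracies,'' the identity invoked in the proof of Theorem~\ref{thm:optimal_learner}.)

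I do not expect a genuine obstacle; the argument is a one-line envelope inequality plus a projection check. The two points needing care are (i) the orientation convention --- ``subgradient of the convex $-l^*$'' and ``supergradient of the concave $l^*$'' differ by a sign, so one should fix one convention and apply the projection step to that vector --- and (ii) the standing assumptions that make $\LossVec(h_\vp)$ well defined, namely that the oracle minimizer $h_\vp$ exists and that ties in $\argmin_{h}\Loss_\vp(h)$ are broken consistently across $\vp$. Notably, no continuity of the map $h\mapsto\LossVec(h)$ is required, since the argument only ever evaluates the single fixed classifier $h_\vp$ and compares it against the optimum at the competing state $\vp'$.
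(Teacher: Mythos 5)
Your proposal is correct and follows essentially the same route as the paper's proof: use optimality of $h_\vp$ to get $l^*(\vp)=\vp\cdot\LossVec(h_\vp)$, use the definition of $l^*$ as a minimum to get $l^*(\vp')\le\vp'\cdot\LossVec(h_\vp)$, and note that the orthogonal projection $\mPhi$ leaves inner products with tangent directions $\vp'-\vp$ unchanged. Your explicit care with the sign convention is warranted, since the paper's own proof likewise concludes that the \emph{negated} projected loss vector is the subgradient of the convex function $-l^*$, matching what you derive.
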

\begin{proof}
Let $h_\vp \in \argmin_{h\in\Hypotheses} \Loss_\vp(h)$, and let $\vp,\vp'\in\DistOverGroups$.
We need to show that $-l^*(\vp)+\LossVec(h_\vp)\cdot(\vp'-\vp)\le -l^*(\vp')$.

By \cref{claim:fixed_h_loss_convex_combination}, it holds that 
$\Loss_{\vp'}(h_\vp)= \sum_k p'_k \Loss_k(h_\vp)$. By algebraic manipulation we obtain:
\begin{equation}
\label{eq:subgradient_linear_loss_as_tangent_plane}
\begin{aligned}
\Loss_{\vp'}(h_\vp)
&= \LossVec(h_\vp)\cdot\vp'
\\
&= 
\overbrace{
\underbrace{\LossVec(h_\vp)\cdot\vp}_{=l^*(\vp)}
-\LossVec(h_\vp)\cdot\vp 
}^{=0}
+ \LossVec(h_\vp)\cdot\vp'
\\
&=l^*(\vp)
+ \LossVec(h_\vp)\cdot\left(\vp'-\vp\right)
\end{aligned}
\end{equation}
As $\vp'-\vp\in T\DistOverGroups$, it is not affected by the orthogonal projection $\mPhi:\Reals^K\to T\DistOverGroups$, it holds that:
\begin{equation}
\label{eq:subgradient_projection_does_not_affect_difference}
\mPhi\left(\vp'-\vp\right)=\vp'-\vp
\end{equation}
Since $\mPhi$ is an orthogonal projection, it holds that $\mPhi^T=\mPhi$. In addition, for any vector $\vv$ it holds that:
\begin{equation}
\label{eq:subgradient_projection_transpose}
\vv^T\mPhi^T=\left(\mPhi\vv\right)^T
\end{equation}
Combining equations (\ref{eq:subgradient_linear_loss_as_tangent_plane}, \ref{eq:subgradient_projection_does_not_affect_difference}, \ref{eq:subgradient_projection_transpose}), we obtain:
$$
\mPhi \Loss_{\vp'}(h_\vp) = 
l^*(\vp)
+ \mPhi \LossVec(h_\vp)\cdot\left(\vp'-\vp\right)
$$
By \cref{claim:optimal_loss_is_concave}, it holds that $\Loss_{\vp'}(h_\vp) \ge l^*(\vp')$, and therefore:
$$
l^*(\vp)
+ \mPhi \LossVec(h_\vp)\cdot\left(\vp'-\vp\right)
\ge
l^*(\vp')
$$
$l^*(\vp)$ is concave, and therefore its negation is convex.
Multiplying both sides by $-1$, we obtain that $-\mPhi\LossVec(\vp)$ is a subgradient of $-l^*$ at point $\vp$, as required.
\end{proof}

\begin{proposition}
\label{claim:gradient_of_optimal_loss_function}
Assume $l^*(\vp)$ is differentiable, and let $h_\vp\in\argmin_{h\in\Hypotheses}\Loss_\vp(h)$.
The gradient of $-l^*(\vp)$ satisfies:
$$
\nabla \left(-l^*(\vp)\right) = -\mPhi \LossVec(h_\vp)
$$
\end{proposition}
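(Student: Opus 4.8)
The plan is to derive the identity from two ingredients already in hand: that $-l^*$ is convex with $-\mPhi\LossVec(h_\vp)$ among its subgradients, and the elementary fact that a convex function which is differentiable at a point has a subdifferential there consisting of exactly one vector, namely its gradient.

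First I would record that by \Cref{claim:optimal_loss_is_concave} the map $l^*$ is concave, hence $-l^*$ is convex over $\DistOverGroups$, and that by \Cref{claim:projected_subgradient_of_optimal_loss_fuction} the projected loss vector $-\mPhi\LossVec(h_\vp)$ is a subgradient of $-l^*$ at $\vp$ in the sense of \Cref{def:convex_function_tangent_plane}, i.e.\ it lies in $\partial(-l^*(\vp))$. Since $\mPhi$ already maps into $T\DistOverGroups$, this is a bona fide element of the subdifferential as defined, so no further projection of the eventual gradient will be needed.

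Next I would invoke the standard convex-analysis fact \citep[Theorem~25.1]{rockafellar1970convex}: if a convex function is differentiable at a point of the relative interior of its effective domain, then its subdifferential at that point is the singleton consisting of its gradient. Applying this to $-l^*$, which by hypothesis is differentiable at $\vp$ --- with ``gradient over the simplex'' understood, consistently with \Cref{def:convex_function_tangent_plane}, as the unique vector in $T\DistOverGroups$ representing its differential along the affine hull of $\DistOverGroups$ --- yields $\partial(-l^*(\vp)) = \Set{\nabla(-l^*(\vp))}$. Combining with the previous step forces $\nabla(-l^*(\vp)) = -\mPhi\LossVec(h_\vp)$, which is the assertion; equivalently $\nabla l^*(\vp) = \mPhi\LossVec(h_\vp)$, which specialized to accuracy ($\Loss_k(h)=-\acc_k(h)$) reads $\nabla\acc_\vp(h_\vp) = \mPhi\,\mathbf{acc}(h_\vp)$ --- the identification of $f(\vp)=\acc_\vp(h_\vp)$ as a potential function used downstream in \Cref{thm:optimal_learner}.

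The one point to handle with care --- and the main, if mild, obstacle --- is that the singleton-subdifferential theorem concerns the relative interior of the domain, so on a proper face of the simplex (some coordinate of $\vp$ equal to zero) one should either run the argument on the sub-simplex spanned by $\support(\vp)$, or simply lean on the blanket differentiability hypothesis, which already rules out the kinks that the pointwise-minimum structure of $l^*$ would otherwise create. Everything else is bookkeeping: matching the sign convention of \Cref{claim:projected_subgradient_of_optimal_loss_fuction} and noting that the orthogonal projection $\mPhi$ is absorbed into the pairing against tangent directions $\vp'-\vp$.
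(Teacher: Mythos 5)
Your proposal is correct and follows essentially the same route as the paper: combine the convexity of $-l^*$ and the subgradient membership from \Cref{claim:projected_subgradient_of_optimal_loss_fuction} with the standard fact that a convex function differentiable at a point has a singleton subdifferential equal to its gradient. Your extra care about the relative interior and the sign convention is a welcome refinement but does not change the argument.
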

\begin{proof}
Any convex differentiable function $f$ satisfies:
$$
\partial f(\vp) = \Set{\nabla f(\vp)}
$$
By \cref{claim:projected_subgradient_of_optimal_loss_fuction}, it holds that $-\mPhi\LossVec(h_\vp)\in\partial\left(-l^*(p)\right)$, and therefore $\nabla \left(-l^*(\vp)\right) = -\mPhi\LossVec(h_\vp)$.
\end{proof}

\begin{lemma}
\label{lemma:optimal_predictor_game_is_potential}
Let $\Hypotheses\subseteq \Labels^\Features$ and let $\Loss:\Labels^2\to\Reals$. For $\vp\in\DistOverGroups$, denote $h_\vp=\argmin_{h\in\Hypotheses} \Loss_\vp(h)$. If $\Loss_\vp(h_\vp)$ is a continuously differentiable function of $\vp$, then game $F_k(\vp)=-\Loss_k(h_\vp)$ is a potential game.
\end{lemma}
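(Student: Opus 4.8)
The plan is to exhibit an explicit potential function, namely $f(\vp) = -l^*(\vp) = -\Loss_\vp(h_\vp)$ with $l^*$ as in \Cref{def:optimal_loss_function}, and to verify that its simplex gradient coincides with the projected payoff vector $\mPhi F(\vp)$, which is precisely Sandholm's characterization of a potential game over the simplex.

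First I would record that, by the definition of $h_\vp$ as a minimizer of $\Loss_\vp(\cdot)$, we have $l^*(\vp) = \min_{h\in\Hypotheses}\Loss_\vp(h) = \Loss_\vp(h_\vp)$; hence the hypothesis that $\vp\mapsto\Loss_\vp(h_\vp)$ is continuously differentiable is exactly the statement that $l^*$, and therefore $f=-l^*$, is $C^1$ on $\DistOverGroups$. Second, since $l^*$ is differentiable, \Cref{claim:gradient_of_optimal_loss_function} applies (its proof also uses that $-l^*$ is convex, which is \Cref{claim:optimal_loss_is_concave}), yielding $\nabla f(\vp) = \nabla\bigl(-l^*(\vp)\bigr) = -\mPhi\LossVec(h_\vp)$. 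Third, I would observe that the payoff vector of the game in question is $F(\vp) = \bigl(-\Loss_1(h_\vp),\dots,-\Loss_K(h_\vp)\bigr) = -\LossVec(h_\vp)$, so that $\mPhi F(\vp) = -\mPhi\LossVec(h_\vp) = \nabla f(\vp)$. By the definition of a potential game (\citep[Section 3.2]{sandholm2010population}), a population game $F$ is a potential game whenever there exists a $C^1$ function $f:\DistOverGroups\to\Reals$ with $\nabla f = \mPhi F$; we have just exhibited such an $f$, so $F_k(\vp)=-\Loss_k(h_\vp)$ is a potential game with potential $f(\vp)=-\Loss_\vp(h_\vp)$.

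The step I would be most careful about is the bookkeeping with the projection $\mPhi$: the loss vector $\LossVec(h_\vp)$ need not be tangent to the simplex (its coordinates need not sum to zero), so the potential relation holds only \emph{after} projecting onto $T\DistOverGroups$ — this is exactly how potential games over the simplex are defined in the reference, and I would state it explicitly rather than write "$\nabla f = F$" verbatim. A secondary point worth a short remark is that differentiability of $l^*$ forces $\mPhi\LossVec(h_\vp)$ to be a singleton-valued subdifferential (by \Cref{claim:projected_subgradient_of_optimal_loss_fuction}), hence independent of which minimizer $h_\vp$ is selected; this keeps the game $F_k(\vp)=-\Loss_k(h_\vp)$ well-posed up to the projection regardless of the tie-breaking rule, and is the only place the ``consistent tie-breaking'' convention would otherwise be invoked.
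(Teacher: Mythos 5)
Your proposal is correct and follows essentially the same route as the paper: it takes $f(\vp)=-l^*(\vp)$ as the potential, invokes \Cref{claim:gradient_of_optimal_loss_function} (itself resting on \Cref{claim:optimal_loss_is_concave,claim:projected_subgradient_of_optimal_loss_fuction}), and matches the resulting gradient with the game's payoffs. Your explicit handling of the projection $\mPhi$ (requiring $\nabla f=\mPhi F$ rather than $\nabla f=F$) and the remark on tie-breaking are welcome refinements of bookkeeping the paper glosses over, but they do not change the argument.
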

\begin{proof}
Take the function $f(\vp)=-l^*(\vp)$. By \cref{claim:gradient_of_optimal_loss_function}, the gradient of $f(\vp)$ satisfies:
$$
\nabla f(\vp)=-\mPhi\LossVec(h_\vp)=F(\vp)
$$
And therefore $F$ is a potential game.
\end{proof}

We say that a set of groups is \emph{oracle-equivalent} if they share a common oracle classifier:
\begin{definition}[Oracle-equivalent groups]
\label{def:identical_optimality}
A set of groups $K^*\subseteq K$ is \emph{oracle-equivalent} if there exists
$h\in\bigcap_{k\in K^*}\argmin_{h\in\Hypotheses} \Loss_{D_{k}}(h)$
which satisfies $\Loss_{k_1}(h)=\Loss_{k_2}(h)$ for all $k_1,k_2\in K'$.
\end{definition}

\begin{proposition}
\label{prop:identical_optimality_constant_optimal_loss}
Two groups $k,k'\in [K]$ are oracle-equivalent if any only if the expected optimal loss of their convex combination $l^*\left((1-\alpha)\SingletonDistribution_{k}+\alpha \SingletonDistribution_{k'}\right)$ is constant for all $\alpha\in[0,1]$.
\end{proposition}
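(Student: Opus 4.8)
The plan is to study the restriction of the optimal-loss function $l^*$ to the simplex edge joining $\SingletonDistribution_k$ and $\SingletonDistribution_{k'}$, i.e. the one-variable function $g(\alpha) = l^*\!\big((1-\alpha)\SingletonDistribution_k + \alpha\SingletonDistribution_{k'}\big)$, and to push everything through two facts already in hand: $l^*$ is concave (\Cref{claim:optimal_loss_is_concave}), and the expected loss of any fixed hypothesis is linear in $\vp$ (\Cref{claim:fixed_h_loss_convex_combination}). Note that $g(0) = l^*(\SingletonDistribution_k) = \min_{h}\Loss_k(h)$ and $g(1) = \min_h \Loss_{k'}(h)$.

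For the direction ``oracle-equivalent $\Rightarrow$ $g$ constant'', I would take a witness hypothesis $h$: it minimizes both $\Loss_{D_k}$ and $\Loss_{D_{k'}}$, and $\Loss_k(h) = \Loss_{k'}(h) =: c$. Then $g(0) = \Loss_k(h) = c$ and $g(1) = c$. By linearity, $\Loss_{(1-\alpha)\SingletonDistribution_k + \alpha\SingletonDistribution_{k'}}(h) = (1-\alpha)c + \alpha c = c$ for every $\alpha$, so $g(\alpha) \le c$ since $l^*$ is a pointwise minimum over $\Hypotheses$. In the other direction, concavity of $g$ gives $g(\alpha) \ge (1-\alpha)g(0) + \alpha g(1) = c$. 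Hence $g \equiv c$.

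For the converse ``$g$ constant $\Rightarrow$ oracle-equivalent'', suppose $g \equiv c$. The key step I would use is to take an oracle classifier $h^*$ for the midpoint mixture $\tfrac12(\SingletonDistribution_k + \SingletonDistribution_{k'})$, which exists by the standing assumption that $h^\opt$ exists. By linearity, $c = g(\tfrac12) = \Loss_{\frac12(\SingletonDistribution_k + \SingletonDistribution_{k'})}(h^*) = \tfrac12\Loss_k(h^*) + \tfrac12\Loss_{k'}(h^*)$. But $\Loss_k(h^*) \ge l^*(\SingletonDistribution_k) = g(0) = c$ and $\Loss_{k'}(h^*) \ge g(1) = c$, and since these two quantities average to $c$ they must both equal $c$. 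Thus $\Loss_k(h^*) = l^*(\SingletonDistribution_k)$ and $\Loss_{k'}(h^*) = l^*(\SingletonDistribution_{k'})$, so $h^*$ is an oracle classifier for both $D_k$ and $D_{k'}$ simultaneously, with $\Loss_k(h^*) = \Loss_{k'}(h^*) = c$; hence $\{k,k'\}$ is oracle-equivalent with witness $h^*$.

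The only real subtlety — the ``main obstacle'' — is this second direction: manufacturing a single hypothesis that is at once optimal for both endpoint distributions and has equal loss on them. The midpoint-oracle trick resolves it by sandwiching each per-group loss of $h^*$ between $c$ from below (each $l^*(\SingletonDistribution_\cdot) = c$) and $c$ from above (they average to the midpoint-mixture loss, which is $c$). One should just be careful to invoke the existence assumption for the oracle minimizer, and note that the argument goes through verbatim with any interior $\alpha$ in place of $\tfrac12$. No heavy computation is involved.
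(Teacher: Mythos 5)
Your proof is correct. The forward direction is essentially the paper's argument (the paper phrases the lower bound as ``a common optimizer of both groups is optimal for every mixture,'' which is the same content as your concavity step via \Cref{claim:optimal_loss_is_concave} and \Cref{claim:fixed_h_loss_convex_combination}). Where you genuinely diverge is the converse: the paper proves it by contraposition with a two-case split --- either $\argmin_h \Loss_{D_k}(h)\cap\argmin_h\Loss_{D_{k'}}(h)=\emptyset$, in which case the loss at the midpoint mixture is strictly above the average of the endpoint optima, or a common minimizer exists but with unequal per-group losses, in which case $g(0)\neq g(1)$ --- whereas you argue directly: take an oracle classifier $h^*$ at the midpoint, use linearity to write $c=\tfrac12\Loss_k(h^*)+\tfrac12\Loss_{k'}(h^*)$, and sandwich each term between $c$ (from $l^*(\SingletonDistribution_k)=l^*(\SingletonDistribution_{k'})=c$) and the average. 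Your version is case-free and simultaneously produces the common witness together with the equal-loss condition in one stroke, at the price of explicitly invoking existence of the oracle minimizer at an interior mixture; the paper's case analysis makes the two distinct failure modes of oracle-equivalence (no common minimizer vs.\ unequal optimal losses) more visible, which is conceptually aligned with how \Cref{def:identical_optimality} is stated. Both rest on the same standing assumption that minimizers exist, so neither is more demanding in hypotheses.
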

\begin{proof}
If the groups are oracle-equivalent, then by \Cref{def:identical_optimality} there exists
$$
h^*\in
\argmin_{h\in\Hypotheses} \Loss_{D_{k}}(h)
\cap
\argmin_{h\in\Hypotheses} \Loss_{D_{k'}}(h)
$$ 
which satisfies
$
    \Loss_{D_{k}}(h^*)
    =
    \Loss_{D_{k'}}(h^*)
    =
    \const
    $ for some constant.
Since $h^*$ is optimal for each group separately, it is also optimal for any convex combination of the groups $(1-\alpha)D_k+\alpha D_{k'}$, and the optimal loss therefore satisfies:
$$
l^*\left((1-\alpha)\SingletonDistribution_k+\alpha \SingletonDistribution_{k'}\right)
= 
(1-\alpha)\Loss_{D_{k}}(h^*)
+
\alpha\Loss_{D_{k'}}(h^*)
=\mathrm{const}
$$

Conversely, assume that $k,k'$ are not oracle-equivalent. We consider two cases:

\begin{enumeratecompact}
    \item 
If $
\argmin_{h\in\Hypotheses} \Loss_{D_{k}}(h)
\cap
\argmin_{h\in\Hypotheses} \Loss_{D_{k'}}(h)
= \emptyset
$,
then let $\vp=0.5 \SingletonDistribution_k + 0.5 \SingletonDistribution_{k'}$. By linearity of expectation, for any $h\in\Hypotheses$ it holds that:
$$
\Loss_{D_\vp}(h)=0.5 \Loss_{D_{k}}(h) + 0.5 \Loss_{D_{k'}}(h)
$$
By the assumption, no $h\in\Hypotheses$ simultaneously minimizes both terms. Therefore, the optimal loss at $\vp$ must satisfy either $l^*(\vp) > l^*(\SingletonDistribution_k)$ or $l^*(\vp) > l^*(\SingletonDistribution_{k'})$, and therefore $l^*\left((1-\alpha)\SingletonDistribution_{k}+\alpha \SingletonDistribution_{k'}\right)$ is not constant.

\item
Otherwise, if there exists a common optimizer $h^*\in
\argmin_{h\in\Hypotheses} \Loss_{D_{k}}(h)
\cap
\argmin_{h\in\Hypotheses} \Loss_{D_{k'}}(h)
$ but $
    \Loss_{D_{k}}(h^*)
    \neq
    \Loss_{D_{k'}}(h^*)
    $, then $l^*\left((1-\alpha)\SingletonDistribution_{k}+\alpha \SingletonDistribution_{k'}\right)$ is not constant when comparing the values corresponding to $\alpha=0$ and $\alpha=1$.
\end{enumeratecompact}
\end{proof}

\begin{proposition}
\label{prop:oracle_equivalent_optimal_loss_minimizers}
The local minimizers of the optimal expected loss function $l^*$ are convex combinations of oracle-equivalent groups.
\end{proposition}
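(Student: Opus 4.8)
The plan is to show that every local minimizer $\vp^*$ of $l^*$ is supported on an oracle-equivalent set of groups; since $\vp^*$ is then a convex combination of the corresponding vertices $\SingletonDistribution_k$, the statement follows. Write $S=\support(\vp^*)$. If $\Size{S}=1$ this is immediate, since a singleton is vacuously oracle-equivalent, so assume $\Size{S}\ge 2$. First I would pass to the game-theoretic picture: by \Cref{lemma:optimal_predictor_game_is_potential} the retraining game $F_k(\vp)=-\Loss_k(h_\vp)$ is a potential game with potential $f(\vp)=-l^*(\vp)$, so a local minimizer of $l^*$ is an extremum of $f$ and hence a Nash equilibrium of $F$ (using the equivalence between Nash equilibria and extrema of the potential recalled in \Cref{subsec:appendix_dynamics}).

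Let $h^*=h_{\vp^*}$ be the oracle selected at $\vp^*$ under the consistent tie-breaking rule. The Nash condition $\support(\vp^*)\subseteq\argmax_k F_k(\vp^*)$ forces all groups in $S$ to share a common fitness, i.e.\ there is a constant $c$ with $\Loss_k(h^*)=c$ for all $k\in S$; combined with \Cref{claim:fixed_h_loss_convex_combination} this gives $l^*(\vp^*)=\Loss_{\vp^*}(h^*)=\sum_{k\in S}p^*_k\Loss_k(h^*)=c$. It then suffices to show that $h^*$ is an oracle for each individual group $k\in S$, i.e.\ $\Loss_k(h^*)=l^*(\SingletonDistribution_k)$: granting this, $h^*$ witnesses that $S$ is oracle-equivalent in the sense of \Cref{def:identical_optimality}, and since $\vp^*$ is supported on $S$ the proof is complete.

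For that remaining claim I would use concavity of $l^*$ (\Cref{claim:optimal_loss_is_concave}). Since $p^*_j\in(0,1)$ for every $j\in S$, the point $\vp^*$ lies in the relative interior of the face $\DistOver{S}$ spanned by $\{\SingletonDistribution_k:k\in S\}$, and each $\SingletonDistribution_k$ with $k\in S$ lies in that face. A relative-interior local minimizer of a concave function on a convex set is a global maximizer of the function on the set (combine the concavity inequality with a perturbation from $\vp^*$ away from an arbitrary point of the face), so $l^*$ attains its maximum over $\DistOver{S}$ at $\vp^*$ and is constant in a relative neighborhood of it; in particular $l^*(\SingletonDistribution_k)\le c$, and also $l^*(\SingletonDistribution_k)\le\Loss_k(h^*)=c$. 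The reverse inequality $l^*(\SingletonDistribution_k)\ge c$ would follow from showing that $l^*$ is constant along the entire segment $[\vp^*,\SingletonDistribution_k]$ — equivalently, by restricting to the two-group subpopulation containing $k$ and applying \Cref{prop:identical_optimality_constant_optimal_loss}, that every edge inside $S$ carries a constant value of $l^*$ — whence $l^*(\SingletonDistribution_k)=c$ and $h^*$ minimizes $\Loss_k$.

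I expect this last step to be the main obstacle. Concavity and local minimality by themselves give only that $\vp^*$ globally maximizes $l^*$ on $\DistOver{S}$ and that $l^*$ is constant near $\vp^*$ — but a concave function may rise and then fall as one moves from $\vp^*$ toward a vertex of the face, so $l^*(\SingletonDistribution_k)=c$ does not follow from concavity alone. Closing the gap should use the finer structure of $l^*$ as a pointwise minimum of the linear functionals $\vp\mapsto\LossVec(h)\cdot\vp$ together with the consistency of the tie-breaking rule — so that $h^*$ stays an optimizer on a relative neighborhood of $\vp^*$ and, by optimality of $\vp^*$ over the face, along each edge through it — possibly under the understanding that the relevant local minimizers are non-degenerate, which is the regime in which the surrounding results (\Cref{thm:optimal_learner} and its extensions, and the dynamics in \Cref{subsec:appendix_dynamics}) are stated.
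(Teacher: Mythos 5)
Your proposal is incomplete, and you flag the incompleteness yourself: everything reduces to the inequality $l^*(\SingletonDistribution_k)\ge c$ for each $k\in\support(\vp^*)$ (equivalently, constancy of $l^*$ along the segments from $\vp^*$ to the supported vertices), and your argument delivers only the automatic direction $l^*(\SingletonDistribution_k)\le \Loss_k(h^*)=c$. The reduction up to that point is sound --- passing to the potential game, reading off equal losses $\Loss_k(h^*)=c$ on the support from the Nash condition, and observing that oracle-equivalence of $\support(\vp^*)$ witnessed by $h^*$ is exactly what remains --- but without the missing inequality nothing you wrote rules out that $l^*$ is flat in a neighborhood of $\vp^*$ while strictly smaller at the supported vertices, in which case $h^*$ is not an oracle for any individual group and the support is not oracle-equivalent. (One small correction: since, as you show, $\vp^*$ maximizes $l^*$ over the face $\DistOver{\support(\vp^*)}$, the function cannot ``rise and then fall'' on the way to a vertex; the only danger is that it falls, which is precisely the scenario your argument does not exclude.)

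For comparison, the paper's proof uses exactly the two ingredients you assembled --- concavity of $l^*$ (\Cref{claim:optimal_loss_is_concave}) and the pairwise characterization of oracle-equivalence through constancy of $l^*$ on edges (\Cref{prop:identical_optimality_constant_optimal_loss}) --- and it disposes of your sticking point in a single line, asserting that concavity forces the (local) minimizers to be convex combinations of vertices on which $l^*$ is constant; so you stopped precisely at the step the paper treats as immediate. Note that this step is clean for \emph{global} minimizers: there $l^*(\SingletonDistribution_k)\ge \min_{\vp} l^*(\vp)=c$, and combining with $c=\sum_k p^*_k\Loss_k(h^*)$ and $\Loss_k(h^*)\ge l^*(\SingletonDistribution_k)$, the averaging argument forces $\Loss_k(h^*)=l^*(\SingletonDistribution_k)=c$, which is exactly the oracle-equivalence you were after. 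For merely local, non-strict minimizers the step genuinely needs more than bare concavity (a concave $l^*$ can have an interior plateau of weak local minimizers whose value exceeds $l^*$ at the supported vertices), so one must either read ``local minimizer'' strictly --- in which case concavity places the minimizers at vertices and the claim is trivial --- or argue from the plateau structure of $l^*$ as a pointwise minimum of the linear maps $\vp\mapsto\LossVec(h)\cdot\vp$, which is the direction you gesture at. As submitted, however, the proof does not establish the proposition.
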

\begin{proof}
By \Cref{claim:optimal_loss_is_concave}, the optimal loss $l^*$ is concave over the simplex $\DistOverGroups$, and therefore its minimizers are convex combinations of simplex vertices for which the function is constant.
By \Cref{prop:identical_optimality_constant_optimal_loss}, such combinations correspond to sets of oracle-equivalent groups.
\end{proof}

\begin{proposition}
\label{prop:distinct_optimal_classifier_vertex_extrema}
When each group has a distinct optimal classifier, the optimal loss function $l^*(\vp)$ only has local minimizers at vertices of the simplex.
\end{proposition}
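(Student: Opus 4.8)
The plan is to combine two facts already in hand: the concavity of the optimal loss $l^*$ (\Cref{claim:optimal_loss_is_concave}) and the characterization of its local minimizers as convex combinations of oracle-equivalent groups (\Cref{prop:oracle_equivalent_optimal_loss_minimizers}). The assumption that each group has a distinct optimal classifier will be used to show that no two distinct groups can be oracle-equivalent, so the only oracle-equivalent subsets of $[K]$ are singletons $\{k\}$; feeding this into \Cref{prop:oracle_equivalent_optimal_loss_minimizers} then immediately forces every local minimizer of $l^*$ to be a vertex $\SingletonDistribution_k$.

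The key step is ruling out oracle-equivalence. Suppose, for contradiction, that distinct groups $k\neq k'$ were oracle-equivalent in the sense of \Cref{def:identical_optimality}. Then by \Cref{prop:identical_optimality_constant_optimal_loss} the map $\alpha\mapsto l^*\!\left((1-\alpha)\SingletonDistribution_k+\alpha\SingletonDistribution_{k'}\right)$ is constant on $[0,1]$, which by the definition of oracle-equivalence means there is a single hypothesis $h^*$ that minimizes both $\Loss_k$ and $\Loss_{k'}$ with equal loss values. Under the consistent tie-breaking convention assumed throughout \Cref{sec:retraining_optimal_classifiers}, such an $h^*$ would be selected as the oracle classifier both at $\SingletonDistribution_k$ and at $\SingletonDistribution_{k'}$, contradicting the hypothesis that the groups have distinct optimal classifiers. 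Hence no pair of distinct groups is oracle-equivalent, so the only oracle-equivalent sets are singletons.

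To conclude, apply \Cref{prop:oracle_equivalent_optimal_loss_minimizers}: every local minimizer of $l^*$ is a convex combination of the vertices of an oracle-equivalent set, which by the previous paragraph is a single vertex $\SingletonDistribution_k$. If one prefers a self-contained argument that does not cite \Cref{prop:oracle_equivalent_optimal_loss_minimizers}, the same conclusion follows directly from concavity: a local minimizer $\vp^*$ with $\Size{\support(\vp^*)}\ge 2$ lies in the relative interior of the face of $\DistOverGroups$ spanned by $\{\SingletonDistribution_k : k\in\support(\vp^*)\}$, and since a concave function with a local minimum in the relative interior of a convex set is constant on that set, $l^*$ would be constant along every edge of that face, contradicting \Cref{prop:identical_optimality_constant_optimal_loss} and the distinctness assumption. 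I expect the only genuine subtlety to be making the link between ``distinct optimal classifier'' and ``not oracle-equivalent'' airtight—this relies on the consistent tie-breaking convention—together with, in the self-contained route, the elementary lemma that a local minimum of a concave function in the relative interior of a convex set forces the function to be constant there, which is proved by perturbing $\vp^*$ slightly away from an arbitrary target point in the face and applying the defining concavity inequality against the local-minimum inequality.
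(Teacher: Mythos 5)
Your overall route is the same as the paper's: reduce the claim to ``under the distinctness assumption, the only oracle-equivalent sets are singletons'' and then invoke \Cref{prop:oracle_equivalent_optimal_loss_minimizers}; the paper's proof consists of exactly these two steps, asserted without further argument, so in outline your proposal matches it. The weak points are in the extra detail you add. First, the tie-breaking argument does not deliver what you claim: if groups $k\neq k'$ are oracle-equivalent via some common minimizer $h^*$, each group's $\argmin$ set may contain additional hypotheses, and a consistent tie-breaking rule (e.g., a fixed priority order applied to each $\argmin$ set separately) can select different classifiers at $\SingletonDistribution_k$ and $\SingletonDistribution_{k'}$ even though $h^*$ is shared by both sets. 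Hence ``the selected oracle classifiers are distinct'' does not contradict oracle-equivalence. The way the assumption is actually used, ``each group has a distinct oracle classifier'' should be read as the groups not sharing an optimal classifier (with equal loss), i.e., as the pairwise negation of \Cref{def:identical_optimality}; under that reading non-oracle-equivalence is immediate from the definition, and neither \Cref{prop:identical_optimality_constant_optimal_loss} nor tie-breaking is needed.

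Second, the lemma underlying your self-contained alternative---``a concave function with a local minimum in the relative interior of a convex set is constant on that set''---is false for weak local minima: take $l^*(t)=\min\{c,\;c-(t-b)\}$ on an edge with $0<b<1$ (a pointwise minimum of two affine functions, so of the right form); every point of $(0,b)$ is a weak local minimizer, yet $l^*$ is not constant on the edge. What concavity actually gives is that a local minimizer in the relative interior of a face is a global \emph{maximizer} of $l^*$ on that face, hence $l^*$ is locally constant around it; to get a vertex-only conclusion along this route you must either restrict to strict local minimizers (a concave function admits no strict local minimizer in the relative interior of any face, which is the clean two-line version) or accept the same weak-versus-strict looseness already present in the paper's \Cref{prop:oracle_equivalent_optimal_loss_minimizers} and \Cref{claim:optimal_loss_is_concave}-based reasoning. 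Since your primary argument goes through \Cref{prop:oracle_equivalent_optimal_loss_minimizers} anyway, this flaw only affects the backup route, but as stated that route does not stand on its own.
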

\begin{proof}
When each group has a distinct optimal classifier, the only sets of oracle-equivalent groups are the singleton sets $\Set{1},\dots,\Set{K}$. The claim then follows from \Cref{prop:oracle_equivalent_optimal_loss_minimizers}.
\end{proof}

\begin{proof}[Proof of \Cref{thm:optimal_learner}]
By applying \Cref{lemma:optimal_predictor_game_is_potential} on the loss function $\Loss_k(h)=-\acc_k(h)$, we obtain that $F(\vp)$ is a potential game with potential function $f(\vp)=\acc_\vp(h_\vp)$.
By \Cref{claim:optimal_loss_is_concave}, we obtain that $f(\vp)$ is convex over the simplex.
When no two groups share an optimal classifier \Cref{prop:distinct_optimal_classifier_vertex_extrema} holds, and local maximizers of $f$ exist and are located at the vertices of the simplex.
By \citep[Theorem 8.2.1]{sandholm2010population}, maximizers of $f(\vp)$ correspond to evolutionarily stable Nash equilibria of $F(\vp)$, and therefore any stable equilibrium $\vp^*$ satisfies $\Size{\support(\vp^*)}=1$. Additionally, we note that coexistence equilibria may exist as $f(\vp)$ may have local minimizers. This proves claims 2,3,4 (Stability, Exclusion, Coexistence) in the statement of the theorem. Finally, for claim 1 (Accuracy), by \citep[Lemma 7.1.1]{sandholm2010population} the potential function $f(\vp)$ is a Lyapunov function for any evolutionary dynamic which satisfies the positive correlation property, and therefore 
$
\tfrac{\mathrm d}{\mathrm dt} f(\vp)
=
\tfrac{\mathrm d}{\mathrm dt}
\acc_{\vp}(h_{\vp}) \ge 0
$.
\end{proof}

\subsubsection{Equivalent groups}
\label{subsec:appendix_equivalent_groups}

\begin{figure*}
    \centering
    \includegraphics[width=0.9\textwidth]{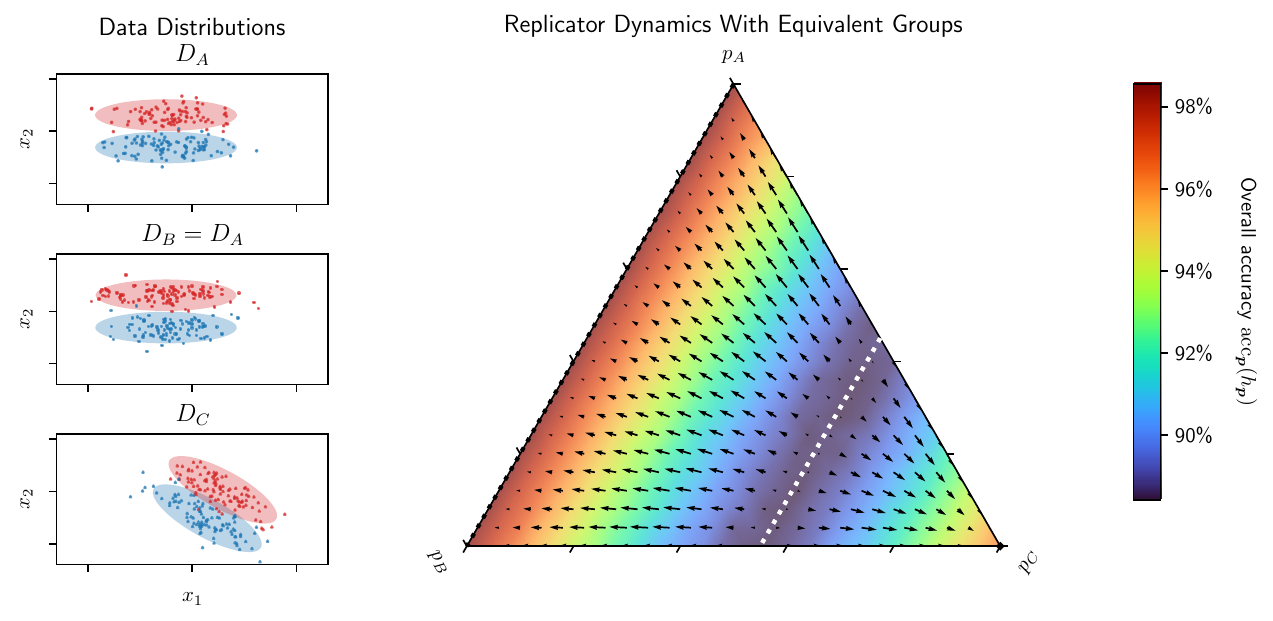}
    \vspace{-1em}
    \caption{
    Evolutionary dynamics of an oracle classifier in a three-group setting with equivalent groups (\Cref{subsec:appendix_equivalent_groups}).
    \textbf{(Left)} Data distributions of groups $\Set{A,B,C}$, where $D_B=D_A$. Groups $A,B$ are oracle-equivalent, and group $C$ is not oracle-equivalent to $A$ or $B$  (\Cref{def:identical_optimality}).
    \textbf{(Right)} Phase diagram of the corresponding replicator dynamics over the probability simplex (cf. \Cref{fig:ternary}). The dynamics has three \emph{sets} of fixed points, corresponding to \Cref{thm:optimal_learner_equivalent_groups}: The convex hull of the singleton distributions corresponding to $A,B$ (stable), the singleton distribution of group $C$ (stable),
    and the convex hull of $\vp\approx(0.45,0,0.55)$ and $\vp'\approx(0,0.45,0.55)$ (unstable).
    }
\label{fig:ternary_equivalent}
\end{figure*}

Share the same optimal classifier and achieve the same optimal accuracy (formally satisfying \Cref{def:identical_optimality}), then an extended version of \Cref{thm:optimal_learner} holds:

\begin{theorem}[Oracle classifier with equivalent groups; Extension of \Cref{thm:optimal_learner}]
\label{thm:optimal_learner_equivalent_groups}
Let $\Hypotheses$ be a hypothesis class, and denote by $\Learner^\opt$ the oracle learning algorithm with respect to $\Hypotheses$.
Assume that
at each population state $\vp$
the oracle classifier $h_\vp\sim\Learner^\opt(D_\vp)$ is deployed. 
Then it holds that:
\begin{enumeratecompact}
    \item \textbf{Accuracy:} 
    Overall accuracy increases over time,
    $
    \tfrac{\mathrm d}{\mathrm dt}
    \acc_{\vp}(h_{\vp}) \ge 0
    $.
    \item \textbf{Stability:} A stable equilibrium always exists,
    and there can be multiple such equilibria.
    \item \textbf{Competitive exclusion:} 
    For all stable equilibria,
    $\support(\vp^*)$ is oracle-equivalent (Def.~\ref{def:identical_optimality}).%
    \item \textbf{Coexistence:}
    Equilibria with non-equivalent groups may exist,
    but are unstable.

\end{enumeratecompact}
\end{theorem}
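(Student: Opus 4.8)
The plan is to run the proof of \Cref{thm:optimal_learner} essentially verbatim, substituting the oracle-equivalent characterization of critical points for the distinct-classifier one. First I would apply \Cref{lemma:optimal_predictor_game_is_potential} to the loss $\Loss_k(h) = -\acc_k(h)$, obtaining that the game $F(\vp) = \big(\acc_1(h_\vp),\dots,\acc_K(h_\vp)\big)$ is a potential game whose potential is the overall accuracy $f(\vp) = \acc_\vp(h_\vp) = -l^*(\vp)$; by \Cref{claim:optimal_loss_is_concave}, $l^*$ is concave, hence $f$ is convex over the simplex. Claim~1 (Accuracy) then follows exactly as in \Cref{thm:optimal_learner}: the potential of a potential game is a Lyapunov function for any dynamic with positive correlation \citep[Lemma 7.1.1]{sandholm2010population}, so $\tfrac{\mathrm d}{\mathrm d t}\acc_\vp(h_\vp) = \tfrac{\mathrm d}{\mathrm d t} f(\vp) \ge 0$.

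For Claims~2 and~3, I would use that $f$ is continuous on the compact simplex and therefore attains a global maximum, and that by \citep[Theorem 8.2.1]{sandholm2010population} the stable equilibria of the induced dynamics are exactly the local maximizers of $f$; hence at least one stable equilibrium exists and in general there may be several (Claim~2). Equivalently, the stable equilibria are the local minimizers of $l^*$, and by \Cref{prop:oracle_equivalent_optimal_loss_minimizers} every such minimizer lies in the convex hull of some oracle-equivalent set of groups. Since $\support(\vp^*)$ is then contained in that set, and any subset of an oracle-equivalent set is itself oracle-equivalent (the same common classifier still witnesses it), $\support(\vp^*)$ is oracle-equivalent, which is Claim~3.

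For Claim~4, the instability half is the contrapositive of what we just showed: an equilibrium whose support is \emph{not} oracle-equivalent is not a local minimizer of $l^*$ by \Cref{prop:oracle_equivalent_optimal_loss_minimizers}, hence not a local maximizer of the convex potential $f$, hence not stable by \citep[Theorem 8.2.1]{sandholm2010population}. The ``may exist'' half is inherited from \Cref{thm:optimal_learner}: the convex potential $f$ can still have Nash (or restricted) equilibria in the relative interior of faces spanned by non-equivalent groups --- where, since such a face is non-constant for $l^*$ by the dichotomy of \Cref{prop:identical_optimality_constant_optimal_loss}, a critical point is a maximizer of $l^*$ over the face while $l^*$ strictly decreases toward some vertex of it --- and all such equilibria are unstable by the instability half.

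The main work is not in assembling the above but in the two ingredients it imports, \Cref{prop:identical_optimality_constant_optimal_loss} and \Cref{prop:oracle_equivalent_optimal_loss_minimizers}, which pin down the flat faces of the concave function $l^*$ as exactly the faces spanned by oracle-equivalent groups; granting those, the only subtlety internal to this proof is the bookkeeping in Claim~3 --- passing from ``$\vp^*$ lies in the convex hull of an oracle-equivalent set'' to ``$\support(\vp^*)$ is oracle-equivalent'' via closure under subsets --- together with checking that the instability argument in Claim~4 also covers the extra rest points that imitative (e.g.\ replicator) dynamics introduce at restricted equilibria, for which we invoke that non-Nash restricted equilibria, and Nash equilibria that fail to be local maxima of the potential, are unstable \citep[Sections 8.1, 8.3]{sandholm2010population}.
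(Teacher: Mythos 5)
Your proposal is correct and follows essentially the same route as the paper: the paper's own proof is a short "in analogy to Theorem~\ref{thm:optimal_learner}" sketch that keeps the potential $f(\vp)=\acc_\vp(h_\vp)$ and invokes Proposition~\ref{prop:oracle_equivalent_optimal_loss_minimizers} to identify the (possibly weak) maximizers with convex combinations of oracle-equivalent groups, which is exactly the argument you spell out. Your additional bookkeeping (closure of oracle-equivalence under subsets, and the handling of restricted equilibria under imitative dynamics) just fills in details the paper leaves implicit.
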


\begin{proof}
In analogy to the proof of \Cref{thm:optimal_learner}. The potential function remains $f(\vp)=\acc_p(h_\vp)$, and so does the relation between extrema of the potential and equilibria of the dynamics. However, as groups are possibly equivalent, by \Cref{prop:oracle_equivalent_optimal_loss_minimizers} we obtain that $f(\vp)$ may have additional weak maximizers in convex combinations of oracle-equivalent groups.
\end{proof}

\subsubsection{Heterogeneous fitness}
\label{subsec:appendix_retention_functions}

Here we extend \Cref{thm:optimal_learner} to settings where retention functions $\Retention_k$ vary across groups -- See \Cref{thm:optimal_learner_heterogeneous_retention} below. For $K=2$, we show that a modified version of the theorem holds for any pair of continuous, strictly increasing retention functions. For $K>2$, a similar conclusion follows under positive affine retention functions $\Retention_k(x)=a x - b_k$ with $a,b_k\ge0$. In each case, we prove the extended theorem by constructing a transformed potential function and showing that it retains the properties required for convergence and equilibrium structure guarantees.

\paragraph{Continuous increasing retention functions ($K=2$).} Consider a two-group setting, and denote $[K]=\Set{A,B}$. Let $\Learner^\opt$ be an oracle classification algorithm. Denote $\vp=(p,1-p)$, and
denote $h_p\sim\Learner^\opt\left((p,1-p)\right)$.

\begin{proposition}
\label{prop:two_action_optimal_classifier_monotone_fitness}
For any oracle algorithm, it holds that $\Loss_A(h_p)$ is decreasing in $p$, and $\Loss_B(h_p)$ is increasing in $p$.
\end{proposition}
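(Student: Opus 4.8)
The plan is to prove this by a revealed-preference (two-inequality) argument that uses only the defining optimality of the oracle classifier at each mixture, with no appeal to differentiability. Write $a(p) = \Loss_A(h_p)$ and $b(p) = \Loss_B(h_p)$, and recall that $\Loss_{(p,1-p)}(h) = p\,\Loss_A(h) + (1-p)\,\Loss_B(h)$ by \Cref{claim:fixed_h_loss_convex_combination}. Fix $0 \le p_1 < p_2 \le 1$ and abbreviate $h_1 = h_{p_1}$, $h_2 = h_{p_2}$. Optimality of $h_2$ at $p_2$ gives $\Loss_{p_2}(h_2) \le \Loss_{p_2}(h_1)$, which rearranges to
\[
p_2\big(a(p_2) - a(p_1)\big) \;\le\; (1-p_2)\big(b(p_1) - b(p_2)\big),
\]
and optimality of $h_1$ at $p_1$ gives $\Loss_{p_1}(h_1) \le \Loss_{p_1}(h_2)$, which rearranges to
\[
(1-p_1)\big(b(p_1) - b(p_2)\big) \;\le\; p_1\big(a(p_2) - a(p_1)\big).
\]

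From here I would first dispatch the boundary cases directly: when $p_2 = 1$ the first inequality reads $a(1) - a(p_1) \le 0$; when $p_1 = 0$ the second reads $b(0) - b(p_2) \le 0$, and substituting this into the first yields $\tfrac{p_2}{1-p_2}\big(a(p_2) - a(0)\big) \le 0$, hence $a(p_2) \le a(0)$. For the interior case $0 < p_1 < p_2 < 1$, divide the first inequality by $1-p_2 > 0$ and the second by $1-p_1 > 0$ and chain them to get
\[
\frac{p_2}{1-p_2}\big(a(p_2) - a(p_1)\big) \;\le\; b(p_1) - b(p_2) \;\le\; \frac{p_1}{1-p_1}\big(a(p_2) - a(p_1)\big).
\]
Since $t \mapsto t/(1-t)$ is strictly increasing on $(0,1)$ and $p_1 < p_2$, we have $\tfrac{p_1}{1-p_1} < \tfrac{p_2}{1-p_2}$; if $a(p_2) - a(p_1)$ were strictly positive, dividing the chained inequality by it would force $\tfrac{p_2}{1-p_2} \le \tfrac{p_1}{1-p_1}$, a contradiction. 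Hence $a(p_2) \le a(p_1)$, so $a$ is non-increasing on $[0,1]$, which is the first claim. The second claim follows by symmetry: setting $q = 1-p$, the mixture $\Loss_{(p,1-p)}$ is the weight-$q$ mixture with the roles of $A$ and $B$ interchanged, and $h_p$ is its oracle classifier, so applying the result just proved to this relabeled family shows $\Loss_B(h_p)$ is non-increasing in $q$, i.e. non-decreasing in $p$.

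I do not expect a real obstacle here; the one subtlety worth flagging is that the natural-looking shortcut fails. Differentiating the concave optimal-loss function $l^*(p)$ (concavity from \Cref{claim:optimal_loss_is_concave}) and applying an envelope argument only yields that $a(p) - b(p)$ is monotone, which is strictly weaker than the per-group monotonicity we need — it is precisely the pair of optimality inequalities above, used together, that separates the two terms. A second point to keep in mind is that $h_p$ need not be unique; but since every step invokes only an inequality satisfied by \emph{any} minimizer of $\Loss_{(p,1-p)}$, the argument is valid for any oracle algorithm and any consistent tie-breaking convention, and the continuity of the fitness functions is not even needed for the monotonicity conclusion.
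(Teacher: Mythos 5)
Your proof is correct and takes essentially the same route as the paper's: both rest on the same pair of revealed-preference inequalities obtained from the optimality of $h_{p_1}$ at $p_1$ and of $h_{p_2}$ at $p_2$. The paper simply adds the two inequalities and closes with a direct contradiction to optimality, whereas you divide by $1-p_1$, $1-p_2$ and chain them (treating the boundary weights separately); this is a cosmetic algebraic difference, not a different argument.
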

\begin{proof}
Assume by contradiction that $\Loss_A(h_p)$ is not decreasing, and therefore there exist $p'>p$ such that $\Loss_A(h_p)< \Loss_A(h_{p'})$. 
The proof relies on a swapping argument. 
Denote:
\begin{align*}
a &= \Loss_A(h_p) \\
a' &= \Loss_A(h_{p'}) \\
b &= \Loss_B(h_p) \\
b' &= \Loss_B(h_{p'})
\end{align*}
By \Cref{claim:fixed_h_loss_convex_combination}, the average for loss for any fixed classifier is a convex combination of marginal loss functions. 
$h_p$ is optimal for $p$, and therefore:
$$
\underbrace{p a +(1-p) b}_{=\Loss_\vp(h_p)} \le \underbrace{p a'+ (1-p) b'}_{=\Loss_\vp(h_{p'})}
$$
Similarly, $h_{p'}$ is optimal for $p'$, and therefore:
$$
p' a'+(1-p')b' \le p' a+ (1-p') b
$$
Adding the two inequalities:
$$
(p'-p)(a' - a) \le (p'-p)(b' - b)
$$
Since $p'-p>0$, we divide by $(p'-p)$ to obtain:
$$
a' -a \le b' - b
$$
$a<a'$ by the contradiction assumption, and therefore by it holds that $b'-b>0$. But then:
$$
p' a +(1-p') b < p'a'+(1-p') b'
$$
In contradiction to the optimality of $h_{p'}$. Therefore, it must hold that $a\le a'$. Similarly, assume by contradiction that $b'>b$ and obtain a similar contradiction. \Cref{fig:optimal_classifier_potential} provides additional intuition.
\end{proof}

\begin{proposition}[Potential function of a two-group population game; {\citep[Example 3.2.3]{sandholm2010population}}]
Any two-group population game is a potential game, and the following is a potential function:
\begin{equation}
\label{eq:two_group_potential}
f((p,1-p))=\int_0^p\left(F_A((x,1-x))-F_B((x,1-x))\right)\mathrm d x
\end{equation}
\end{proposition}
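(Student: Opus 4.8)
Here the state space is the one-dimensional simplex $\DistOverGroups$ with $K=2$, which I would parametrize by $p\in[0,1]$ via $\vp=(p,1-p)$; its tangent space $T\DistOverGroups=\Set{z\in\Reals^2\mid z_A+z_B=0}$ is spanned by the single vector $\ve=(1,-1)$. The conceptual reason the claim holds is that $\DistOverGroups$ is (the affine image of) a line segment and so contains no nontrivial closed curves; hence the line integral of $F$ along any path is automatically path-independent, which is exactly the integrability condition characterizing potential games, and it identifies the potential up to an additive constant with a line integral from a fixed basepoint. The plan is to make this concrete by taking the basepoint $(0,1)$, writing the line integral along the natural parametrization $x\mapsto(x,1-x)$, observing that it equals the integral in \eqref{eq:two_group_potential}, and then verifying directly that this scalar function is a potential in the sense of \Cref{subsec:population_game_preliminaries}.

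Concretely, I would set $g(p)=\int_0^p\bigl(F_A((x,1-x))-F_B((x,1-x))\bigr)\,\mathrm d x$ and define $f(\vp)=g(p)$. Population games are assumed continuous, so the integrand $x\mapsto F_A((x,1-x))-F_B((x,1-x))$ is continuous on $[0,1]$, and by the fundamental theorem of calculus $g$ is continuously differentiable on $[0,1]$ (with one-sided derivatives at the endpoints) with $g'(p)=F_A(\vp)-F_B(\vp)$.

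It then remains to verify that $f$ is a potential for $F$, i.e.\ that the differential of $f$ restricted to the tangent space is given by pairing with $F$: $\mathrm d f_\vp(z)=F(\vp)\cdot z$ for every $z\in T\DistOverGroups$. Since $T\DistOverGroups$ is spanned by $\ve=(1,-1)$, it suffices to check this for $z=\ve$. The curve $t\mapsto\vp+t\ve=(p+t,\,1-p-t)$ stays in $\DistOverGroups$ for small $|t|$ and satisfies $f(\vp+t\ve)=g(p+t)$, so $\mathrm d f_\vp(\ve)=g'(p)=F_A(\vp)-F_B(\vp)=F(\vp)\cdot\ve$, as required; hence $F$ is a potential game with potential $f$. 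I expect the main (and essentially only) obstacle to be a matter of conventions: one must unwind the definition of ``gradient of $f$ over the simplex'' from \Cref{subsec:population_game_preliminaries} into the tangent-space pairing used here (equivalently, $\mPhi F=\nabla f$, exactly as in \Cref{claim:projected_subgradient_of_optimal_loss_fuction}, using that $\ve$ is fixed by $\mPhi$), and observe that in the $K=2$ case testing against the single direction $\ve$ already pins down the simplex gradient. Continuity of the integrand, the fundamental-theorem step, and the behavior at the endpoints $p\in\{0,1\}$ are routine.
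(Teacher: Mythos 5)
Your verification is correct: with $F$ continuous, the fundamental theorem of calculus gives $g'(p)=F_A((p,1-p))-F_B((p,1-p))$, and since $T\DistOverGroups$ is one-dimensional for $K=2$ (spanned by $(1,-1)$), checking the directional derivative along that single direction indeed pins down the simplex gradient and shows $\nabla f=\mPhi F$, which is the rigorous form of the paper's (slightly informal) ``$F=\nabla f$'' definition. Note that the paper offers no proof of its own here --- it imports the statement directly from \citep[Example 3.2.3]{sandholm2010population} --- and your argument is essentially the standard one from that source: in a two-strategy game the integrability (path-independence) condition is vacuous, so the line integral from a basepoint is automatically a potential. So there is nothing to compare against beyond the citation; your write-up is a correct, self-contained substitute for it.
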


\begin{proposition}
\label{prop:monotone_two_function_retention_unimodal_potential}
Let $\tilde f(\vp)$ be a potential function for a two-group evolutionary prediction game induced by an oracle classifier with increasing retention functions. Then $\tilde f(\vp)$ doesn't have local maximizers in the interior of the simplex.
\end{proposition}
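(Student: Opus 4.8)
The plan is to reduce the statement to the canonical two-group potential and then show that potential is convex. Since the simplex $\DistOverGroups$ for $K=2$ is connected and one-dimensional, a potential of a given game is determined up to an additive constant, so $\tilde f$ and the canonical potential $f$ of \eqref{eq:two_group_potential} have exactly the same local maximizers; it therefore suffices to prove the claim for $f$. Because fitness functions are assumed continuous, $F_A-F_B$ is continuous, so by the fundamental theorem of calculus $f$ is $C^1$ on $[0,1]$ with derivative $f'(p)=F_A((p,1-p))-F_B((p,1-p))$.

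Next I would establish convexity of $f$ by showing $f'$ is non-decreasing. Write $F_k(\vp)=\Retention_k(\acc_k(h_\vp))=\Retention_k(1-\Loss_k(h_\vp))$, where $h_p\sim\Learner^\opt((p,1-p))$. By \Cref{prop:two_action_optimal_classifier_monotone_fitness}, $p\mapsto\Loss_A(h_p)$ is non-increasing while $p\mapsto\Loss_B(h_p)$ is non-decreasing, hence $p\mapsto\acc_A(h_p)$ is non-decreasing and $p\mapsto\acc_B(h_p)$ is non-increasing. Since each $\Retention_k$ is (continuous and) strictly increasing, composing with $\Retention_k$ preserves monotonicity, so $p\mapsto F_A((p,1-p))$ is non-decreasing and $p\mapsto F_B((p,1-p))$ is non-increasing. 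Therefore $f'(p)=F_A((p,1-p))-F_B((p,1-p))$ is non-decreasing on $[0,1]$, i.e. $f$ is convex. This is the only step where the retention functions enter, and it is exactly where strict monotonicity of $\Retention_k$ matters: it guarantees that routing the marginal accuracies through $\Retention_k$ cannot destroy the monotone structure already established in \Cref{prop:two_action_optimal_classifier_monotone_fitness}.

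Finally I would conclude from convexity. If some $p^*\in(0,1)$ were a local maximizer, then for all small $\epsilon>0$ convexity applied to the pair $p^*-\epsilon,\,p^*+\epsilon$ gives $f(p^*)\le\tfrac12 f(p^*-\epsilon)+\tfrac12 f(p^*+\epsilon)\le f(p^*)$, forcing $f(p^*-\epsilon)=f(p^*+\epsilon)=f(p^*)$; thus $f$ would be constant on a neighborhood of $p^*$, i.e. $p^*$ is not an isolated/strict local maximizer. The only remaining possibility is a nontrivial flat interval of $f$, which is precisely an interval on which $F_A\equiv F_B$; this is the degenerate ``weakly-equivalent'' analogue of the oracle-equivalent-groups case of \Cref{subsec:appendix_equivalent_groups}, and I would either exclude it under the standing distinctness/consistent tie-breaking assumptions or read ``local maximizer'' as ``strict local maximizer,'' matching its use elsewhere. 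Either way $f$, hence $\tilde f$, has no local maximizer in the interior of the simplex. I expect the only real obstacle to be this last bookkeeping point about flat pieces; the substantive content is carried entirely by \Cref{prop:two_action_optimal_classifier_monotone_fitness}, whose swapping argument already encodes why the two-group oracle game is pushed away from interior equilibria.
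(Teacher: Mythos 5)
Your proposal is correct and follows essentially the same route as the paper: both reduce to the one-dimensional potential of \eqref{eq:two_group_potential}, invoke \Cref{prop:two_action_optimal_classifier_monotone_fitness} plus monotonicity of the retention functions to show $F_A-F_B$ is non-decreasing, and conclude the potential is convex (the paper phrases this as the potential being decreasing, increasing, or unimodal), hence has no interior local maximizer. Your extra bookkeeping about flat intervals where $F_A\equiv F_B$ is a reasonable refinement of a point the paper glosses over, but it does not change the substance of the argument.
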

\begin{proof}
Denote the transformed fitness function by $\tilde F_A((p,1-p))=\Retention_A\left(\acc_A(h_p)\right)$, and similarly denote $\tilde F_B((p,1-p))=\Retention_B\left(\acc_B(h_p)\right)$.
Let $g(p)=\tilde F_A((p,1-p))-\tilde F_B((p,1-p))$, and denote by $\tilde f((p,1-p))$ the potential function for the transformed game, defined by \Cref{eq:two_group_potential}.
By \Cref{prop:two_action_optimal_classifier_monotone_fitness}, it holds that $\tilde F_A((p,1-p))$ is increasing, $\tilde F_B((p,1-p))$ is decreasing, and therefore $g(p)$ is monotonically increasing as a function of $p$. By \Cref{eq:two_group_potential}, the potential $\tilde f$ is decreasing if $g(p)\le 0$ for all $p$, increasing if $g(p)\ge0$ for all $p$, and unimodal (decreasing and then increasing) if $g(p)$ changes signs. Note that $g(p)$ may change signs at most once due to monotonicity. In all of these cases, $g(p)$ doesn't have a local maximizer at the interior of the simplex.
\end{proof}

\paragraph{Positive affine retention functions with group-wise shift ($K\ge2$).}
Let $\Retention_k(x)=a x-b_k$ for $a,b_k\ge 0$. We show that the game admits a transformed potential function:
\begin{proposition}
Let $\Learner^\opt$ be an oracle classification algorithm, and let $f(\vp)$ be the potential function for the game $F_k(\vp)=\acc_k(h_\vp)$. Then the game $\tilde F_k(\vp)=\Retention_k\left(\acc_k(h_\vp)\right)=a \acc_k(h_\vp)-b_k$ admits the potential function:
\begin{align*}
\tilde f(\vp)
&=a f(\vp)-\vb\cdot\vp
\end{align*}
where $\vb=(b_1,\dots,b_K)\in\NonNegativeReals^K$.
\end{proposition}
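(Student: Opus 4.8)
The plan is to verify directly that $\tilde f(\vp)=a\,f(\vp)-\vb\cdot\vp$ is a potential function for $\tilde F$, using linearity of the simplex gradient together with the structure already established for the unweighted oracle game. Recall that $f$ exists as a potential for $F_k(\vp)=\acc_k(h_\vp)$ by \Cref{lemma:optimal_predictor_game_is_potential}, and that by \Cref{claim:projected_subgradient_of_optimal_loss_fuction} its gradient over the simplex is $\nabla f(\vp)=\mPhi\big(\acc_1(h_\vp),\dots,\acc_K(h_\vp)\big)$ (and, absent differentiability of $l^*$, this projected accuracy vector is a subgradient of the convex function $f=-l^*$). The key observation is the decomposition $\tilde F(\vp)=a\,F(\vp)+G$, where $G=(G_1,\dots,G_K)$ is the constant game $G_k(\vp)\equiv -b_k$.

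I would then treat the two summands separately. First, scaling a potential game by the scalar $a$ scales its potential, so $a\,F$ has potential $a\,f$; this is immediate from linearity of $\nabla$ (or of the subdifferential). Second, the constant game $G$ is a potential game with linear potential $\vp\mapsto-\vb\cdot\vp$ --- exactly the fact used for the train-once game in the proof of \Cref{prop:train_once_general_loss} (Sandholm's characterization of constant games, whose simplex gradient is the constant vector $\mPhi\vb=\mPhi G$). Since a sum of potential games is a potential game whose potential is the sum of the individual potentials, we conclude that $a\,F+G$ has potential $a\,f+(-\vb\cdot\vp)=\tilde f$, which is the claim. Concretely, $\nabla\tilde f(\vp)=a\,\nabla f(\vp)-\mPhi\vb=\mPhi\big(a\,(\acc_1(h_\vp),\dots,\acc_K(h_\vp))-\vb\big)=\mPhi\,\tilde F(\vp)$.

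There is no substantive obstacle beyond bookkeeping: one must keep track of the orthogonal projection $\mPhi$ relating the $\Reals^K$-valued fitness vectors to the tangent space $T\DistOverGroups$, and, if $l^*$ is not differentiable, phrase the argument through subdifferentials --- using the sum rule $\partial\tilde f(\vp)=a\,\partial f(\vp)-\{\mPhi\vb\}$ (valid since the subtracted term is affine) to obtain $\mPhi\,\tilde F(\vp)\in\partial\tilde f(\vp)$. Finally, I would record, for use in the downstream analogue of \Cref{thm:optimal_learner}, that $\tilde f$ remains convex on the simplex: $a\,f$ is convex since $a\ge 0$ and $f$ is convex by \Cref{claim:optimal_loss_is_concave}, and subtracting the affine term $\vb\cdot\vp$ preserves convexity; hence the same vertex-maximizer reasoning applies, with survival governed by the maximizers of $\tilde f$ and the quantity increasing over time being $\tilde f(\vp)=\sum_k p_k\,\Retention_k(\acc_k(h_\vp))$ up to the additive/multiplicative constants.
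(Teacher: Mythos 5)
Your proposal is correct and follows essentially the same route as the paper: decompose $\tilde F$ into the scaled game $aF$ (potential $af$ by linearity of the gradient) plus the constant game $-\vb$ (linear potential $-\vb\cdot\vp$, as in Sandholm's characterization of constant games), then sum the potentials. The extra remarks on subdifferentials and preserved convexity are fine but not needed for the statement itself.
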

\begin{proof}
First, consider the game $F^{(a)}_k=a F_k(\vp)$. By linearity of the gradient operation, the population game $F^{(a)}(\vp)$ is a potential game with potential $f^{(a)}(\vp)=af(\vp)$. Then, consider the game $F^{(b)}_k(\vp)=-b_k$. The game $F^{(b)}(\vp)$ is a constant game, and therefore admits the linear potential function $f^{(b)}(\vp)=-\vb\cdot\vp$ (see \citep[Proposition 3.2.13]{sandholm2010population}). 
Finally, by \citep[Section 3.2.4]{sandholm2010population}, the game $\tilde F(\vp)=F^{(a)}(\vp)+F^{(b)}(\vp)$ is a potential game with the potential function: $$\tilde f(\vp)=f^{(a)}(\vp)+f^{(b)}(\vp)=a f(\vp)-\vb\cdot\vp$$ as required.
\end{proof}

\begin{proposition}
\label{prop:postive_affine_retention_convex_potential}
Let $f(\vp)$ be a potential function for an evolutionary prediction game induced by an oracle classifier and positive affine retention functions with group-wise shift. Then $\tilde f(\vp)$ doesn't have local maximizers in the interior of the simplex.
\end{proposition}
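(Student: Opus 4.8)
Write $\tilde f(\vp)=a\,f(\vp)-\vb\cdot\vp$ for the transformed potential of the preceding proposition, where $f(\vp)=\acc_\vp(h_\vp)$ is the oracle-classifier potential of \Cref{lemma:optimal_predictor_game_is_potential} and $\vb=(b_1,\dots,b_K)$ collects the retention offsets. The plan is to show $\tilde f$ is convex over the simplex and then appeal to a standard fact about convex functions. For the $0$-$1$ loss we have $\Loss_k(h)=-\acc_k(h)$, so $f(\vp)=-l^*(\vp)$, which is convex over $\DistOverGroups$ by \Cref{claim:optimal_loss_is_concave} (it is a pointwise maximum of the linear maps $\vp\mapsto\sum_k p_k\,\acc_k(h)$). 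Since $a\ge 0$, the map $a\,f$ is convex; adding the linear term $-\vb\cdot\vp$ preserves convexity; hence $\tilde f$ is convex over the simplex. This step uses only results already established plus the additivity of potentials from the preceding proposition.

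Next I would use that a convex function restricted to any line segment attains its maximum at an endpoint. Concretely, suppose $\vp^{0}$ lies in the relative interior of $\DistOverGroups$ and is a local maximizer of $\tilde f$. Then for every tangent direction $\vd\in T\DistOverGroups$ and all small $\epsilon>0$ we have $\tilde f(\vp^{0}\pm\epsilon\vd)\le\tilde f(\vp^{0})$, while convexity gives $\tilde f(\vp^{0})\le\tfrac12\tilde f(\vp^{0}+\epsilon\vd)+\tfrac12\tilde f(\vp^{0}-\epsilon\vd)$; forcing equality throughout makes $\tilde f$ affine, in fact constant, on a full-dimensional neighborhood of $\vp^{0}$. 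Under the standing assumption that each group has a distinct oracle classifier, \Cref{prop:distinct_optimal_classifier_vertex_extrema} shows $l^*$ has local minimizers only at vertices, so $f$ is strictly convex and therefore so is $\tilde f$; then the midpoint inequality is strict, contradicting the equality just derived. This settles the proposition in the non-degenerate regime.

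The main obstacle is the degenerate case where $f$ is not strictly convex, i.e.\ some groups are oracle-equivalent and $\tilde f$ is affine on the sub-simplex they span. Because $\vp^{0}$ is in the \emph{interior} of $\DistOverGroups$, a flat neighborhood around it would force all groups to be oracle-equivalent (a single classifier $h^*$ optimal on an open set of mixtures), and $\tilde f$ constant there would additionally require $a\,\acc_k(h^*)-b_k$ to be independent of $k$. I would dispatch this exactly as in the equivalent-groups extension (\Cref{thm:optimal_learner_equivalent_groups}): by convexity, the supporting hyperplane of $\tilde f$ at an interior point of such a flat region is constant, so $\tilde f$ attains its \emph{global minimum} there; hence this is not a genuine hilltop and is repelling for any positive-correlation dynamic. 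Accordingly the proposition should be read as ruling out strict interior local maximizers (equivalently: no interior local maximizers beyond convex combinations of oracle-equivalent groups with coinciding retention offsets), mirroring \Cref{thm:optimal_learner_equivalent_groups}. The remaining verifications — that the midpoint argument applies verbatim on the relatively open simplex, and that adding a linear term preserves (strict) convexity — are routine.
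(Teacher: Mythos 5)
Your first paragraph is essentially the paper's entire proof: $f(\vp)=\acc_\vp(h_\vp)$ is convex over the simplex by \Cref{claim:optimal_loss_is_concave}, and $\tilde f(\vp)=a\,f(\vp)-\vb\cdot\vp$ stays convex because $a\ge 0$ and the added term is linear, which is all the paper invokes to rule out interior local maximizers. Your second and third paragraphs go beyond the paper's argument and are not needed; they do touch a genuine looseness (a convex function can be weakly locally maximal on a flat region), but be aware that ``$l^*$ has local minimizers only at vertices, hence $f$ is strictly convex'' is not a valid implication, and a single classifier optimal on an open set of mixtures need not be optimal for each group separately, so that degenerate-case discussion should be viewed as heuristic rather than as part of the proof.
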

\begin{proof}
By the proof of \Cref{thm:optimal_learner}, $F(\vp)$ is a potential game with potential function $f(\vp)=\acc_\vp(h_\vp)$.
By \Cref{claim:optimal_loss_is_concave} we obtain that $f(\vp)$ is convex over the simplex. Multiplying a convex function by a positive constant and adding a linear function maintains convexity, and therefore $\tilde f(\vp)$ is convex as well, and in particular does not have local maximizers in the interior of the simplex.
\end{proof}

\paragraph{\Cref{thm:optimal_learner} under heterogeneous retention.}
In both cases analyzed above, we identify a modified potential function which does not have local minimizers in the interior of the simplex (Propositions~\ref{prop:monotone_two_function_retention_unimodal_potential} and \ref{prop:postive_affine_retention_convex_potential}). We now proceed to state a modified version of the central theorem:
\begin{theorem}[Oracle classifier with heterogeneous retention; Extension of \Cref{thm:optimal_learner}]
\label{thm:optimal_learner_heterogeneous_retention}
Let $\Hypotheses$ be a hypothesis class, and denote by $\Learner^\opt$ the oracle learning algorithm with respect to $\Hypotheses$.
For $K=2$, assume that retention functions $\Retention_k$ are monotone increasing, and for $K>2$ assume that $\Retention_k$ are positive affine with group-wise shift.
Assume that
at each population state $\vp$
the oracle classifier $h_\vp\sim\Learner^\opt(D_\vp)$ is deployed and the fitness of group $k$ is $\tilde F_k(\vp)=\Retention_k\left(\acc_k(h_\vp)\right)$. 
Then it holds that:
\begin{enumeratecompact}
    \item \textbf{Welfare:} 
    Average population welfare increases over time:
    $
    \tfrac{\mathrm d}{\mathrm dt}
    \sum_k p_k\Retention_k\left(\acc_k(h_{\vp})\right) \ge 0
    $.
    \item \textbf{Stability:} A stable equilibrium always exists,
    and there can be multiple such equilibria.
    \item \textbf{Competitive exclusion:} 
    For all stable equilibria,
    $\Size{\support(\vp^*)}=1$.%
    \item \textbf{Coexistence:}
    Equilibria with $\Size{\support(\vp^*)}\ge2$ may exist,
    but are unstable.

\end{enumeratecompact}
\end{theorem}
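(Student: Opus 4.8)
The plan is to reproduce the proof of \Cref{thm:optimal_learner} almost verbatim, replacing the potential $f(\vp)=\acc_\vp(h_\vp)$ with the transformed potential $\tilde f$ identified in the two paragraphs immediately preceding the theorem. The single structural fact that drives the argument is that, in both admissible regimes, the heterogeneous game $\tilde F_k(\vp)=\Retention_k(\acc_k(h_\vp))$ is a potential game whose potential $\tilde f$ has \emph{no local maximizer in the interior} of $\DistOverGroups$: for $K=2$ with monotone increasing $\Retention_k$ this is \Cref{prop:monotone_two_function_retention_unimodal_potential} (the potential is monotone or U-shaped, since $\tilde F_A-\tilde F_B$ is monotone by \Cref{prop:two_action_optimal_classifier_monotone_fitness}), and for $K>2$ with positive affine $\Retention_k$ it is \Cref{prop:postive_affine_retention_convex_potential} (the potential $\tilde f(\vp)=af(\vp)-\vb\cdot\vp$ inherits convexity from $f$). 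Everything else is reading claims 2--4 off the shape of $\tilde f$.

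For \textbf{Stability} (claim 2), $\tilde f$ is continuous on the compact simplex and hence attains a maximum, and maximizers of $\tilde f$ are exactly the evolutionarily stable equilibria of $\tilde F$ by \citep[Theorem 8.2.1]{sandholm2010population}; multiplicity is possible (e.g., a U-shaped two-group potential has both vertices as maxima). For \textbf{Competitive exclusion} (claim 3), the absence of interior local maximizers forces every stable equilibrium to lie at a vertex, so $\Size{\support(\vp^*)}=1$. For \textbf{Coexistence} (claim 4), interior Nash equilibria of $\tilde F$ correspond to interior extrema of $\tilde f$ by \citep[Theorem 3.1.3]{sandholm2010population}; these exist in some instances (the interior minimizer of a U-shaped or convex potential) but, not being maxima, are unstable by \citep[Theorem 8.2.1]{sandholm2010population}. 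These mirror the corresponding steps in the proof of \Cref{thm:optimal_learner} and should present no difficulty.

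The step that requires care is \textbf{Welfare} (claim 1), $\tdt\sum_k p_k\Retention_k(\acc_k(h_\vp))\ge 0$. In the affine case it is immediate: the identity $\tilde f(\vp)=af(\vp)-\vb\cdot\vp=a\sum_k p_k\acc_k(h_\vp)-\sum_k b_k p_k=\sum_k p_k\Retention_k(\acc_k(h_\vp))$ shows that the potential \emph{equals} the average welfare, and since potentials are Lyapunov functions for dynamics with positive correlation \citep[Lemma 7.1.1]{sandholm2010population}, we get $\tdt\tilde f(\vp)\ge 0$. For $K=2$ with general monotone $\Retention_k$ this coincidence breaks: the canonical two-group potential of \Cref{eq:two_group_potential} (with $F$ replaced by $\tilde F$) equals the weighted welfare $p\tilde F_A+(1-p)\tilde F_B$ up to a constant if and only if $p\tilde F_A'+(1-p)\tilde F_B'\equiv 0$; using the identity $p\,\tfrac{\mathrm d}{\mathrm d p}\acc_A(h_p)+(1-p)\,\tfrac{\mathrm d}{\mathrm d p}\acc_B(h_p)=0$ that underlies \Cref{claim:gradient_of_optimal_loss_function}, this reduces to $\Retention_A'(\acc_A(h_p))=\Retention_B'(\acc_B(h_p))$ --- automatic for affine retention, but not for arbitrary monotone retention. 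So for $K=2$ I would either restrict the welfare statement to affine (``same-shape'') retention, or state claim 1 in terms of the Lyapunov potential $\tilde f$ rather than the literal weighted-welfare functional; pinning down the intended formulation (and, if the literal one is wanted, supplying a direct monotonicity argument along the induced trajectories) is the one place I expect genuine work.
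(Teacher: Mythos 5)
Your treatment of claims 2--4 and of the affine case of claim 1 is essentially the paper's proof: identify the transformed potential $\tilde f$ (via \Cref{prop:monotone_two_function_retention_unimodal_potential} for $K=2$ and \Cref{prop:postive_affine_retention_convex_potential} for the common-slope affine case), note it has no interior local maximizers, and read off existence/stability/exclusion/instability of mixed equilibria from the correspondence between maximizers of the potential and stable equilibria, exactly as in \Cref{thm:optimal_learner}; for affine retention the identity $\tilde f(\vp)=a f(\vp)-\vb\cdot\vp=\sum_k p_k\Retention_k(\acc_k(h_\vp))$ plus the Lyapunov property gives the welfare claim, again as in the paper. The one divergence is precisely the step you flagged and declined to carry out: for $K=2$ with general monotone increasing $\Retention_k$, the paper does assert the literal welfare inequality and proves it by a direct trajectory argument rather than via the potential --- assume w.l.o.g.\ $\tilde F_A(\vp)\ge\tilde F_B(\vp)$, use positive correlation to get $\dot p_A\ge 0$, invoke \Cref{prop:two_action_optimal_classifier_monotone_fitness} for monotonicity of $\acc_A(h_p)$ and $\acc_B(h_p)$ in $p_A$, and expand $\tdt\sum_k p_k\Retention_k(\acc_k(h_\vp))$ into $\dot p_A(\tilde F_A-\tilde F_B)$ plus two retention-change terms, each claimed nonnegative. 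Measured against the paper, your proposal therefore has a gap: you anticipate that such an argument is needed but never supply it.

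That said, your hesitation points at something real. By the chain rule the third term in the paper's expansion is $p_B\,\frac{\partial}{\partial p_A}\bigl[\Retention_B(\acc_B(h_\vp))\bigr]\cdot\dot p_A\le 0$, whereas the paper writes it with a factor $(-\dot p_A)$ and counts it as nonnegative. Your own reduction makes the issue explicit: using $p_A\frac{\mathrm d}{\mathrm d p_A}\acc_A(h_p)+p_B\frac{\mathrm d}{\mathrm d p_A}\acc_B(h_p)=0$, one gets $\tdt W=\dot p_A\bigl[(\tilde F_A-\tilde F_B)+p_A\frac{\mathrm d \acc_A}{\mathrm d p_A}\bigl(\Retention_A'(\acc_A)-\Retention_B'(\acc_B)\bigr)\bigr]$, so the correction vanishes exactly when the retention slopes agree (the common-slope affine case, where the potential equals welfare) and can otherwise be negative and dominant: take any oracle game with $\frac{\mathrm d\acc_B}{\mathrm d p_A}<0$ at an interior point, set $\Retention_A(x)=x$ and $\Retention_B(x)=Cx-d$ with $C$ large and $d$ tuned so that $\tilde F_A-\tilde F_B$ is a tiny positive number there; then $\dot p_A>0$ while the weighted welfare strictly decreases. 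So your proposal to either restrict claim 1 to the common-slope family or restate it in terms of the Lyapunov potential $\tilde f$ is the safe formulation; the paper's direct argument for general monotone retention at $K=2$ does not survive the sign check, and claims 2--4 (which you and the paper prove identically) are unaffected.
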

\begin{proof}
In analogy to the proof of \Cref{thm:optimal_learner} (see \Cref{sec:retraining_proofs}). The corresponding evolutionary prediction games are potential games, with Nash equilibria corresponding to local equilibria of the potential function, and stable equilibria corresponding to local maximizers of the potential function.
By Propositions~\ref{prop:monotone_two_function_retention_unimodal_potential} and \ref{prop:postive_affine_retention_convex_potential}, potential functions for both families of retention functions don't have local maximizers at the interior of the simplex, implying claims (2,3,4).

For claim (1), we consider two cases, according to the family of retention functions under consideration.
For positive affine functions, the transformed potential function is
$
\tilde f(\vp)
= a f(\vp) - \vb\cdot\vp
$, where $f(\vp)$ is the potential function of the original game (with $\Retention_k(x)=x$). Rearranging the summations:
\begin{align*}
\tilde f(\vp)
&= a \cdot \underbrace{\acc_\vp(h_\vp)}_{=\sum_k p_k\acc_k(h_\vp)} - \vb\cdot\vp    
\\&= \sum_k p_k \left( a \cdot \acc_k(h_\vp) - b_k\right)
\\&= \sum_k p_k \Retention_k \left( \acc_k(h_\vp)\right)
\end{align*}
and claim (1) follows from the fact that $\tilde f(\vp)$ is a Lyapunov function. For monotone increasing functions, assume without loss of generality that $\tilde F_A(\vp)\ge \tilde F_B(\vp)$. Then from the positive correlation assumption $\dot p_A=-\dot p_B \ge 0$. From \Cref{prop:two_action_optimal_classifier_monotone_fitness} we obtain that $\acc_A(h_\vp)$ is increasing and $\acc_B(h_\vp)$ is decreasing as a function of $p_A$. Combining the observations, we obtain:
\begin{align*}
\tfrac{\mathrm d}{\mathrm dt} \sum_k p_k\Retention_k\left(\acc_k(h_{\vp})\right)
=& 
\tfrac{\mathrm d}{\mathrm dt} \left(
p_A\Retention_A\left(\acc_A(h_{\vp})\right)
+
p_B\Retention_B\left(\acc_B(h_{\vp})\right)
\right)
\\=&
\dot p_A 
\underbrace{
\left(
\tilde F_A(\vp)
-
\tilde F_B(\vp)
\right)
}_{\text{$\ge 0$ without loss of generality}}
\\&+ 
p_A 
\underbrace{\PartialDerivative{}{p_A}\Retention_A(\acc_A(h_\vp))}_{\text{$\ge 0$ by \Cref{prop:two_action_optimal_classifier_monotone_fitness}}}
\underbrace{\dot p_A}_{\ge 0}
\\&+
p_B 
\underbrace{\PartialDerivative{}{p_A}\Retention_B(\acc_B(h_\vp))}_{\text{$\le 0$ by \Cref{prop:two_action_optimal_classifier_monotone_fitness}}}
\underbrace{(-\dot p_A)}_{\le 0}
\end{align*}
And thus $\tfrac{\mathrm d}{\mathrm dt} \sum_k p_k\Retention_k\left(\acc_k(h_{\vp})\right) \ge 0$ as required.
\end{proof}

\subsection{Soft-SVM}
\label{subsec:soft_svm_coexistence_proof}
\begin{definition}[Soft-SVM; {\citep[e.g.,][Section 15.2]{shalev2014understanding}}]
\label{def:soft_svm}
Let $\Set{(x_i,y_i)}_{i=1}^n$ be a training set, and let $\lambda >0$. 
The Soft-SVM learning algorithm outputs a linear classifier $h(x)=\sign(w^*\cdot x+b)$ such that $(w^*, b^*)$ are minimizers of the regularized hinge loss:
$$
(w^*,b^*) = \argmin_{w,b} \lambda \Norm{w}^2 + \frac{1}{n} \sum_{i=1}^n \max\Set{0,1-y_i(wx_i+b)}
$$

\end{definition}

\subsubsection{Construction}
\label{subsec:soft_svm_construction}
Consider a two-group binary classification setting ($K=2, \Labels=\Set{-1,1}$). Denote the two groups by $[K]=\Set{A,B}$.
Denote by $\mathrm{Triangular}(a,b,c)$ the triangular distribution with parameters $(a,b,c)$ and probability density function: $$
\prob{x\sim\mathrm{Triangular}(a,b,c)}{x}=\begin{cases}
    \frac{2(x-a)}{(b-a)(c-a)}& x\in[a,c)\\
    \frac{2(b-x)}{(b-a)(b-c)}& x\in[c,b]\\
    0 & \otherwise
\end{cases}
$$
Let $\alpha\in[0.5,1)$ be a population balance parameter. 
Define the data distributions:
\begin{equation*}
\begin{aligned}
X_A&\sim \alpha \cdot \mathrm{Triangular}(-1,0,-1) + (1-\alpha)\cdot \mathrm{Triangular}(0,1,0) 
\\
X_B& = -X_A = (1-\alpha)\cdot \mathrm{Triangular}(-1,0,0) + \alpha\cdot\mathrm{Triangular}(0,1,1) 
\\
Y|X &= \begin{cases}
    1 & X\ge 0
    \\
    -1 & X < 0
\end{cases}
\end{aligned}
\end{equation*}
$D_A$ is the distribution of tuples $(X_A,Y|X_A)$, and $D_B$ is defined correspondingly as $D_B=(X_B,Y|X_B)$. The distributions are illustrated in \Cref{fig:coexistence_mechanisms} (Left).

\subsubsection{Stable Mutualistic Coexistence}
We show that the Soft-SVM algorithm can induce a stable, beneficial coexistence. 
We will consider learning in the population limit $n\to \infty$, where the loss minimization objective (\Cref{def:soft_svm}) converges towards its expected value:
\begin{definition}[Soft-SVM in the population limit]
Denote the probability density function of the data by $f(x,y)$. In the population limit $n\to\infty$, the Soft-SVM classifier $(w^*,b^*)$ is a minimizer of the expected regularized hinge loss:
\begin{equation*}
(w^*,b^*) = \argmin_{w,b} L(w,b)
\end{equation*}
Where:
\begin{equation}
\label{eq:expected_hinge_loss}
L(w,b) = \lambda \Norm{w}^2 + \sum_y \int_{x}f(x,y) \max\Set{0,1-y(wx+b)} \mathrm{d}x
\end{equation}
\end{definition}

\begin{proposition}
\label{lemma:1d_svm_b_derivative}
For a one-dimensional Soft-SVM classification problem, 
denote and feature-label distribution by $f$,
and denote by $L(w,b)$ the regularized hinge loss over the population. 
It holds that:
\begin{align}
\label{eq:1d_svm_b_derivative}
\frac{\partial L(w,b)}{\partial b}
&=
\int_{wx+b\ge -1} 
f(x,y=-1)
\mathrm{d}x
-
\int_{wx+b\le 1} 
f(x,y=1) 
\mathrm{d}x
\end{align}
\end{proposition}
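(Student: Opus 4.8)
The plan is to differentiate \eqref{eq:expected_hinge_loss} in $b$ term by term. The regularizer $\lambda\Norm{w}^2$ is independent of $b$, so it contributes nothing. For the sum over $y\in\Set{-1,1}$ the strategy is to move the $b$-derivative inside the integral and compute the pointwise derivative of the hinge term $b\mapsto\max\Set{0,1-y(wx+b)}$ for each fixed $(x,y)$. For fixed $(x,y)$ this map is piecewise affine in $b$ with a single kink at the value where $y(wx+b)=1$; away from the kink its derivative is $-y$ when $1-y(wx+b)>0$ and $0$ when $1-y(wx+b)<0$, i.e. $\partial_b\max\Set{0,1-y(wx+b)}=-y\,\Indicator{y(wx+b)<1}$ for all but one value of $b$. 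Substituting $y=-1$ gives $\Indicator{wx+b>-1}$ and $y=1$ gives $-\Indicator{wx+b<1}$, so summing the two label terms yields exactly the integrand of \eqref{eq:1d_svm_b_derivative}; replacing the strict inequalities there by $\ge,\le$ is immaterial since the boundary sets $\Set{x:wx+b=\pm1}$ have Lebesgue measure zero whenever $w\neq 0$ (or more generally whenever $f(\cdot,y)$ is nonatomic).

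The step I expect to be the main obstacle is justifying the interchange of differentiation and integration, since the hinge is not everywhere differentiable. I would handle this with the dominated-convergence form of Leibniz's rule: the difference quotient $\tfrac1h\bigl(\max\Set{0,1-y(wx+b+h)}-\max\Set{0,1-y(wx+b)}\bigr)$ is bounded in absolute value by $\Abs{y}=1$ uniformly in $h$ (the hinge is $1$-Lipschitz in $b$), and it converges pointwise in $(x,y)$ to $-y\,\Indicator{y(wx+b)<1}$ as $h\to0$, except on the measure-zero kink set. Since $f(\cdot,y)$ is a probability density, the constant bound $1$ is integrable against it, so dominated convergence licenses swapping the limit with the integral, and the derivative equals the integral of the pointwise limit. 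This argument implicitly uses that $L(w,b)$ is finite in a neighborhood of the point of differentiation, which holds as soon as $\int\Abs{x}f(x,y)\,\mathrm dx<\infty$ so that the hinge integrals converge; I would state this as a standing regularity assumption if it is not already in force.

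Finally I would assemble the two label contributions: the $y=-1$ term contributes $+\int f(x,-1)\,\Indicator{wx+b>-1}\,\mathrm dx$ and the $y=1$ term contributes $-\int f(x,1)\,\Indicator{wx+b<1}\,\mathrm dx$, which is precisely the right-hand side of \eqref{eq:1d_svm_b_derivative}. (If one prefers to avoid any a.e.\ caveat, the identity can be read as holding at every $b$ at which $L(w,\cdot)$ is differentiable, which is all but countably many $b$ by convexity of $L$ in $b$.)
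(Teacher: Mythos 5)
Your proposal is correct and follows essentially the same route as the paper's proof: differentiate under the integral sign (the paper invokes the Leibniz rule, excluding the single kink point, where you give the more careful dominated-convergence justification via 1-Lipschitzness of the hinge in $b$), compute the pointwise derivative $-y\,\Indicator{y(wx+b)\le 1}$, and expand the sum over $y\in\Set{-1,1}$ to obtain the two integrals.
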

\begin{proof}
The population hinge loss of a 1D classifier:
$$
L(w,b)=\lambda w^2 + \sum_y \int_{x\in\Reals} 
f(x,y) \max \Set{0,1-y(wx+b)}  
\mathrm{d}x
$$
First, we calculate the derivative $\frac{\partial L}{\partial b}$:
\begin{align*}
\frac{\partial L(w,b)}{\partial b}
&=
\frac{\partial}{\partial b}\sum_y \int_{x\in\Reals} 
f(x,y) \max \Set{0,1-y(wx+b)}  
\mathrm{d}x
\intertext{By the Leibniz integral rule, and excluding the single non-differential point from the integral:}
&=
\sum_y \int_{x\in\Reals\setminus\Set{ywx-1}} 
f(x,y) \frac{\partial}{\partial b} \max \Set{0,1-y(wx+b)}  
\mathrm{d}x
\\&=
\sum_y \int_{y(wx+b)\le 1} 
f(x,y) (-y)  
\mathrm{d}x
\intertext{Expanding the summation over $y$ yields:}
&=
\int_{wx+b\ge -1} 
f(x,y=-1)
\mathrm{d}x
-
\int_{wx+b\le 1} 
f(x,y=1) 
\mathrm{d}x
\end{align*}
As required. 

\end{proof}

\begin{proof}[Proof of \Cref{thm:soft_svm_coexistence}]
Assume the Soft-SVM algorithm trains on a large dataset sampled from $D_\vp$. 
Denote $\vp=(1-p,p)$, and denote the probability density functions of groups $\Set{A,B}$ by $f_A(x,y),f_B(x,y)$, respectively.
Since the data is separable and labels are positively correlated with the feature $x$, it holds that $w^*>0$. 
Assume that regularization parameter $\gamma$ is large enough such that $w^*\le 1$.
From \Cref{lemma:1d_svm_b_derivative}, it holds that:
$$
\left.
\frac{\partial L(w,b)}{\partial b}
\right|_{b=0}
=
\int_{wx\ge -1} 
f(x,y=-1)
\mathrm{d}x
-
\int_{wx\le 1} 
f(x,y=1)
\mathrm{d}x
$$
When $w\in(0,1]$, observe that the distribution defined in \Cref{subsec:soft_svm_construction} is suppported on $[-1,1]$, and therefore both integrals cover the whole distribution. We can therefore write:
$$
\left.
\frac{\partial L(w,b)}{\partial b}
\right|_{b=0}
=
\prob{(x,y)\sim D_\vp}{y=-1}
-
\prob{(x,y)\sim D_\vp}{y=1}
$$
By construction, for group $A$ it holds that 
$$
\prob{(x,y)\sim D_A}{y=-1}
-
\prob{(x,y)\sim D_A}{y=1}
=
\alpha - (1-\alpha) = -(1-2\alpha)
$$
and similarly for group $B$:
$$
\prob{(x,y)\sim D_B}{y=-1}
-
\prob{(x,y)\sim D_B}{y=1}
=
1-2\alpha
$$
Therefore, for $D_\vp$, it holds that:
$$
\left.
\frac{\partial L(w,b)}{\partial b}
\right|_{b=0}
=
p(1-2\alpha) - (1-p)(1-2\alpha)
$$

The state $\vp=(0.5,0.5)$ corresponds to $p=0.5$. For this state, it holds that
$\left.
\frac{\partial L(w,b)}{\partial b}
\right|_{b=0}=0$,
and therefore $b^*=0$.
Since the data is separable at $x=0$, any classifier with $w^*>0$ and $b^*=0$ achieves perfect accuracy, and therefore the state $\vp=(0.5,0.5)$ is beneficial for both groups.

To prove stability, we take an additional partial derivative by $p$:
$$
\left.
\frac{\partial^2 L(w,b)}{\partial p \partial b}
\right|_{b=0, p=0.5} 
=
2(1-2\alpha)
$$
And as $\alpha>0.5$, it holds that 
$
\left.
\frac{\partial^2 L(w,b)}{\partial p \partial b}
\right|_{b=0, p=0.5} 
<0
$.
Let $\varepsilon>0$ which is sufficiently small. For $p'=0.5+\varepsilon$ it holds that 
$\left.
\frac{\partial L(w,b)}{\partial b}
\right|_{b=0} 
<0$, and since $L(w,b)$ is convex, the optimal $b^*$ corresponding to $p'$ satisfies $b^*>0$, and the decision boundary is smaller than $0$. Assume that the deviation $\varepsilon$ is sufficiently small such that the new $b^*$ is also in the neighborhood of $0$. By definition, in a sufficiently small neighborhood of zero, it holds by construction for each group:
\begin{align*}
\acc_A(h_{\vp'})=\acc_A(w^*>0,b^*>0)
&=1-\int_{[-b^*/w^*,0]} f_A(x,y=-1) \mathrm{d}x
\\
\acc_B(h_{\vp'})=\acc_B(w^*>0,b^*>0)
&=1-\int_{[-b^*/w^*,0]} f_B(x,y=-1) \mathrm{d}x
\end{align*}
Hence, for a sufficiently small deviation $p'=p+\varepsilon$ it holds by the alignment of the triangle distributions that $\acc_A(h_{\vp'}) > \acc_B(h_{\vp'})$. Applying a similar argument to an opposite deviation $p''=0.5-\varepsilon$ shows that an opposite relation holds, and therefore the state $\vp=(0.5,0.5)$ is also stable.
\end{proof}

\subsubsection{Bifurcations Induced by Regularization}
\label{subsec:svm_regularization_experiment}

\begin{figure*}
    \centering
    \includegraphics[width=\textwidth]{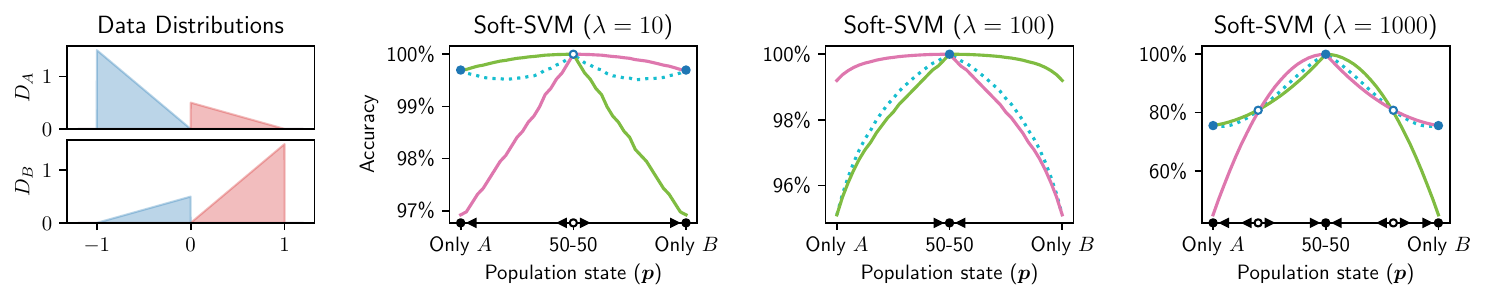}
    \vspace{-1em}
    \caption{
    Varying regularization strength for Soft-SVM classifiers.
    \textbf{(Left)} Data distributions.
    \textbf{(Center Left)}
    Evolutionary prediction game induced by a Soft-SVM classifier with $\lambda=10$. The game has two stable equilibria at the boundaries, and an unstable coexistence equilibrium.
    \textbf{(Center Right)}
    Game induced by a Soft-SVM classifier with $\lambda=100$. The game has one stable coexistence equilibrium.
    \textbf{(Right)}
    Game induced by a Soft-SVM classifier with $\lambda=1000$. The game has five equilibria: Two stable single-group equilibria, one stable beneficial coexistence equilibrium at $\vp=(0.5,0.5)$, and two unstable coexistence equilibria at $\vp\approx(0.2,0.8)$ and $\vp\approx(0.8,0.2)$.
    }
\label{fig:soft_svm_regularization}
\end{figure*}

In \Cref{fig:soft_svm_regularization}, we vary the regularization strength for the distributions specified in \Cref{subsec:soft_svm_construction} with imbalance parameter $\alpha=\ParamCoexistenceSvmRegularizationExperimentAlpha$. We observe that the system transition between three topologically-distinct phases: For weak regularization ($\lambda=10$), coexistence is beneficial but not stable. For intermediate regularization ($\lambda=100$), coexistence is beneficial and stable. Interestingly, for strong regularization ($\lambda=1000$), the game has five equilibria -- Two stable single-group equilibria, one stable coexistence equilibia, and two unstable coexistence equilibria. 

\subsection{k-NN Classifiers With Label Noise}
\label{subsec:knn_coexistence_proof}
\label{sec:coex-inductive_bias}

Another practical approach to loss minimization is 
to use a class of models that fit the training data perfectly and then interpolate.
Notable examples include the nearest-neighbor algorithm,
and over-parameterized neural networks \citep[e.g.,][]{zhang2021understanding}.
Interpolation leads to perfect accuracy on the training set by definition, but often at the price of sensitivity to noise;
as a concrete example, consider how \nnalg{} predictions change with the addition of a single point.
The problem is that even in the limit,
\nnalg{} cannot express the true $p(y|x)$,
and remains sensitive to label noise.
Our next result shows this can lead to coexistence:

\begin{theorem}
\label{thm:knn_coexistence}
For the \nnalg{} learning algorithm, there exist noisy-label data distributions that induce an evolutionary prediction game with stable coexistence.
\end{theorem}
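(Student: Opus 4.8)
The plan is to construct an explicit two-group instance ($K=2$, $\Labels=\Set{-1,+1}$, $\Features=[-1,1]$) with \emph{complementary} label noise, and to analyze \nnalg{} in the population limit $n\to\infty$, where a test point is effectively routed to a training label drawn from the \emph{local} mixture. Split $\Features$ into $R_-=[-1,0)$ and $R_+=(0,1]$, declare the underlying label $-1$ on $R_-$ and $+1$ on $R_+$, and fix a flip probability $q\in(0,\tfrac12)$. Let group $A$ have \emph{clean} labels on $R_+$ and \emph{$q$-noisy} labels on $R_-$, placing a fraction $\mu_1$ of its feature mass on $R_-$ and $\mu_2=1-\mu_1$ on $R_+$; let $D_B$ be the mirror image of $D_A$ under $x\mapsto -x$. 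One may take all four region-conditional feature densities uniform, leaving only the scalars $(q,\mu_1)$ to fix at the end.

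First I would pin down the \nnalg{} rule in the limit. For $x$ in the interior of a region, as $n\to\infty$ the nearest training point lies in the same region with probability $\to1$, and conditionally on that it is of type $A$ with probability equal to the local density ratio $\pi_A(x;\vp)=\tfrac{p_A f_A(x)}{p_A f_A(x)+p_B f_B(x)}$; hence $h_\vp(x)$ is distributed as that group's (possibly noisy) label at $x$. Combining this with the independent label noise on the test draw yields, for each group and region, a closed form for the per-point success probability: in a group's \emph{clean} region a same-type (clean) neighbor succeeds with probability $1$ and a cross-type (noisy) neighbor with probability $1-q$; in a group's \emph{noisy} region a cross-type (clean) neighbor succeeds with probability $1-q$ while a same-type (noisy) neighbor succeeds with probability $(1-q)^2+q^2=1-2q(1-q)$. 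Integrating over $D_k$ expresses $F_k(\vp)=\acc_k(h_\vp)$ as an explicit, continuous function of $\vp$; the interchange of limit and expectation is justified by dominated convergence, the boundary strip around $0$ having vanishing mass.

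The crux is the sign reversal forced by $q<\tfrac12$: since $1-2q(1-q)<1-q$, in a group's \emph{noisy} region its own accuracy \emph{decreases} as its proportion grows, whereas in its clean region accuracy increases with its proportion. Putting enough mass on the noisy region makes the net effect negative at the symmetric state: writing $g(p):=F_A((p,1-p))$, the uniform-density computation gives $g'(\tfrac12)=c\,q(\mu_2-\mu_1(1-2q))$ for an explicit constant $c>0$, which is negative once $\mu_1$ is large enough relative to $q$ (e.g.\ $\mu_1=0.9,\ q=0.2$). The mirror symmetry gives $F_B((p,1-p))=g(1-p)$, so $\phi(p):=F_A-F_B=g(p)-g(1-p)$ satisfies $\phi(\tfrac12)=0$ and $\phi'(\tfrac12)=2g'(\tfrac12)<0$. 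Now I would invoke that every two-group population game is a potential game with potential $f(p)=\int_0^p\phi(x)\,\mathrm{d}x$: then $\vp^*=(\tfrac12,\tfrac12)$ is a critical point of $f$ and, since $f''(\tfrac12)=\phi'(\tfrac12)<0$, a strict local maximizer, hence an evolutionarily stable equilibrium; as $\support(\vp^*)=\Set{A,B}$, it is stable coexistence. (The same formulas show the two vertices are unstable, so coexistence is in fact the unique stable outcome here, though the theorem needs only its existence.)

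The main obstacle is the population-limit characterization of \nnalg{}: establishing rigorously that the per-point success probability converges, uniformly enough to integrate, to the density-ratio formula, and handling the region-boundary strip and any region where a density could be small — both avoided by the uniform, well-separated construction. A limit-free alternative is to fix a small $n$ and compute $F_k(\vp)$ exactly by enumerating the finitely many sample orderings, at the cost of heavier combinatorics; either way, once the map ``$\vp\mapsto$ how \nnalg{} labels test points'' is under control, the equilibrium and stability claims reduce to the short symmetric derivative check above.
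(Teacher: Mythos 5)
Your construction is, at its core, the same one the paper uses: two mirror-image groups, each a mixture of a majority region with $q$-noisy labels and a minority region with clean labels, so that each group's predictions improve when the \emph{other} group supplies clean training labels in its noisy region. Indeed your threshold $q<1-\tfrac{1}{2\mu_1}$ is exactly the paper's condition $\alpha\in\left(0,1-\tfrac{1}{2\beta}\right)$, and your derivative formula checks out with $c=4\mu_1\mu_2$. Where you differ is the analysis route. The paper keeps the two regions well separated in feature space, so that (conditional on both regions being sampled) the nearest neighbor of any test point provably lies in its own region; the within-region neighbor's label is then an exact Bernoulli/binomial draw at the effective noise rate of the local mixture, which yields a closed-form expected accuracy valid for \emph{every} finite $n$ (and every odd $k$, via a binomial-CDF term). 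Stability is then read off from explicit values at the vertices and the symmetric point. You instead take adjacent regions, pass to the population limit $n\to\infty$ to get the local density-ratio formula, and prove stability via the two-group potential $f(p)=\int_0^p\bigl(F_A-F_B\bigr)\mathrm dx$ and the sign of $\phi'(\tfrac12)$. Your potential-plus-second-derivative argument is actually a cleaner justification of \emph{why} the interior symmetric point is stable than the paper's three-point comparison; what it buys less of is finite-sample rigor.

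The one substantive caveat is the step you yourself flag: the game in \Cref{def:prediction_game} is induced by the algorithm trained on samples, so the fitness functions depend on $n$, and your argument only controls their $n\to\infty$ limit; uniform convergence of $\phi_n$ to $\phi$ near $p=\tfrac12$ does not by itself pin down the sign of $\phi_n$ in a full punctured neighborhood of the symmetric equilibrium, so strictly you have not yet exhibited stable coexistence for the actual finite-sample \nnalg{} game. The cheapest repair is exactly the paper's separation trick: insert a gap between $R_-$ and $R_+$ (and note both region-conditional feature densities are identical across groups). Then, conditional on both regions appearing in the sample, the nearest neighbor's type is an exact independent draw with probability equal to your $\pi_A^{\pm}$, your closed-form fitness becomes exact for all $n\ge 1$ rather than asymptotic, and the remainder of your argument (symmetry, potential, $\phi'(\tfrac12)<0$) goes through verbatim. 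With that modification the proposal is a correct proof of the theorem.
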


Formal proof in \Cref{subsec:knn_coexistence_proofs}. The construction is illustrated in \Cref{fig:coexistence_mechanisms} (Center), and described formally in \Cref{subsec:knn_construction}.
Informally, there are two groups defined symmetrically. Each group is composed of a majority and minority classes, and the majority class has label noise. For each group alone, the \nnalg{} algorithm makes perfect predictions on its minority class, and imperfect predictions on the majority class.
Since groups are defined symmetrically,
training on one group alone leads to better accuracy for the other---leading to stable coexistence.
Moreover, \Cref{lemma:knn_expected_acc} shows that for $k$-NN with $k\ge3$,
stable coexistence can also be beneficial for both groups.

\begin{definition}[k-Nearest-Neighbors; {e.g. \citep[Section 19.1]{shalev2014understanding}}]
Let $k$ be an odd positive integer. Given a training set $S\in\left(\FeaturesLabels\right)^n$ and a feature vector $x$, denote the $k$ nearest neighbors of $x$ in the training set by $N(x)$. The k-NN classifier returns the majority label $y$ among the members of $N(x)$.
\end{definition}

\subsubsection{Construction}
\label{subsec:knn_construction}
Consider a two-group binary classification setting ($K=2$, $\Labels=\Set{-1,1}$).
Denote the two groups by $[K]=\Set{A,B}$, and let $\alpha>0$, $\beta\ge 0.5$.
The data distributions for the two groups are defined symmetrically:
\begin{itemizecompact}
    \item Each group is comprised of a majority subgroup and a minority subgroup. The proportion of the majority subgroup is $\beta\ge 0.5$.
    \item One subgroup is initially contains positively-labeled data, denoted by $D_\mathrm{pos}\in\DistOver{\Features\times\Set{1}}$.
    Conversely, the other subgroup is initially comprised of data with negative labels, and denoted by
    $D_\mathrm{neg}\in\DistOver{\Features\times\Set{-1}}$. The positive subgroup is the majority of group $A$, and the negative subgroup is majority of group $B$. 
    \item For each group, the labels in the majority subgroup are flipped with probability $\alpha$. For any data distribution $D\in\DistOver{\FeaturesLabels}$, we denote its noisy version by $\tilde{D}^\alpha$.
    \item It is assumed that the positive and negative data distribution have bounded support, and are sufficiently far apart such that the nearest neighbor of any sample is from the same group.
\end{itemizecompact}

Overall we have:
\begin{equation*}
\begin{aligned}
D_A &= \beta \tilde{D}^\alpha_\mathrm{pos} + (1-\beta)D_\mathrm{neg}
\\
D_B &= (1-\beta)D_\mathrm{pos} + \beta \tilde{D}^\alpha_\mathrm{neg}
\end{aligned}
\end{equation*}

The construction is illustrated in \Cref{fig:coexistence_mechanisms} (Center). 

\subsubsection{Stable Coexistence}
\label{subsec:knn_coexistence_proofs}

\begin{proposition}[Convex combinations]
\label{claim:label_noise_mixture_coefficients}
Let $\vp=(1-p,p)\in\DistOver{\Set{A,B}}$, and denote $\gamma=p -2p\beta +\beta$.
It holds that:
$$
D_\vp =
\gamma \tilde{D}^{\frac{\beta(1-p)}{\gamma}\alpha}_\mathrm{pos}
+
(1-\gamma) \tilde{D}^{\frac{\beta p}{1-\gamma}\alpha}_\mathrm{neg}
$$
\end{proposition}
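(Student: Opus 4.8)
The plan is to prove this by direct expansion, using one structural observation: label-flipping noise depends \emph{affinely} on its parameter. Writing $\overline{D_\mathrm{pos}}$ for the distribution with the feature marginal of $D_\mathrm{pos}$ but every label set to $-1$ (and $\overline{D_\mathrm{neg}}$ analogously), independently flipping each label with probability $t$ gives $\tilde{D}^{t}_\mathrm{pos}=(1-t)D_\mathrm{pos}+t\,\overline{D_\mathrm{pos}}$, and likewise $\tilde{D}^{t}_\mathrm{neg}=(1-t)D_\mathrm{neg}+t\,\overline{D_\mathrm{neg}}$; in particular $D_\mathrm{pos}=\tilde{D}^{0}_\mathrm{pos}$ and $D_\mathrm{neg}=\tilde{D}^{0}_\mathrm{neg}$. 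From this I get a merging lemma: for nonnegative weights $w_1,w_2$ with $w=w_1+w_2>0$ and noise levels $t_1,t_2\in[0,1]$,
\begin{equation*}
w_1\tilde{D}^{t_1}_\mathrm{pos}+w_2\tilde{D}^{t_2}_\mathrm{pos}=w\,\tilde{D}^{t'}_\mathrm{pos},\qquad t'=\frac{w_1 t_1+w_2 t_2}{w}\in[0,1],
\end{equation*}
and the same for $\mathrm{neg}$; this is immediate by substituting the affine expressions and collecting the coefficients of $D_\mathrm{pos}$ and $\overline{D_\mathrm{pos}}$.

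First I would expand $D_\vp=(1-p)D_A+pD_B$ using the definitions of $D_A,D_B$ from \Cref{subsec:knn_construction}, obtaining four terms: $(1-p)\beta\,\tilde{D}^{\alpha}_\mathrm{pos}$, $p(1-\beta)\,\tilde{D}^{0}_\mathrm{pos}$, $(1-p)(1-\beta)\,\tilde{D}^{0}_\mathrm{neg}$, and $p\beta\,\tilde{D}^{\alpha}_\mathrm{neg}$. Grouping the two $\mathrm{pos}$ terms, their total mass is $(1-p)\beta+p(1-\beta)=\beta+p-2p\beta=\gamma$; grouping the two $\mathrm{neg}$ terms, their total mass is $p\beta+(1-p)(1-\beta)=1-\gamma$. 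Then I would apply the merging lemma twice: the effective $\mathrm{pos}$ noise is $\big((1-p)\beta\cdot\alpha+p(1-\beta)\cdot 0\big)/\gamma=\tfrac{\beta(1-p)}{\gamma}\alpha$, and the effective $\mathrm{neg}$ noise is $\big(p\beta\cdot\alpha+(1-p)(1-\beta)\cdot 0\big)/(1-\gamma)=\tfrac{\beta p}{1-\gamma}\alpha$, which gives exactly the claimed expression.

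Along the way I would record the well-definedness checks: $\gamma=\beta(1-p)+p(1-\beta)$ is a sum of a term at most $1-p$ and a term at most $p$, so $\gamma\in[0,1]$; the two noise fractions satisfy $\tfrac{\beta(1-p)}{\gamma}\le 1$ and $\tfrac{\beta p}{1-\gamma}\le 1$ because $\beta(1-p)\le\gamma$ and $\beta p\le 1-\gamma$, so both effective noise levels lie in $[0,\alpha]$; and since $\beta\ge\tfrac12$ we have $\gamma>0$ when $p<1$ and $1-\gamma>0$ when $p>0$, with the boundary cases handled by the usual convention that a zero-weight term is dropped (its noise parameter then being irrelevant). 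I do not expect any real obstacle: the argument is linear bookkeeping, and the only point needing care is the normalization --- the within-subgroup weights sum to $\gamma$ and $1-\gamma$ rather than to $1$, which is precisely why these totals appear in the denominators of the effective noise levels.
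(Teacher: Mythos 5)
Your proof is correct and follows essentially the same route as the paper's: the paper's key identity $(1-q)\tilde{D}^{\alpha}+qD=\tilde{D}^{(1-q)\alpha}$ is exactly your merging lemma specialized to $t_1=\alpha$, $t_2=0$, and both arguments then expand $D_\vp$ into its positive and negative blocks with masses $\gamma$ and $1-\gamma$ and read off the averaged noise levels. Your version merely makes the affine decomposition via $\overline{D}$ explicit and adds well-definedness checks the paper leaves implicit, which is fine but not a different approach.
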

\begin{proof}
For any mixture coefficient $q\in[0,1]$, the label-flipped datasets satisfy:
\begin{equation}
\label{eq:mixture_of_distributions_with_label_noise}
(1-q)\tilde{D}^\alpha + q D = \tilde{D}^{(1-q)\alpha}
\end{equation}
The convex combination of the mixture coefficients of corresponding to $D_\mathrm{pos}$ and $\tilde{D}^\alpha_\mathrm{pos}$, is given by:
\begin{equation}
\label{eq:mixture_coefficients_for_label_noise}
(1-p)\beta + p(1-\beta)=p+\beta-2p\beta
=\gamma
\end{equation}
Plugging \cref{eq:mixture_coefficients_for_label_noise} into \cref{eq:mixture_of_distributions_with_label_noise}, we obtain that the corresponding $q$ for the positive mixture is:
$$
q_\mathrm{pos}=\frac{(1-p)\beta}{(1-p)\beta + p(1-\beta)}=\frac{\beta(1-p)}{\gamma}
$$
and for the negative mixture:
$$
q_\mathrm{neg}=
\frac{p \beta}{p\beta + (1-p)(1-\beta)}
=\frac{\beta p}{1-\gamma}
$$
Combining the two equations above yields the result.
\end{proof}

\begin{proposition}[Reflection-exchange symmetry]
\label{claim:label_noise_symmetry_argument}
Let $\vp=(1-p,p)$, and let $\vp'=(p,1-p)$. For the prediction setting defined above and for any classifier $h^*$ trained on a dataset $S\sim D^n_\vp$, it holds that:
$$
\expect{S\sim D_{\vp}}{\acc_{D_A}(h^*)}
=\expect{S\sim D_{\vp'}}{\acc_{D_B}(h^*)}
$$
\end{proposition}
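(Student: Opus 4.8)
The plan is to build a single metric symmetry of $\FeaturesLabels$ that exchanges the two groups, carries $D_\vp$ to $D_{\vp'}$, and intertwines with the nearest-neighbour rule, so that the whole identity collapses to a change of variables. Let $\sigma:\Features\to\Features$ be the isometric involution underlying the symmetric construction of \Cref{subsec:knn_construction} — the one exchanging the support of $D_\mathrm{pos}$ with that of $D_\mathrm{neg}$ — and set $\tau(x,y)=(\sigma(x),-y)$, an involution of $\FeaturesLabels$. Since $\sigma$ maps the feature-marginal of $D_\mathrm{pos}$ onto that of $D_\mathrm{neg}$ and $\tau$ negates labels, $\tau_* D_\mathrm{pos}=D_\mathrm{neg}$ and $\tau_* D_\mathrm{neg}=D_\mathrm{pos}$; and because flipping a label with probability $\alpha$ is a process symmetric in the two labels, $\tau$ commutes with label noise, i.e.\ $\tau_*\tilde{D}^\alpha_\mathrm{pos}=\tilde{D}^\alpha_\mathrm{neg}$ and conversely.

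From here I would push the definitions of $D_A,D_B$ in \Cref{subsec:knn_construction} through $\tau$ and use linearity of the pushforward to get $\tau_* D_A=\beta\,\tilde{D}^\alpha_\mathrm{neg}+(1-\beta)D_\mathrm{pos}=D_B$ and symmetrically $\tau_* D_B=D_A$. Writing $\vp=(1-p,p)$ and $\vp'=(p,1-p)$, linearity once more yields
$$\tau_* D_\vp=(1-p)\,\tau_* D_A+p\,\tau_* D_B=(1-p)D_B+pD_A=D_{\vp'}.$$
Consequently the map $S\mapsto\tau(S)$ (applying $\tau$ to each example) is a measurable bijection on samples that pushes $D_\vp^{\,n}$ forward to $D_{\vp'}^{\,n}$. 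Next I would record the equivariance of $k$-NN under $\tau$: for every training set $S$ and every feature point $x$ at which the neighbour ranking has no ties, the classifier $h_{\tau(S)}$ trained on the transformed set satisfies $h_{\tau(S)}(\sigma(x))=-h_S(x)$. Indeed, $\sigma$ is an isometry, so it sends the $k$ nearest neighbours of $x$ in $S$ to the $k$ nearest neighbours of $\sigma(x)$ in $\tau(S)$; all of their labels are negated by $\tau$, so with $k$ odd the majority vote is negated. The excluded tie set is $D_\vp$-null because the feature distributions are continuous, hence invisible to the expectations below, and the ``far apart'' hypothesis of \Cref{subsec:knn_construction} plays no further role here.

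Finally I would assemble the pieces. Fix a sample $S$; using $D_B=\tau_* D_A$ and then the equivariance, with the substitution $(x,y)=\tau(x_0,y_0)$,
$$\acc_{D_B}(h_{\tau(S)})=\prob{(x_0,y_0)\sim D_A}{h_{\tau(S)}(\sigma(x_0))=-y_0}=\prob{(x_0,y_0)\sim D_A}{h_S(x_0)=y_0}=\acc_{D_A}(h_S).$$
Taking the expectation over $S\sim D_\vp^{\,n}$ and using that $S\mapsto\tau(S)$ pushes $D_\vp^{\,n}$ to $D_{\vp'}^{\,n}$ gives $\expect{S\sim D_\vp}{\acc_{D_A}(h_S)}=\expect{S\sim D_\vp}{\acc_{D_B}(h_{\tau(S)})}=\expect{S'\sim D_{\vp'}}{\acc_{D_B}(h_{S'})}$, which is exactly the claim. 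The one genuinely load-bearing step is the first: extracting the involution $\tau$ from the ``defined symmetrically'' construction and checking both that it is a metric symmetry (so that $k$-NN commutes with it) and that it intertwines the label-noise operators $\tilde{(\cdot)}^\alpha$. Once that is pinned down, the distributional identity $\tau_* D_\vp=D_{\vp'}$ and the algorithmic identity $h_{\tau(S)}\circ\sigma=-h_S$ are both mechanical, and everything else is a change of variables.
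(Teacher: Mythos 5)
Your proof is correct and is essentially the paper's argument made precise: the paper disposes of this proposition with the one-liner ``from symmetry of the definition,'' and your involution $\tau(x,y)=(\sigma(x),-y)$, the pushforward identities $\tau_* D_A=D_B$ and $\tau_* D_\vp=D_{\vp'}$, and the $k$-NN equivariance $h_{\tau(S)}\circ\sigma=-h_S$ are exactly the formal content behind that symmetry claim. The only caveat, which you already flag, is that the isometric $\sigma$ exchanging the two subgroups (and the tie-free/atomless condition) must be read off the ``defined symmetrically'' construction rather than being stated explicitly in it, which is precisely what the paper's proof also implicitly relies on.
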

\begin{proof}
From symmetry of the definition. 
Intuitively, $\expect{S\sim D_{\vp}}{\acc_{D_B}(h^*)}$ can be obtained by reflecting $\expect{S\sim D_{\vp}}{\acc_{D_A}(h^*)}$ across the $\vp$ axis.
\end{proof}

\begin{proposition}[Expected prediction of $k$-NN]
\label{claim:knn_expected_prediction}
Let $D_x\in\DistOver{\Features}$ be a distribution over features, let $\alpha>0$ be a noise parameter, and let $k$ be an odd number greater or equal to $1$. Denote the label distribution by $D^\alpha_y=1-2\mathrm{Bernoulli}(\alpha)$, and denote the joint feature-label distribution by $D=D_x\otimes D_y^\alpha \in \DistOver{\FeaturesLabels}$. Let $h:\Features\to\Set{-1,1}$ be a $k$-Nearest-Neighbors ($k$-NN) classifier trained on a dataset $S\sim D^n$ with $n\ge k$, and let $x\in \Features$.
It holds that:
$$
\expect{S\sim D^n}{h(x)} = -1+2\phi_k(\alpha)
$$
Where $\phi_k(\alpha)$ the cumulative distribution function (CDF) of a $\mathrm{Binomial}\left(k, \alpha\right)$ random variable, taken at $\left\lfloor \frac{k}{2} \right\rfloor$:
$$
\phi_k(\alpha) = \Pr\left(\mathrm{Binomial}\left(k, \alpha\right) \le \left\lfloor \frac{k}{2} \right\rfloor\right)
$$
\end{proposition}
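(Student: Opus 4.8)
The plan is to exploit the product structure $D = D_x \otimes D_y^\alpha$, which renders the labels of the training points independent of their features. First I would condition on a draw of the $n$ feature vectors $X_1,\dots,X_n$ together with the (fixed) query point $x$. Since $n \ge k$, the set of indices of the $k$ nearest neighbors of $x$, call it $N(x) \subseteq [n]$, is well-defined (under whatever fixed tie-breaking rule $k$-NN uses for equidistant points), and it is a measurable function of the features alone. Crucially, conditioned on $N(x) = \Set{i_1,\dots,i_k}$, the labels $Y_{i_1},\dots,Y_{i_k}$ remain i.i.d.\ draws from $D_y^\alpha$, because the labels $Y_1,\dots,Y_n$ were generated independently of all features. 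Hence the number of selected neighbors carrying label $-1$, namely $M = \sum_{j=1}^{k}\Indicator{Y_{i_j}=-1}$, is $\mathrm{Binomial}(k,\alpha)$ conditionally on the features, and therefore also unconditionally.

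Next I would translate the majority vote into a statement about $M$. Since $k$ is odd and labels lie in $\Set{-1,1}$, the vote can never tie: the $k$-NN classifier outputs $h(x) = -1$ exactly when strictly more than half of the neighbors are labeled $-1$, i.e.\ when $M \ge \lfloor k/2\rfloor + 1$, and $h(x) = +1$ otherwise, i.e.\ precisely when $M \le \lfloor k/2\rfloor$. Consequently $\prob{S\sim D^n}{h(x)=1} = \prob{}{\mathrm{Binomial}(k,\alpha) \le \lfloor k/2\rfloor} = \phi_k(\alpha)$, and $\prob{S\sim D^n}{h(x)=-1} = 1 - \phi_k(\alpha)$.

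Finally I would compute the expectation directly: $\expect{S\sim D^n}{h(x)} = (+1)\cdot\phi_k(\alpha) + (-1)\cdot\bigl(1-\phi_k(\alpha)\bigr) = 2\phi_k(\alpha) - 1$, which is the claimed identity $\expect{S\sim D^n}{h(x)} = -1 + 2\phi_k(\alpha)$.

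I do not expect a substantive obstacle here. The only step that requires genuine care is the independence argument in the first paragraph: one must articulate precisely why the event $\Set{N(x) = \Set{i_1,\dots,i_k}}$ is determined by the features, so that the $k$ neighbor labels are still an i.i.d.\ sample from $D_y^\alpha$ after conditioning on it — this is where the assumption $D = D_x \otimes D_y^\alpha$ is used, and it is the only place the tie-breaking convention could matter (it does not, since any feature-only rule yields the same conditional law of the labels). The remaining steps — the observation that $k$ odd precludes vote ties, and the binomial-tail evaluation — are routine one-liners.
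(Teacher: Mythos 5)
Your proposal is correct and follows essentially the same route as the paper's proof: translate the majority vote into the event that the number of negatively labeled neighbors $N_\mathrm{neg}(x)\sim\mathrm{Binomial}(k,\alpha)$ is at most $\lfloor k/2\rfloor$, then compute $\expect{S}{h(x)}=-1+2\Pr(h(x)=1)=-1+2\phi_k(\alpha)$. The only difference is that you spell out the conditioning-on-features argument justifying the binomial law of the neighbor labels, which the paper compresses into the single sentence ``since features and labels in $D$ are independent''---a welcome elaboration, not a different approach.
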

\begin{proof}
Denote the $k$-NN training set by $S\sim D^n$. For any $x\in\Features$, denote its $k$ nearest neighbors by $N(x) \in \left(\FeaturesLabels\right)^n$. Denote by $N_\mathrm{neg}(x)$ the number of neighbors with negative labels:
$$
N_\mathrm{neg}(x) = \Size{\Set{(x',y')\in N(x)\mid y'=-1}}
$$
Since features and labels in $D$ are assumed to be independent, the number of neighbors with negative labels is a binomial random variable:
$$
N_\mathrm{neg}(x) \sim \mathrm{Binomial}\left(
k,\alpha\right)
$$
As a $k$-NN classifier predicts the label according to the majority label in $N(x)$, the expectation value of the label of $x$, taken over the randomness of the training set, depends on the cumulative distribution function of $N_\mathrm{neg}(x)$:
\begin{equation*}
\begin{aligned}
\expect{S\sim D^n}{h(x)}
&=-1+2\Pr\left(h(x)=1\right)
\\&=
-1+2\Pr\left(N_\mathrm{neg}(x)\le \left\lfloor\frac{k}{2}\right\rfloor\right)
\end{aligned}
\end{equation*}
And using the definition of $\phi_k(\alpha)$, we obtain:
$$
\expect{S\sim D^n}{h(x)}
= -1+2\phi_k(\alpha)
$$
\end{proof}

\begin{lemma}[Expected accuracy]
\label{lemma:knn_expected_acc}
For the distributions $D_A$, $D_B$ defined above, let $\vp=(1-p,p)\in\DistOver{\Set{A,B}}$, and let $k>0$ be an odd integer. Denote by $h_\vp$ the $k$-Nearest-Neighbor ($k$-NN) classifier trained on data sampled from $D_\vp$. Assume that the training set contains feature vectors from both subgroups, and that the supports of the feature vector distributions in the datasets $D_\mathrm{pos}$, $D_\mathrm{neg}$ are sufficiently far apart. 
The expected accuracy of $h_\vp$ with respect to $D_A$ is:
\begin{equation}
\label{eq:knn_expected_acc_a}
\begin{aligned}
\expect{S\sim D_\vp^n}{\acc_{D_A} (h^*)} =&
\beta\left(
\frac{1+(1-2\alpha)
\left(-1+2\phi_k\left(\frac{\beta(1-p)}{\gamma}\alpha\right)\right)
}{2}
\right)
\\&+
(1-\beta)\left(
\frac{1+(-1)
\left(1-2\phi_k\left(\frac{\beta p}{1-\gamma}\alpha\right)\right)
}{2}
\right)
\end{aligned}
\end{equation}
Where $\phi_k(\alpha)$ the cumulative distribution function of a $\mathrm{Binomial}\left(k, \alpha\right)$ variable, taken at $\left\lfloor \frac{k}{2} \right\rfloor$.
\end{lemma}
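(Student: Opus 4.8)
The plan is to reduce the statement to two facts already available: the mixture decomposition of \Cref{claim:label_noise_mixture_coefficients}, which writes the training distribution $D_\vp$ as a weighted combination of a noisy positive distribution and a noisy negative distribution, and \Cref{claim:knn_expected_prediction}, which computes the expected $k$-NN prediction under a product feature--label distribution with symmetric label noise. First I would apply \Cref{claim:label_noise_mixture_coefficients} to write $D_\vp = \gamma\,\tilde{D}^{q_{\mathrm{pos}}\alpha}_{\mathrm{pos}} + (1-\gamma)\,\tilde{D}^{q_{\mathrm{neg}}\alpha}_{\mathrm{neg}}$ with $\gamma = p - 2p\beta + \beta$, $q_{\mathrm{pos}} = \tfrac{\beta(1-p)}{\gamma}$ and $q_{\mathrm{neg}} = \tfrac{\beta p}{1-\gamma}$; one checks directly that $\gamma\in(0,1)$ and $q_{\mathrm{pos}},q_{\mathrm{neg}}\in[0,1]$, so these are valid flip probabilities. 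On the test side I would split the accuracy along $D_A = \beta\,\tilde{D}^{\alpha}_{\mathrm{pos}} + (1-\beta)\,D_{\mathrm{neg}}$, obtaining
\[
\expect{S\sim D_\vp^n}{\acc_{D_A}(h^*)} = \beta\,\expect{S}{\expect{(x,y)\sim\tilde{D}^\alpha_{\mathrm{pos}}}{\Indicator{h^*(x)=y}}} + (1-\beta)\,\expect{S}{\expect{(x,y)\sim D_{\mathrm{neg}}}{\Indicator{h^*(x)=y}}},
\]
so it suffices to evaluate each of the two inner double expectations.

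The key step is a locality argument that lets me apply \Cref{claim:knn_expected_prediction} one region at a time. Since the feature supports of $D_{\mathrm{pos}}$ and $D_{\mathrm{neg}}$ are assumed to be far apart and the training set contains at least $k$ points in each region (a mild strengthening of the stated hypothesis, automatic once $n$ is large, and otherwise handled by conditioning on this event or passing to the population limit), every test point whose feature lies in the positive region has all $k$ of its nearest training neighbors drawn from the positive-region subsample of $S$. That subsample is i.i.d.\ from $\tilde{D}^{q_{\mathrm{pos}}\alpha}_{\mathrm{pos}}$, whose features and labels are independent, so \Cref{claim:knn_expected_prediction} applies verbatim and gives $\expect{S}{h^*(x)} = -1 + 2\phi_k(q_{\mathrm{pos}}\alpha)$, the same value for every such $x$. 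For a test feature in the negative region the relevant neighbors are i.i.d.\ from $\tilde{D}^{q_{\mathrm{neg}}\alpha}_{\mathrm{neg}}$, whose clean label is $-1$; re-running the binomial-tail computation of \Cref{claim:knn_expected_prediction} with the roles of $+1$ and $-1$ exchanged then yields $\expect{S}{h^*(x)} = 1 - 2\phi_k(q_{\mathrm{neg}}\alpha)$.

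To finish I would invoke the elementary identity $\Indicator{Y = Z} = \tfrac12(1 + YZ)$ for $Y,Z\in\{-1,1\}$: for a fixed test pair $(x,y)$, taking the expectation over $S$ gives $\expect{S}{\Indicator{h^*(x)=y}} = \tfrac12\big(1 + y\,\expect{S}{h^*(x)}\big)$, and since $\expect{S}{h^*(x)}$ is constant over each region, averaging over $(x,y)$ simply replaces $y$ by its mean. Under $\tilde{D}^\alpha_{\mathrm{pos}}$ the test label has mean $1-2\alpha$, so the positive term equals $\tfrac12\big(1 + (1-2\alpha)(-1 + 2\phi_k(q_{\mathrm{pos}}\alpha))\big)$; under $D_{\mathrm{neg}}$ the test label is deterministically $-1$, so the negative term equals $\tfrac12\big(1 - (1 - 2\phi_k(q_{\mathrm{neg}}\alpha))\big)$. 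Weighting these by $\beta$ and $1-\beta$ respectively and substituting back the values of $q_{\mathrm{pos}}$ and $q_{\mathrm{neg}}$ reproduces \cref{eq:knn_expected_acc_a} exactly.

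I expect the main obstacle to be making the locality/independence step fully rigorous --- arguing that the $k$-NN label at a test point in a given region is a deterministic function of only that region's i.i.d.\ training subsample, and that features and labels remain independent within that subsample so \Cref{claim:knn_expected_prediction} transfers without modification --- together with the bookkeeping of the label swap in the negative region. The only substantive extra hypothesis used is that the training set contains at least $k$ points from each subgroup; the remaining algebra is routine.
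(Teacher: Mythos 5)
Your proposal is correct and follows essentially the same route as the paper's proof: decompose the training distribution via \Cref{claim:label_noise_mixture_coefficients}, use the far-apart-supports assumption to localize the $k$ nearest neighbors to the test point's own subgroup, apply \Cref{claim:knn_expected_prediction} (with the sign swap for the negative cluster), and combine via the identity $\Indicator{h(x)=y}=\tfrac12(1+y\,h(x))$ weighted by $\beta$ and $1-\beta$. Your explicit note that one needs at least $k$ training points per region is a minor sharpening of the paper's stated hypothesis, not a different argument.
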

\begin{proof}

Denote the training set by $S=\Set{(x_i,y_i)}_{i=1}^n \sim D_\vp^n$, and denote the learned 1-NN classifier by $h_\vp$. 
For any $x\in\Features$, denote its nearest neighbor in the training set by $i^*(x)=\argmin_{i\in[n]} d(x,x_i)$.
The expected accuracy of $h_\vp$ with respect to $D_A$ is given by:
\begin{equation*}
\begin{aligned}
\expect{S\sim D_\vp^n}{\acc_{D_A}(h)}
&= 
\expect{S\sim D_\vp^n, (x,y)\sim D_A}{\frac{1+y y_{i^*(x)}}{2}}
\end{aligned}
\end{equation*}
Since the supports of the subgroups are sufficiently far apart, the nearest neighbor $i^*(x)$ always originates from the same subgroup as $x$.
From this we obtain that $y$ and $y_{i^*(x)}$ are independent given the subgroup from which $(x,y)$ was sampled ($\in\Set{\mathrm{pos}, \mathrm{neg}}$). Thus, for noisy data from the positive subgroup ($D^\alpha_\mathrm{pos}$), we have:
\begin{equation*}
\begin{aligned}
\expect{S\sim D_\vp^n}{\acc_{\tilde{D}_\mathrm{pos}^\alpha}(h)}
&=
\expect{S\sim D_\vp^n, (x,y)\sim \tilde{D}^\alpha_\mathrm{pos}}{\frac{1+y y_{i^*(x)}}{2}}
\\&= 
\frac{1+
\expect{
}{y}
\expect{
}{y_{i^*(x)}}
}{2}
\\&=
\frac{1+(1-2\alpha)
\left(-1+2\phi_k\left(\frac{\beta(1-p)}{\gamma}\alpha\right)\right)
}{2}
\end{aligned}
\end{equation*}
Where the identity 
$\expect{}{y}=(1-2\alpha)$ is given by the definition of $\tilde{D}_\mathrm{pos}^\alpha$, the
definition of $\gamma$ is given by \Cref{claim:label_noise_mixture_coefficients}, 
and the identity $\expect{}{y_{i^*(x)}}=
\left(-1+2\phi_k\left(\frac{\beta(1-p)}{\gamma}\alpha\right)\right)
$ 
is given by \Cref{claim:knn_expected_prediction}.

Similarly, for clean data from the negative subgroup ($D_\mathrm{neg}$), we obtain:
\begin{equation*}
\begin{aligned}
\expect{S\sim D_\vp^n}{\acc_{D_\mathrm{neg}}(h)}
&=
\expect{S\sim D_\vp^n, (x,y)\sim D_\mathrm{neg}}{\frac{1+y y_{i^*(x)}}{2}}
\\&= 
\frac{1+
\expect{
}{y}
\expect{
}{y_{i^*(x)}}
}{2}
\\&=
\frac{1+(-1)
\left(1-2\phi_k\left(\frac{\beta p}{1-\gamma}\alpha\right)\right)
}{2}
\end{aligned}
\end{equation*}
and jointly:
\begin{equation*}
\begin{aligned}
\expect{S\sim D_\vp^n}{\acc_A(h)}
=& 
\beta
\expect{S\sim D_\vp^n}{\acc_{\tilde{D}_\mathrm{pos}^\alpha}(h)}
+
(1-\beta)
\expect{S\sim D_\vp^n}{\acc_{D_\mathrm{neg}}(h)}
\end{aligned}
\end{equation*}
And finally, plugging in the expected accuracy values calculated above yields the desired result.
\end{proof}

\begin{figure*}
    \centering
    \includegraphics[width=\textwidth]{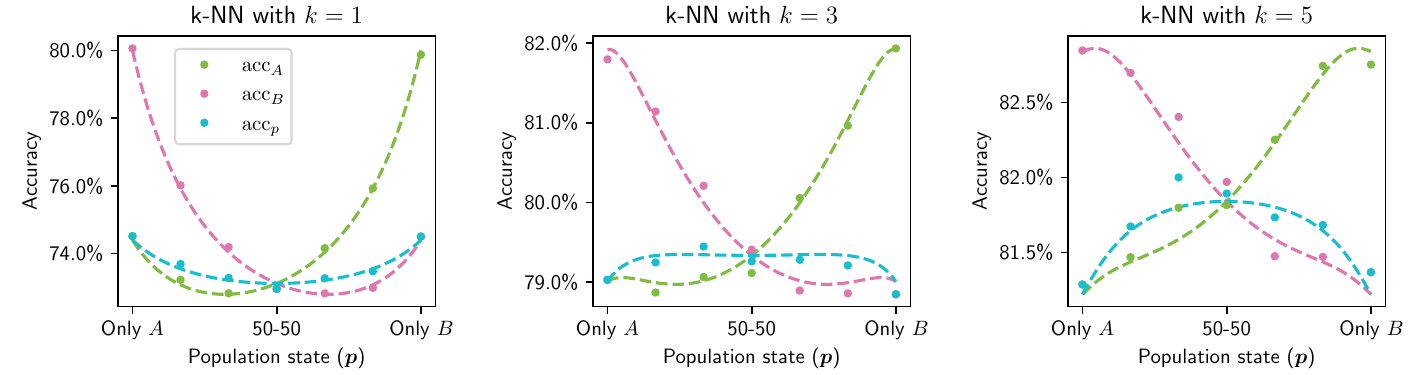}
    \vspace{-1em}
    \caption{
    k-NN accuracy for the construction specified in \Cref{subsec:knn_construction}. The plots show alignment between the theoretical curves
    (\Cref{lemma:knn_expected_acc}) and empirical simulations (mean accuracies represented by dots). For $k=3,5$ the game has a stable beneficial coexistence.
    }
\label{fig:knn_theory}
\end{figure*}

\begin{proposition}
\label{prop:1nn_coexistence}
For $k=1$, a prediction game induced by a $1$-NN classifier trained on dataset with label noise has the following properties:
\begin{enumerate}
    \item A stable coexistence equilibrium exists if and only if $\alpha \in \left(0,1-\frac{1}{2\beta}\right)$. If a stable coexistence equilibrium exists, then it exists at the uniform population state $\vp=(0.5,0.5)$.
    \item The overall welfare at the coexistence state $\vp=(0.5,0.5)$ is lower than the overall welfare at extinction states $\vp\in\Set{(1,0),(0,1)}$.
\end{enumerate}
\end{proposition}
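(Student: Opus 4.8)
The plan is to reduce the whole analysis to the one-dimensional segment $\vp=(s,1-s)$, $s\in[0,1]$ (taking $s=p_A$), using two structural facts about the construction: the reflection--exchange symmetry, which by \Cref{claim:label_noise_symmetry_argument} gives $F_B((s,1-s))=F_A((1-s,s))$; and the fact that every two-group population game is a potential game whose potential $f$ is given by \Cref{eq:two_group_potential}, so that by \citep[Theorem 8.2.1]{sandholm2010population} stable equilibria are exactly the local maximizers of $f$, hence stable \emph{coexistence} equilibria are exactly its \emph{interior} local maximizers. Both parts of the proposition then follow from understanding the shape of $f$.

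First I would get a closed form for the game. Specializing \Cref{lemma:knn_expected_acc} to $k=1$, where $\phi_1(\alpha)=\Pr(\mathrm{Binomial}(1,\alpha)\le 0)=1-\alpha$, so $-1+2\phi_1(q)=1-2q$, and letting $\Gamma(s)=1-\beta-s(1-2\beta)$ together with the effective per-subgroup noise rates $q_{\mathrm{pos}}(s)=\tfrac{\beta s}{\Gamma(s)}\alpha$, $q_{\mathrm{neg}}(s)=\tfrac{\beta(1-s)}{1-\Gamma(s)}\alpha$ be as in \Cref{claim:label_noise_mixture_coefficients} (evaluated at $p=1-s$), one obtains
\[
\Psi(s):=F_A((s,1-s))=\tfrac{\beta}{2}\bigl(1+(1-2\alpha)(1-2q_{\mathrm{pos}}(s))\bigr)+(1-\beta)\bigl(1-q_{\mathrm{neg}}(s)\bigr).
\]
By symmetry $F_B((s,1-s))=\Psi(1-s)$, so the scalar field driving the dynamics is $G(s):=F_A((s,1-s))-F_B((s,1-s))=\Psi(s)-\Psi(1-s)$, which is odd about $s=\tfrac12$; in particular $G(\tfrac12)=0$, so the uniform state is always a (restricted) equilibrium.

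The technical heart is the sign of $G'$. Differentiating, the numerators of $q_{\mathrm{pos}}'$ and $q_{\mathrm{neg}}'$ collapse to constants, $q_{\mathrm{pos}}'(s)=\alpha\beta(1-\beta)/\Gamma(s)^2$ and $q_{\mathrm{neg}}'(s)=-\alpha\beta(1-\beta)/(1-\Gamma(s))^2$; using $\Gamma(1-s)=1-\Gamma(s)$ and substituting into $G'(s)=\Psi'(s)+\Psi'(1-s)$, all dependence on $s$ factors out:
\[
G'(s)=\alpha\beta(1-\beta)\Bigl(\tfrac{1}{\Gamma(s)^2}+\tfrac{1}{(1-\Gamma(s))^2}\Bigr)\,\bigl(1-2\beta+2\alpha\beta\bigr),
\]
where the prefactor is strictly positive on $(0,1)$ (note $\Gamma$ and $1-\Gamma$ stay in $(0,1)$ since $\beta\in[\tfrac12,1)$). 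Hence $\sign\bigl(G'(s)\bigr)=\sign\bigl(1-2\beta+2\alpha\beta\bigr)$ for every $s\in(0,1)$. I expect establishing this cancellation --- so that the sign of $G'$ is everywhere governed by the single quantity $1-2\beta+2\alpha\beta$ --- to be the main obstacle; once it is in hand the rest is bookkeeping.

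Because $f'=G$ has constant sign and $G(\tfrac12)=0$, only $s=\tfrac12$ can be an interior critical point of $f$. If $1-2\beta+2\alpha\beta<0$, i.e.\ $\alpha<1-\tfrac{1}{2\beta}$ (which forces $\beta>\tfrac12$ and $\alpha\in(0,\tfrac12)$, so this regime is nonempty and lies inside the admissible parameter range), then $G>0$ on $(0,\tfrac12)$ and $G<0$ on $(\tfrac12,1)$, so $f$ is maximized uniquely at $s=\tfrac12$ while its remaining critical points --- the vertices --- are minimizers; thus the unique stable equilibrium is the coexistence state $(\tfrac12,\tfrac12)$. If $1-2\beta+2\alpha\beta\ge 0$ (in particular whenever $\beta=\tfrac12$), then either $s=\tfrac12$ is an interior minimizer of $f$ and the vertices are its only maximizers, or (at equality) $G\equiv 0$ and $f$ is constant; in either case no coexistence equilibrium is (strictly) stable. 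Since the only possible interior stable equilibrium is $s=\tfrac12$, this establishes claim~1. For claim~2, overall welfare $\sum_k p_kF_k(\vp)$ equals $\Psi(\tfrac12)$ at $(\tfrac12,\tfrac12)$ (the two fitnesses agree there) and $\Psi(1)$ at each vertex ($\sum_k p_kF_k((1,0))=F_A((1,0))=\Psi(1)$, and $F_B((0,1))=\Psi(1)$ by symmetry). Plugging $q_{\mathrm{pos}}(\tfrac12)=q_{\mathrm{neg}}(\tfrac12)=\alpha\beta$, $q_{\mathrm{pos}}(1)=\alpha$, $q_{\mathrm{neg}}(1)=0$ into $\Psi$ gives $\Psi(1)-\Psi(\tfrac12)=2\alpha^2\beta(1-\beta)>0$, so welfare at coexistence is strictly below welfare at either extinction state, which is claim~2 (the same computation shows each group is individually worse off, so for $k=1$ coexistence is competitive rather than mutualistic, unlike $k\ge 3$).
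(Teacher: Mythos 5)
Your proposal is correct, and it reaches the same conclusions from the same ingredients (\Cref{lemma:knn_expected_acc} at $k=1$, \Cref{claim:label_noise_mixture_coefficients}, and the reflection--exchange symmetry of \Cref{claim:label_noise_symmetry_argument}), but by a genuinely more global route than the paper. The paper's proof simply evaluates the closed-form accuracies at the three special states $(1,0)$, $(0,1)$, $(0.5,0.5)$, derives the threshold $\alpha<1-\tfrac{1}{2\beta}$ from the vertex-invadability comparison $\acc_A(h_{(1,0)})<\acc_B(h_{(1,0)})$, and leaves the "iff", the uniqueness, and the location of the stable interior equilibrium to symmetry and an implicit appeal to the two-group phase portrait. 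You instead compute the full fitness-difference $G(s)=\Psi(s)-\Psi(1-s)$ on the interior, verify the key cancellation $G'(s)=\alpha\beta(1-\beta)\bigl(\Gamma(s)^{-2}+(1-\Gamma(s))^{-2}\bigr)(1-2\beta+2\alpha\beta)$ (which I checked and is correct, using $q_{\mathrm{pos}}'=\alpha\beta(1-\beta)/\Gamma^2$, $q_{\mathrm{neg}}'=-\alpha\beta(1-\beta)/(1-\Gamma)^2$ and $\Gamma(1-s)=1-\Gamma(s)$), and then invoke the two-group potential characterization (\Cref{eq:two_group_potential} together with \citep[Theorem 8.2.1]{sandholm2010population}) to read off the complete equilibrium structure. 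This buys a rigorous proof of both directions of the "if and only if", of the fact that the only possible stable coexistence point is $\vp=(0.5,0.5)$, and a clean treatment of the degenerate cases ($\alpha=0$, $\beta=\tfrac12$, and the threshold equality, where $G\equiv0$), at the cost of one derivative computation; the paper's argument is shorter but establishes these claims only informally. Your vertex values agree with the paper's ($\Psi(1)=1+2\alpha\beta(\alpha-1)$, $\Psi(0)=1-\alpha$, $\Psi(\tfrac12)=1+2\alpha\beta(\alpha\beta-1)$), your threshold coincides with the paper's condition, and your welfare gap $\Psi(1)-\Psi(\tfrac12)=2\alpha^2\beta(1-\beta)$ is in fact the correct value --- the paper's stated $\alpha^2\beta(1-\beta)$ drops a factor of $2$, which is immaterial since only the sign is used.
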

\begin{proof}
By applying \Cref{lemma:knn_expected_acc}. When $k=1$, the $\phi_k$ function satisfies:
$$
\phi_{k=1}(\alpha)=\Pr\left(\mathrm{Binomial}(1,\alpha)\le 0\right)=1-\alpha
$$
Plugging into \cref{eq:knn_expected_acc_a} and using the symmetry argument in \Cref{claim:label_noise_symmetry_argument}, we obtain:
\begin{equation*}
\begin{aligned}
\acc_A\left(h^*_{\vp=(1,0)}\right)
=
\acc_B\left(h^*_{\vp=(0,1)}\right)
&= 2\alpha\beta(\alpha-1) + 1
\\
\acc_B\left(h^*_{\vp=(1,0)}\right)
=
\acc_A\left(h^*_{\vp=(0,1)}\right)
&= 1-\alpha
\\
\acc_A\left(h^*_{\vp=(0.5,0.5)}\right)
=
\acc_B\left(h^*_{\vp=(0.5,0.5)}\right)
&= 2\alpha\beta(\alpha\beta-1)+1
\end{aligned}
\end{equation*}
For (1), a stable equilibrium exists if:
$$
\acc_A\left(h^*_{\vp=(1,0)}\right)<\acc_B\left(h^*_{\vp=(1,0)}\right)
$$
Plugging in the values calculated above:
$$
2\alpha\beta(\alpha-1)+1<1-\alpha
$$
Which is satisfied when $\alpha\in\left( 0, 1-\frac{1}{2\beta} \right)$.

For (2), we note that $\beta\le1$, and therefore:
$$
\acc_A\left(h^*_{\vp=(1,0)}\right) - \acc_A\left(h^*_{\vp=(0.5,0.5)}\right)
=
\alpha\beta(\alpha-\alpha\beta)
=
\alpha^2\beta(1-\beta)
\ge 0
$$
\end{proof}

\begin{proof}[Proof of \Cref{thm:knn_coexistence}]
By \Cref{prop:1nn_coexistence}, using e.g. $(\alpha,\beta)=(0.2,0.8)$. We observe that the conditions of the proposition are satisfied for this choice of constants, as $\alpha=0.2<0.375=1-\frac{1}{2\cdot0.8}=1-\frac{1}{2\beta}$.
\end{proof}

\subsection{Hard-SVM With Finite Data}
\label{subsec:finite_data_coexistence_proof}

All practical learning algorithms must rely on a finite sample of data points for training a classifier.
Learnability (in the PAC sense) implies that as data size grows,
the sample becomes increasingly representative of the distribution.
But the \emph{rate} at which variance diminishes need not be equally across all regions of a distribution.
In a sense, this means that one region can be learned `faster' (i.e., with less data) than another.
Our final example exploits this to enable coexistence.

\begin{theorem}
\label{thm:finite_data_coexistence}
For the Hard-SVM algorithm, 
there exist linearly-separable data distributions
induce an evolutionary prediction game with
beneficial coexistence.
\end{theorem}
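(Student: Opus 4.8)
The plan is to exhibit explicit linearly-separable group distributions for which the finite-sample Hard-SVM classifier is systematically suboptimal on each group \emph{in isolation}, yet becomes (near-)optimal for \emph{both} groups once their data is pooled. The mechanism, as anticipated above, is that the Hard-SVM separator --- the minimizer of $\Norm{w}^2$ subject to $y_i(w\cdot x_i+b)\ge 1$ --- is pinned by the sampled points closest to the opposite class, so a low-probability cluster of a group that sits between that group's bulk and the true boundary is missed with constant probability in a finite sample, causing the learned separator to ``jump over'' it and misclassify it. It suffices to work in one dimension, where a Hard-SVM classifier trained on a separable sample places its threshold at the midpoint between the rightmost sampled negative and the leftmost sampled positive. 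Take $\Labels=\Set{-1,1}$, groups $\Set{A,B}$, and let $D_B$ be the reflection of $D_A$ through the origin with labels flipped; fix a small mass parameter $\mu>0$. Let $D_A$ put mass $\tfrac12$ on negatives uniform on $[-0.2,-0.1]$, mass $\mu$ on positives uniform on $[0.1,0.2]$ (the ``rare near-boundary cluster''), and mass $\tfrac12-\mu$ on positives uniform on $[3,4]$; then $D_B$ is supported on the mirror image, with its positive bulk on $[0.1,0.2]$ and a rare negative cluster on $[-0.2,-0.1]$. Every mixture $D_\vp$ is separated by $x=0$, so Hard-SVM is always feasible and the midpoint characterization applies.

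The quantities to compute are $F_k(\SingletonDistribution_k)$ and $F_k$ at the symmetric state $\vp^*=(\tfrac12,\tfrac12)$. For $F_A(\SingletonDistribution_A)$: among $n$ draws from $D_A$ the rare cluster $[0.1,0.2]$ is absent with probability $(1-\mu)^n$, in which case the leftmost sampled positive falls in $[3,4]$, the threshold drifts to $\approx 1.45$, and the rare cluster is misclassified, costing exactly its $D_A$-mass $\mu$; otherwise the threshold is within $O(1/n)$ of $0$ and accuracy is $1$. Hence $F_A(\SingletonDistribution_A)\approx 1-\mu(1-\mu)^n$, and symmetrically for $B$ (using \Cref{claim:fixed_h_loss_convex_combination} to expand the mixture accuracies). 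For $F_A(\vp^*)$: at $\vp^*$ the region $[0.1,0.2]$ carries mass $\tfrac{\mu}{2}+\tfrac14\approx\tfrac14$ (the bulk of $D_B$ lands there) and $[-0.2,-0.1]$ likewise carries mass $\approx\tfrac14$, so both are sampled except with probability exponentially small in $n$; on that event the threshold is within $o(1)$ of $0$ and the entire common support is classified correctly, so $F_A(\vp^*)=F_B(\vp^*)=1-o(\mu)$. Choosing $n$ with $n\mu=\Theta(1)$ keeps $(1-\mu)^n$ bounded away from $0$ while keeping the mixture's near-boundary regions sampled with high probability, whence $F_k(\vp^*)>F_k(\SingletonDistribution_k)$ for $k\in\Set{A,B}$. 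By the reflection symmetry $F_A(\vp^*)=F_B(\vp^*)$, so $\support(\vp^*)\subseteq\argmax_k F_k(\vp^*)$ and $\vp^*$ is a Nash equilibrium of the induced game (\Cref{def:prediction_game}, \Cref{def:nash_equilibrium}) with both groups surviving; since each group's expected accuracy at $\vp^*$ strictly exceeds its accuracy in isolation, this coexistence is beneficial (mutualistic), which is the claim. This is the construction sketched in \Cref{fig:coexistence_mechanisms} (Right).

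The remaining steps are routine: (i) linear separability of every $D_\vp$ by $x=0$, so Hard-SVM is well-defined and the midpoint rule holds (after discarding the exponentially unlikely event that a sample contains a single label, which is absorbed into the lower-order terms); (ii) elementary concentration of the relevant extreme order statistics of the samples lying in $[-0.2,-0.1]$ and $[0.1,0.2]$, to make precise ``the threshold is within $o(1)$ of $0$''; and (iii) bookkeeping of which regions are misclassified for each drifted threshold and under which sampling events. The main obstacle is that $F_k(\vp)$ is the expectation of a \emph{non-smooth} functional of the random sample --- the Hard-SVM threshold depends on extreme order statistics and behaves discontinuously across the ``rare cluster sampled / not sampled'' events --- so the crux is to pick the cluster locations, the mass $\mu$, and the sample size $n$ so that \emph{simultaneously} the drift occurs with non-vanishing probability and costs $\Theta(\mu)$ in isolation, while it is suppressed in the mixture, which is exactly what makes all the required strict inequalities hold at once.
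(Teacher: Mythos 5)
Your proposal is correct, and it follows the same overall strategy as the paper's proof---a one-dimensional, reflection-symmetric two-group construction, the midpoint characterization of 1D Hard-SVM (\Cref{remark:1d_hard_svm}), an upper bound on each group's expected accuracy in isolation versus an exponentially-good lower bound at the symmetric mixture, and symmetry to get equal fitness (hence an equilibrium) at $\vp^*=(0.5,0.5)$, which together give mutualistic coexistence in the sense of \Cref{subsec:population_game_preliminaries}---but the concrete construction and the source of finite-sample bias are genuinely different. The paper (\Cref{subsec:hard_svm_construction}, \Cref{prop:finite_data_coexistence_formal}) uses asymmetric class \emph{variance}: each group has a zero-variance point mass at $-\varepsilon$ and a uniform positive class on $[\varepsilon,1+\varepsilon]$, so in isolation the boundary drifts into the high-variance class by the expected gap of the extreme order statistic, giving an accuracy penalty of order $1/n$ (this requires \Cref{claim:finite_sample_xmin_lower_bound}, a Hoeffding-plus-Beta-distribution bound on $\expect{}{x_{\min}}$), while at the balanced mixture the two point masses pin the margin at zero except with probability $2(3/4)^n$. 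Your construction instead uses a rare near-boundary cluster of mass $\mu$ that is missed entirely with probability $(1-\mu)^n$, incurring a deterministic loss of exactly $\mu$ on that event; with $n\mu=\Theta(1)$ this gives an isolation penalty of order $1/n$ via purely combinatorial (binomial/union-bound) reasoning, and at the mixture both quarter-mass near-boundary regions are sampled with exponentially high probability, which deterministically confines the threshold to $(-0.1,0.1)$ and yields accuracy exactly $1$ on that event. A small bonus of your route is that the ``concentration of order statistics'' you list as a remaining step is not actually needed---on the good events your threshold bounds are deterministic---so your argument avoids the paper's only genuinely probabilistic lemma at the cost of tuning $\mu$ against $n$ (mirroring the paper's choice $\varepsilon<\tfrac{1}{2n}$); the degenerate single-label-sample events you flag are handled the same way (absorbed into exponentially small terms). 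Note also that, like the paper, you should only claim the mixed equilibrium is beneficial, not stable---the paper's own discussion of this mechanism describes it as beneficial but unstable, and the theorem asks for no more.
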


Proof in \Cref{subsec:hard_svm_coexistence_proof}. The construction is illustrated in \Cref{fig:coexistence_mechanisms} (Right), and described formally in \Cref{subsec:hard_svm_construction}. Informally, the key property we leverage here is asymmetric class variance. In each group, one class has high variance, and the other class has low variance. When the dataset is of size $n$, this creates bias against the high-variance class, and thus convergence towards the optimal classifier at rate $O(n^{-1})$.
When the two groups coexist, this bias is balanced, and the rate of convergence is exponential---leading to coexistence
that is beneficial, but unstable.
\squeeze

\begin{definition}[Hard-SVM; {e.g. \citep[Section 15.1]{shalev2014understanding}}]
Given a linearly-separable training set $\Set{(x_i,y_i)}_{i=1}^n \in \left(\Reals^d\times \Set{-1,1}\right)^n$, the Hard-SVM algorithm outputs a linear classifier $\sgn(w^*\cdot x+b^*)$ which maximizes the margin of the decision hyperplane:
$$
(w^*, b^*) = \argmax_{(w,b); \Norm{w}=1} \min_{i\in[n]} y_i(w\cdot x+b)
$$
\end{definition}
\begin{remark}[One-dimensional Hard-SVM]
\label{remark:1d_hard_svm}
For one dimensional data, 
denote by $x^+_\mathrm{min}$ the minimal $x$ with a positive label in the training set by $x^+_\mathrm{min}=\min_i \Set{x_i \mid y_i=1}$. Similarly, denote by $x^-_\mathrm{max}$ the maximal $x$ with a negative label in the training set by $x^-_\mathrm{max}=\max_i \Set{x_i \mid y_i=-1}$.
When $x^+_\mathrm{min}\ge x^-_\mathrm{max}$, the Hard-SVM decision margin is given by: 
\begin{equation}
\label{eq:1d_hard_svm}
x_\mathrm{margin}
=
\frac{
x^-_\mathrm{max}
+
x^+_\mathrm{min}
}{2}
\end{equation}
\end{remark}
\subsubsection{Construction}
\label{subsec:hard_svm_construction}
Let $\varepsilon>0$, and denote by $\delta(x=a)$ the Dirac delta distribution centered around $x=a$.

\begin{equation*}
\begin{aligned}
X_A&\sim 0.5 \left(\mathrm{Uniform}\left([0,1]\right)+\varepsilon\right) + 0.5\delta(x=-\varepsilon)
\\
X_B& = -X_A
\\
Y|X &= \begin{cases}
    1 & X\ge 0
    \\
    -1 & X < 0
\end{cases}
\end{aligned}
\end{equation*}

$D_A$ is the distribution of tuples $(X_A,Y|X_A)$, and $D_B$ is defined correspondingly as $D_B=(X_B,Y|X_B)$. The distributions are illustrated in \Cref{fig:coexistence_mechanisms} (Right).

\subsubsection{Mutualistic Coexistence}
\label{subsec:hard_svm_coexistence_proof}
\begin{proposition}
\label{claim:finite_sample_xmin_lower_bound}
For a given positive integer $N$, Let $n\sim \mathrm{Binomial}\left(N,\frac{1}{2}\right)$, let $x_1,\dots,x_n\sim\mathrm{Uniform}\left([0,1]\right)$, and let $x_\mathrm{min}=\min_{i\in[n]} x_i$. Then for $N\ge 15$, it holds that:
$$
\expect{}{x_{\mathrm{min}}} \ge \frac{1}{N}
$$
\end{proposition}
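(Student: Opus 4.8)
The plan is to condition on the realized value of $n$ and collapse the resulting expression into a closed-form binomial sum. Writing $q_k=\binom{N}{k}2^{-N}$ for $\Pr[n=k]$, we have
\[
\mathbb{E}[x_\mathrm{min}]=\sum_{k=0}^{N} q_k\,\mathbb{E}\bigl[x_\mathrm{min}\mid n=k\bigr].
\]
The $k=0$ term contributes a non-negative amount under any reasonable convention for the minimum of an empty set (e.g.\ $+\infty$, or the right endpoint $1$), so it may be discarded for a lower bound. For $k\ge 1$, conditionally on $n=k$ the points $x_1,\dots,x_k$ are i.i.d.\ $\mathrm{Uniform}([0,1])$, so their minimum is $\mathrm{Beta}(1,k)$-distributed with mean $\tfrac{1}{k+1}$. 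Hence
\[
\mathbb{E}[x_\mathrm{min}]\ \ge\ 2^{-N}\sum_{k=1}^{N}\binom{N}{k}\frac{1}{k+1}.
\]

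Next I would evaluate this sum using the identity $\binom{N}{k}\frac{1}{k+1}=\frac{1}{N+1}\binom{N+1}{k+1}$, which reindexes it to $\frac{1}{N+1}\sum_{j=2}^{N+1}\binom{N+1}{j}=\frac{2^{N+1}-(N+2)}{N+1}$, using $\sum_{j=0}^{N+1}\binom{N+1}{j}=2^{N+1}$ and subtracting the $j=0,1$ terms. Substituting back gives the clean bound
\[
\mathbb{E}[x_\mathrm{min}]\ \ge\ \frac{2-(N+2)2^{-N}}{N+1}.
\]

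Finally, I would reduce the target inequality $\frac{2-(N+2)2^{-N}}{N+1}\ge\frac{1}{N}$ to an elementary estimate: multiplying through by the positive quantity $N(N+1)$ and rearranging, it is equivalent to $2^N\ge\frac{N(N+2)}{N-1}$. Since $\frac{N(N+2)}{N-1}=N+3+\frac{3}{N-1}\le N+4$ for $N\ge 4$, while $2^N\ge N+4$ for all $N\ge 4$ (immediate induction), the inequality holds with large margin for every $N\ge 15$ — indeed for all $N\ge 3$ — so the threshold in the statement is comfortably satisfied.

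\paragraph{Main obstacle.}
There is no genuine obstacle here. The only steps requiring a little care are the bookkeeping for the $n=0$ event (handled by discarding a non-negative term) and recalling the binomial identity $\binom{N}{k}\frac{1}{k+1}=\frac{1}{N+1}\binom{N+1}{k+1}$ that collapses the sum; everything afterward is a routine closed-form summation together with the fact that an exponential dominates a linear function.
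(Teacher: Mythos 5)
Your proof is correct, but it follows a genuinely different route from the paper's. The paper never evaluates the binomial sum: it writes $\expect{}{x_\mathrm{min}}=\expect{n}{\tfrac{1}{n+1}}$ via the same $\mathrm{Beta}(1,n)$ observation you use, but then controls $n$ with Hoeffding's inequality, $\prob{}{n\ge \tfrac{3N}{4}}\le e^{-N/8}$, and splits on this event to get the lower bound $\frac{1-e^{-N/8}}{\frac{3N}{4}+1}$, which exceeds $\frac1N$ only once $N$ is moderately large --- this is where the threshold $N\ge 15$ comes from. You instead compute the relevant sum in closed form via $\binom{N}{k}\frac{1}{k+1}=\frac{1}{N+1}\binom{N+1}{k+1}$, obtaining the exact lower bound $\frac{2-(N+2)2^{-N}}{N+1}$, and the comparison with $\frac1N$ reduces to $2^N\ge \frac{N(N+2)}{N-1}$, which your elementary estimate settles for all $N\ge 3$ (I checked the algebra; it is right, and both you and the paper correctly treat the $n=0$ event as a nonnegative contribution that can be discarded). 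Your argument is more elementary (no concentration inequality), sharper, and valid well below the stated threshold, so it certainly suffices for the proposition as stated; the paper's concentration-based argument is looser but is the kind of soft bound that would still go through if the conditional expectation $\expect{}{x_\mathrm{min}\mid n=k}$ were not available in closed form.
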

\begin{proof}
Let $n\sim \mathrm{Binomial}\left(N,\frac{1}{2}\right)$. 
By Hoeffding's inequality,
it holds that:
$$
\prob{n}{n-\frac{N}{2} \ge t}\le e^{-\frac{2t^2}{N^2}}
$$
And for $t=\frac{N}{4}$ we obtain:
\begin{equation}
\label{eq:finite_data_k_hoeffding}
\prob{n}{n \ge \frac{3N}{4}} \le e^{-\frac{N}{8}}
\end{equation}
For any fixed $n$, let
$x_1,\dots,x_n\sim\mathrm{Uniform}\left([0,1]\right)$, and
denote the first order statistic among $\Set{x_i}$ by $x_{\mathrm{min},n}=\min_{i\in[n]} x_i$. As each $x_i$ is a uniform random variable over the unit interval, the first order statistic admits a beta distribution:
$$
x_{\mathrm{min},n}\sim \mathrm{Beta}(1,n)
$$
and therefore:
\begin{equation}
\label{eq:finite_data_x_min_k_expectation}
\expect{}{x_{\mathrm{min},n}} = \frac{1}{n+1}
\end{equation}
Now consider the compound variable $x_{\mathrm{min}}=x_{\mathrm{min,n}}$ where $n\sim\mathrm{Binomial}\left(N,\frac{1}{2}\right)$. Applying the law of total expectation:
\begin{align*}
\expect{}{x_\mathrm{min}}
&=
\expect{n}{\expect{}{x_\mathrm{min}\mid n}}
\\&=
\expect{n}{\expect{}{x_\mathrm{min,n}}}
\intertext{By \cref{eq:finite_data_x_min_k_expectation} we obtain the expectation of $x_{\mathrm{min},n}$:}
\expect{}{x_\mathrm{min}}&=
\expect{n}{\frac{1}{n+1}}
\intertext{\cref{eq:finite_data_k_hoeffding} gives a lower bound on the expectation:}
\expect{}{x_\mathrm{min}}
&\ge 
\overbrace{
\frac{1}{\frac{3N}{4}+1}\cdot
\left(1-e^{-\frac{N}{8}}\right)
}^{
n < \frac{3N}{4}
}
+ 
\overbrace{
0\cdot e^{-\frac{N}{8}}
}^{n\ge\frac{3N}{4}}
\intertext{Then for $N\ge 15$ it holds that:}
\expect{}{x_\mathrm{min}}&\ge \frac{1}{N}
\end{align*}
As required.

\end{proof}

\begin{proposition}
\label{prop:finite_data_coexistence_formal}
For the distribution defined above, assume the Hard-SVM classifier is trained on dataset of size $n\ge 21$, and let $\varepsilon\in\left(0,\frac{1}{2n}\right)$. It holds that:
\begin{enumerate}
    \item When the classifier is trained on $D_\vp$ for $\vp=(1,0)$:
    \begin{align*}
    \expect{}{\acc_\vp(h)}
    =\expect{}{\acc_A(h)}
    &\le 1-\frac{1}{8n}
    \end{align*}
    \item When the classifier is trained on $D_\vp$ for $\vp=(0,1)$:
    \begin{align*}
    \expect{}{\acc_\vp(h)}
    =\expect{}{\acc_B(h)}
    &\le 1-\frac{1}{8n}
    \end{align*}
    \item When the classifier is trained on $D_\vp$ for $\vp=(0.5,0.5)$:
    \begin{align*}
    \acc_{\vp}(h_n) =
    \expect{}{\acc_A(h)} = \expect{}{\acc_B(h)}
     \ge 1-2\left(1-\frac{1}{4}\right)^n
    \end{align*}
    \item Coexistence is beneficial.
\end{enumerate}
\end{proposition}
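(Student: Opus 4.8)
The plan is to reduce every case to one--dimensional Hard-SVM, where \Cref{remark:1d_hard_svm} gives the decision boundary in closed form as $x_{\mathrm{margin}}=\tfrac12\bigl(x^-_{\mathrm{max}}+x^+_{\mathrm{min}}\bigr)$, and then to read off the accuracy at each of the three states $\vp\in\{(1,0),(0,1),(0.5,0.5)\}$. The structural fact driving the argument is that in each group one class is a Dirac mass (so its empirical extreme equals the population extreme, $\pm\varepsilon$, the moment that class is sampled at all) while the other class is a uniform spread whose empirical extreme deviates from the truth by $\Theta(1/n)$. With a single group present this asymmetry pulls the margin off the true boundary $x=0$; with both groups present the two Dirac masses --- one of each label --- pin the margin exactly at $0$.

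For parts (1) and (2), take $\vp=(1,0)$, so the $n$ training points are i.i.d.\ from $D_A$: each is $(-\varepsilon,-1)$ with probability $\tfrac12$, or $(\varepsilon+U,+1)$ with $U\sim\mathrm{Uniform}([0,1])$ with probability $\tfrac12$. Up to a degenerate all-one-class event of probability $O(2^{-n})$, we have $x^-_{\mathrm{max}}=-\varepsilon$ and $x^+_{\mathrm{min}}=\varepsilon+u_\mathrm{min}$ with $u_\mathrm{min}$ the minimum of $m\sim\mathrm{Binomial}(n,\tfrac12)$ uniforms, so $x_{\mathrm{margin}}=u_\mathrm{min}/2$. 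Since the true boundary is $0$, the classifier misclassifies precisely the slice $[\varepsilon,u_\mathrm{min}/2)$ of the positive class, of conditional mass $(u_\mathrm{min}/2-\varepsilon)_+$, giving $\acc_A(h)=1-\tfrac14(u_\mathrm{min}-2\varepsilon)_+$. Taking expectations, bounding $(\cdot)_+$ below by its argument, controlling $\expect{}{u_\mathrm{min}}$ via \Cref{claim:finite_sample_xmin_lower_bound} (with $N=n\ge15$), and using $\varepsilon<\tfrac1{2n}$ yields $\expect{}{\acc_A(h)}\le1-\tfrac1{8n}$. Part (2) follows from part (1) via the reflection $(x,y)\mapsto(-x,-y)$, under which Hard-SVM is equivariant and $D_A$ maps to $D_B$.

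For part (3), take $\vp=(0.5,0.5)$, so $D_\vp$ puts mass $\tfrac14$ on each of: the positive uniform of $A$, the point mass at $-\varepsilon$, the point mass at $+\varepsilon$, and the negative uniform of $B$. As soon as the sample contains at least one point at $-\varepsilon$ we get $x^-_{\mathrm{max}}=-\varepsilon$, and as soon as it contains at least one point at $+\varepsilon$ we get $x^+_{\mathrm{min}}=+\varepsilon$; each such sub-class is drawn with probability $\tfrac14$, so both hold with probability at least $1-2(3/4)^n$, and on that event $x_{\mathrm{margin}}=0$ recovers the true boundary and yields accuracy $1$ on both groups. Lower-bounding accuracy by $0$ on the complement gives $\expect{}{\acc_A(h)}=\expect{}{\acc_B(h)}\ge1-2(3/4)^n$, the equality being the $D_B=-D_A$ symmetry.

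Part (4) then combines these: the symmetric state $\vp^*=(0.5,0.5)$ is an equilibrium because the construction's reflection symmetry forces $F_A(\vp^*)=F_B(\vp^*)$, and by parts (3) and (1) group $A$'s expected accuracy at $\vp^*$ is at least $1-2(3/4)^n$ whereas in isolation it is at most $1-\tfrac1{8n}$ (symmetrically for $B$); hence coexistence benefits both groups once $2(3/4)^n<\tfrac1{8n}$, i.e.\ $16n(3/4)^n<1$, which holds at $n=21$ (the left side is $\approx0.80$) and, since $n\mapsto16n(3/4)^n$ is decreasing for $n\ge4$, for all $n\ge21$. I expect the only delicate step to be part (1) (equivalently (2)): one must track the truncation in $(u_\mathrm{min}/2-\varepsilon)_+$, the $O(\varepsilon)$ offset from the Dirac mass sitting at $-\varepsilon$ rather than $0$, and the negligible all-one-class events, and then confirm the constant survives at $\varepsilon<\tfrac1{2n}$ --- this probably wants the slightly sharper estimate $\expect{}{u_\mathrm{min}}\ge\tfrac2{n+2}$ (Jensen applied to $m\mapsto\tfrac1{m+1}$) rather than the bare $\tfrac1n$ bound. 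Parts (3) and (4) are then just a union bound and a one-line inequality.
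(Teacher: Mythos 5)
Your proposal is correct and follows essentially the same route as the paper's proof: reduce everything to the one-dimensional Hard-SVM margin formula (\Cref{remark:1d_hard_svm}), bound the isolated-group accuracy through the expected minimum of the uniform positive samples (\Cref{claim:finite_sample_xmin_lower_bound}), get the mixed-state bound from a union bound over the two Dirac masses pinning the margin at $0$, and finish by checking $2(3/4)^n \le \tfrac{1}{8n}$ for $n\ge 21$. The one refinement you flag is real and well handled: with the correct misclassified-mass coefficient (an $\varepsilon/2$ rather than $\varepsilon/4$ term) the bare bound $\mathbb{E}[u_{\min}]\ge \tfrac1n$ is too weak when $\varepsilon$ may be as large as $\tfrac{1}{2n}$, and your sharper Jensen estimate $\mathbb{E}[u_{\min}]\ge\tfrac{2}{n+2}$, together with absorbing the exponentially rare single-class event, closes that gap cleanly where the paper's own write-up is somewhat sloppier.
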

\begin{proof}
For case (1), the Hard-SVM algorithm trains on data sampled from $D_A$.
Denote the size of the dataset by $n$ and the corresponding Hard-SVM classfier by $h_n$. We note that $h_n$ is a random variable depending on the dataset. 
For a dataset $\Set{(x_i,y_i)}_{i=1}^n\sim D_A^n$,
Since the problem is one-dimensional, the Hard-SVM decision margin is given by \cref{eq:1d_hard_svm}: 
$$
x_\mathrm{margin}
=
\frac{
x^-_\mathrm{max}
+
x^+_\mathrm{min}
}{2}
=\frac{x^+_\mathrm{min}-\varepsilon}{2}
>0
$$
And the expected accuracy of each group is given by:
\begin{align*}
\expect{}{\acc_A(h_n) \mid \omega^c}
&= \frac{1}{2}+\frac{1}{2}\left(1-x_\mathrm{margin} + \varepsilon\right)
= 1-\frac{x^+_\mathrm{min}-\varepsilon}{4}
+ \frac{\varepsilon}{4}
\end{align*}

By \Cref{claim:finite_sample_xmin_lower_bound} and for $n\ge 15$, it holds that $\expect{}{x_\mathrm{margin}}\ge\frac{1}{n}+\varepsilon$, and therefore:
\begin{align*}
\expect{}{\acc_A(h_n) \mid \omega^c}
&\le 1- \frac{1}{4n}+\frac{1}{8n}=1-\frac{1}{8n}
\end{align*}

Case (2) follows from symmetry.

For case (3), denote by $\omega$ the bad event in which the training set $\Set{(x_i,y_i)}_{i=1}^n\sim D_\vp^n$ does not contain $x_i\in\Set{-\varepsilon,\varepsilon}$. By the union bound, the probability of this event is bounded by $2\left(1-\frac{1}{4}\right)^n$, and therefore the accuracy for $\vp=(0.5,0.5)$ is bounded from below by:
$$
\acc_{\vp}(h_n) \ge 1-2\left(1-\frac{1}{4}\right)^n
$$

Finally, for (4), we note that 
$1-2\left(1-\frac{1}{4}\right)^n \ge 1-\frac{1}{8n}$ for all $n\ge 21$.
\end{proof}

\begin{proof}[Proof of \Cref{thm:finite_data_coexistence}]
By \Cref{prop:finite_data_coexistence_formal}.
\end{proof}

\subsection{Stabilizing Coexistence}
\label{subsec:steering_coexistence_proof}

\begin{proof}[Proof of \Cref{prop:stabilizing_coexistence}]
Let $F(\vp)$ be an evolutionary prediction game induced by an optimal learning algorithm $\Learner(\vp)$, and let $\vp^*$ be an unstable coexistence equilibrium with full support ($\Size{\support(\vp^*)}=K$). 
The game induced by $\Learner'(\vp)$ in the neighborhood of $\vp^*$ is $F'(\vp)=F(2\vp^*-\vp)$.
By \Cref{lemma:optimal_predictor_game_is_potential}, the game $F(\vp)$ is a potential game, and therefore admits a potential function $f(\vp)=\acc_\vp(h_\vp)$. Consider the potential function $f'(\vp)=-f(2\vp^*-\vp)$. By the chain rule, it holds that:
$$
\nabla f'(\vp) = -(\nabla f )(2\vp^*-\vp)\cdot\nabla(2\vp^*-\vp) = F(2\vp^*-\vp) = F'(\vp)
$$
And therefore $f'(\vp)$ is a potential function for the game $F'(\vp)$ in the neighborhood of $\vp^*$. 
The equilibrium $\vp^*$ is unstable, and therefore by \citep[Theorem 8.2.1]{sandholm2010population} it is not a maximizer of $f(\vp)$. Moreover, by \Cref{claim:optimal_loss_is_concave} the potential function $f(\vp)=\acc_\vp(h_\vp)$ is convex, and as $\vp^*$ is assumed to have full support, it is also a global minimizer of $f(\vp)$. From this we conclude that $\vp^*$ is a global maximizer of $f'(\vp)$, and therefore the equilibrium $\vp^*$ is stable under the evolutionary game induced by $\Learner'$.
\end{proof}

\section{Implementation Details}
\label{sec:appendix_experiments}

\paragraph{Code.} We implement our simulations and analysis in Python. Our synthetic-data experiments rely on scikit-learn \citep{scikit-learn} for learning algorithm implementations, our CIFAR-10 and MNIST experiments rely on PyTorch \citep{paszke2019pytorch} and \texttt{ffcv} \citep{leclerc2023ffcv}, and our ACSIncome experiment relies on scikit-learn and XGBoost \citep{chen2016xgboost}. We  use \texttt{matplotlib} \citep{matplotlib} for plotting, and \texttt{mpltern} \citep{yuji_ikeda_2024_11068993} for ternary plots.
Code is available at:
\url{https://github.com/edensaig/evolutionary-prediction-games}.

\paragraph{Hardware.} Synthetic data simulations were run on a single Macbook Pro laptop, with 16GB of RAM, M2 processor, and no GPU. Experiments involving neural networks (\Cref{sec:empirical}) were run on a dedicated server with an AMD EPYC 7502 CPU, 503GB of RAM, and an Nvidia RTX A4000 GPU.

\paragraph{Runtime.} A single run of the complete synthetic data pipeline takes roughly 30 minutes on a laptop. A single repetition of each real-data experiment (i.e. sampling $h\sim\Learner(\vp)$ and computing marginal accuracies) takes roughly 10 minutes.

\paragraph{Architectures.} For CIFAR-10, we use the Resnet-9 architecture and training code provided in the CIFAR-10 example code in the \texttt{ffcv} Github repository (\texttt{libffcv/ffcv}, commit \texttt{7885f40}), with modifications to control the probability of horizontal flips in training and testing. 
For MNIST, we use the convolutional neural network provided in the MNIST example code in the PyTorch examples Github repository (\texttt{pytorch/examples}, commit \texttt{37a1866}). The MNIST network has two convolutional layers, and two fully-connected layers, with dropout and max pooling. Training is performed for 200 epochs using SGD with learning rate $0.01$ and momentum $0.5$. For the ACSIncome experiments, we use scikit-learn and XGBoost classifiers with default regularization parameters (\texttt{LinearSVC}, \texttt{LogisticRegression}, \texttt{XGBClassifier}).

\paragraph{Replicator dynamics simulation.} 
In the experiments, we use the discrete replicator equation for simulating evolutionary dynamics. The continuous-time replicator equation is:
$$\dot p_k=p_k\left(F_k(\vp)-\bar F(\vp)\right)$$
where $\bar F(\vp)=\sum_k p_k F_k(\vp)$ is the average fitness across the population. In our simulations  (e.g., Figures~\ref{fig:optimal_classifier_retraining}, \ref{fig:empirical}), we use the discrete-time replicator equation given by \citet[eq. (3)]{taylor1978evolutionary}:
\begin{equation}
\label{eq:discrete_replicator}
p^{t+1}_k = p^t_k \frac{F_k(\vp)+1}{\bar F(\vp)+1}
\end{equation}
For the CIFAR-10 replicator simulation (\Cref{subsec:cifar_experiment}), we discretize the two-group simplex into the grid $\Set{(1,0),(0.9,0.1),\dots,(0,1)}$, precompute fitness values $F_k(\vp)$ at each grid point, and use linear interpolation to determine fitness at intermediate states.

\paragraph{Confidence intervals.} 
In both experiments, variability is due to the random split within the train and test sets, and due to random noise when it is added.
For numerical results, we report mean confidence intervals at $99\%$ confidence level.

\section{Additional Empirical Evaluation}
\label{app:additional_experiments}

\subsection{Effect of Sampling Noise} 
\label{app:sampling_noise_sensitivity_analysis}

\begin{figure*}
    \centering
    \includegraphics[width=\textwidth]{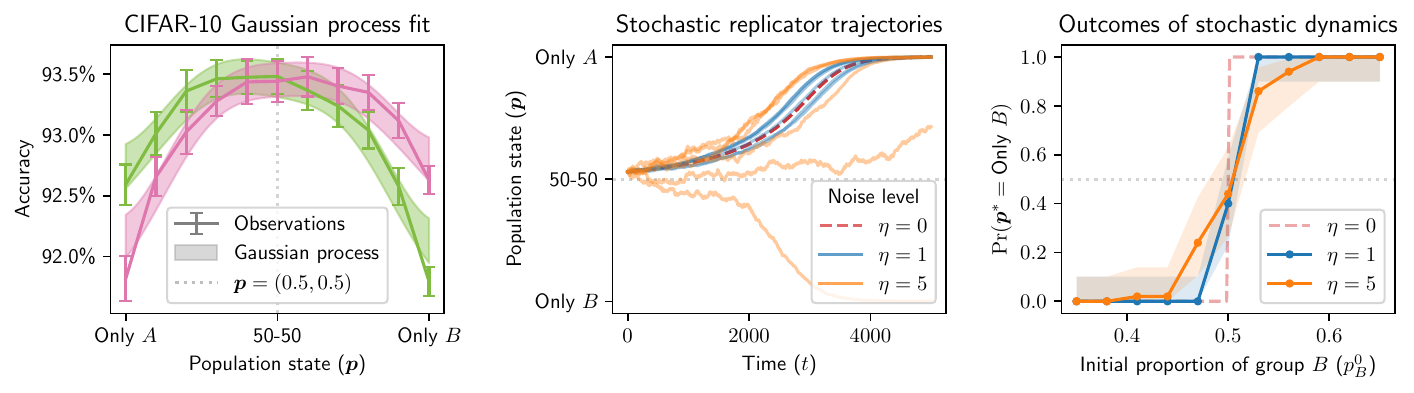}
    \vspace{-1em}
    \caption{
    Effect of sampling noise in the CIFAR-10 setting (\Cref{app:sampling_noise_sensitivity_analysis}). 
    \textbf{(Left)}
    Gaussian process fit to the raw observations. Error bars indicate standard deviation of data, and shaded areas indicate standard deviation of the Gaussian process.
    \textbf{(Center)}
    Trajectory samples of the induced stochastic replicator dynamics, for different simulated noise levels $\alpha$.
    \textbf{(Right)}
    Distribution of long-term outcomes as a function of initial condition.
    }
\label{fig:empirical_sensitivity_stochastic}
\end{figure*}
The prediction games defined in \Cref{sec:prediction_games}
associate evolutionary fitness with accuracy in expectation over the training set (\cref{eq:population_game_fitness}, a reasonable assumption where sampling noise is small. Here we test the robustness of this assumption by quantifying the effect of sampling noise on long-term outcomes. 

\paragraph{Method.} We assess the effect of sampling noise on long term outcome by fitting a Gaussian process to the outcomes of different sampled training sets, and analyzing the resulting stochastic dynamics for varying initial conditions. We use the raw data from the CIFAR-10 experiment  introduced in \Cref{subsec:cifar_experiment} (group accuracies for each training set sample).
We fit a Gaussian process (GP) with an RBF kernel with length scale $\ParamSensitivityGaussianProcessLengthScale$, using the skicit\nobreakdashes-learn implementation with alpha regularization parameter $\ParamSensitivityGaussianProcessAlpha$. 
To generate stochastic replicator dynamics, we replace the deterministic fitness terms in \cref{eq:discrete_replicator} with samples from the Gaussian process. We simulate different noise levels by multiplying the Gaussian process covariance by a constant $\eta \ge 0$, such that $\eta=0$ coincides with the deterministic dynamics, $\eta=1$ coincides with the Gaussian process fit to the original data, and $\eta=5$ represents a simulated five-fold increase in compared to the observed sampling noise. 
For each $\eta$, we simulate stochastic dynamics for a range of initial states  $p^0_B\in[\ParamSensitivityInitialStateRangeMin, \ParamSensitivityInitialStateRangeMax]$, and measure the distribution of long term outcomes (dominance of group $A$ vs. group $B$) over repeated realizations.

\paragraph{Results.}
Gaussian process fit is illustrated in \Cref{fig:empirical_sensitivity_stochastic} (Left), stochastic trajectories are presented in \Cref{fig:empirical_sensitivity_stochastic} (Center), and distributions of long-term outcomes are presented in \Cref{fig:empirical_sensitivity_deterministic} (Right).
When there is no sampling noise (i.e., $\eta=0$), group $B$ dominates iff $p^0_B>0.5$ as expected. With the natural variance of the task (i.e., using the inferred Gaussian process covariance and $\eta=1$), results are relatively robust, and group $B$ eventually dominates with high probability for all $p^0_B>0.5$. When variance is excessively increased (i.e., $\eta=5$), outcomes also become noisy, and dynamics can converge to group $A$ with some probability even for $p^0_B\approx 0.55$.

\subsection{Time to Convergence} 
\label{app:sensitivity_time_to_convergence}
For unstable equilibria, a key question is how fast the system transitions to a stable equilibrium state, and what affects this rate. Here we measure the time to reach (approximate) domination by one group as a function of the initial state.

\paragraph{Method.} We extend the analysis of the CIFAR-10 experiment, described in \Cref{subsec:cifar_experiment}. To obtain a smooth population game, we use the mean curves of the Gaussian process described in \Cref{app:sampling_noise_sensitivity_analysis}. For each initial condition, we simulate the replicator dynamics, and record the number of steps until one group is approximately dominant, formally $\min \vp^t \le \ParamSensitivityTimeToDominanceThreshold$.

\paragraph{Results.}  \Cref{fig:empirical_sensitivity_deterministic} (Center) presents the measured convergence times. Results show that the time to convergence is roughly linear in $p^0_B$ for $p^0_B\in[0.1,0.4]\cup[0.6,0.9]$, but tends to infinity as the initial state approaches the uniform distribution ($p^0_B\to 0.5)$. The implication is that for initially balanced populations, natural selection forces are relatively weak, and and higher-order effects (such as sampling noise, or other exogenous forces) can dominate the dynamics.

\begin{figure*}
    \centering
    \includegraphics[width=\textwidth]{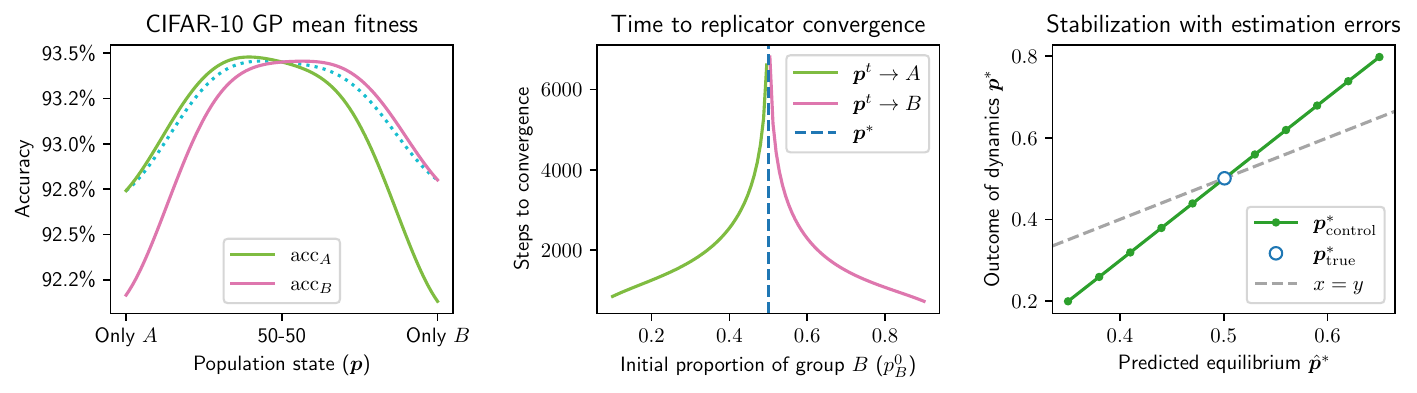}
    \vspace{-1em}
    \caption{
    Empirical analysis of deterministic sensitivity aspects (Appendices \ref{app:sensitivity_time_to_convergence}, \ref {app:sensitivity_steering_estimate}).
    \textbf{(Left)}
    Evolutionary prediction game induced by CIFAR-10 with horizontal flips and Resnet-9. Curves are smoothed by fitting a Gaussian process, as described in \Cref{app:sampling_noise_sensitivity_analysis}.
    \textbf{(Center)}
    Time to convergence for different initial conditions. Curve is relatively linear far from the unstable equilibrium, but approach infinity as the initial condition approaches the unstable equilibrium.
    \textbf{(Right)}
    Sensitivity to equilibrium estimation errors in steering, showing a linear relation.
    }
\label{fig:empirical_sensitivity_deterministic}
\end{figure*}

\subsection{Sensitivity Analysis for Stabilization} 
\label{app:sensitivity_steering_estimate}

Our proposed stabilization mechanism (\Cref{subsec:stabilization}) assumes access to the true equilibrium $\vp^*$, which may not be known in practice. Although our experiments in \Cref{subsec:cifar_experiment} use an estimated $\vp^*$ and results comply with our theoretical findings, they do reveal how robust outcomes are to misspecification of $\vp^*$. For this, we present a sensitivity analysis quantifying the deviation of the reached state from the estimated equilibrium $\vp^*$. 

\paragraph{Method.} We simulate the stabilization method presented in \Cref{subsec:stabilization} for varying population states around the uniform population state equilibrium, and measure the resulting long-term outcomes of the dynamics.

\paragraph{Results.} Results are presented in \Cref{fig:empirical_sensitivity_deterministic} (Right), and show a linear relation between the estimated equilibrium and the eventual outcome of the stabilized dynamics. An additional observation is that misspecification affects only the population composition, and not welfare. This is because the algorithm ensures that the returned classifier acts ``as if'' the system is under equilibrium, in which accuracy for both groups is the same.

\subsection{Three-Group Dynamics for Different Learning Algorithms}
\label{app:fairness_additional_experiments}

To further establish the results of \Cref{subsec:fairness_empirical}, we compare three-group dynamics induced by different algorithms in the same setting. 
\paragraph{Method.} We follow the same procedure presented in \Cref{subsec:fairness_empirical}, and compute the dynamics for a linear SVM (from scikit-learn), logistic regression (scikit-learn), and an XGBoost classifier, all with default regularization parameters. Replicator dynamics are simulated using \Cref{eq:discrete_replicator}, and basins of attraction are computed by simulating the dynamics until convergence. The initial condition for all algorithms is $
\vp^0=\left(
p_\mathrm{(TX)},
p_\mathrm{(NY)},
p_\mathrm{(CA)}
\right)
= (0.375, 0.5  , 0.125)
$.

\paragraph{Results.} Results are presented in \Cref{fig:foktables_algs}. For identical initial conditions, each learning algorithm leads to a qualitatively different outcome: dynamics induced by the linear SVM converge to dominance of the Texas (TX) user group, dynamics induced by logistic regression converge towards a coexistence saddle point, and XGBoost dynamics converge towards dominance of the California (CA) user group. In particular, for the logistic regression system, note that the population state evolves along the attraction basin boundary (separatrix). By the definition of the replicator dynamics, states along the attraction basin boundary satisfy $\acc_\mathrm{(CA)}(h_\vp)=\acc_\mathrm{(TX)}(h_\vp)$, and therefore exhibit a two-phase phenomenon similar to the one presented in \Cref{subsec:fairness_empirical} (an early balance followed by competition between the remaining groups). In contrast, all interior points in the XGBoost dynamics evolove toward CA domination, precluding the existence of the two-phase phenomenon in this setting.

\begin{figure*}
    \centering
    \includegraphics[width=\textwidth]{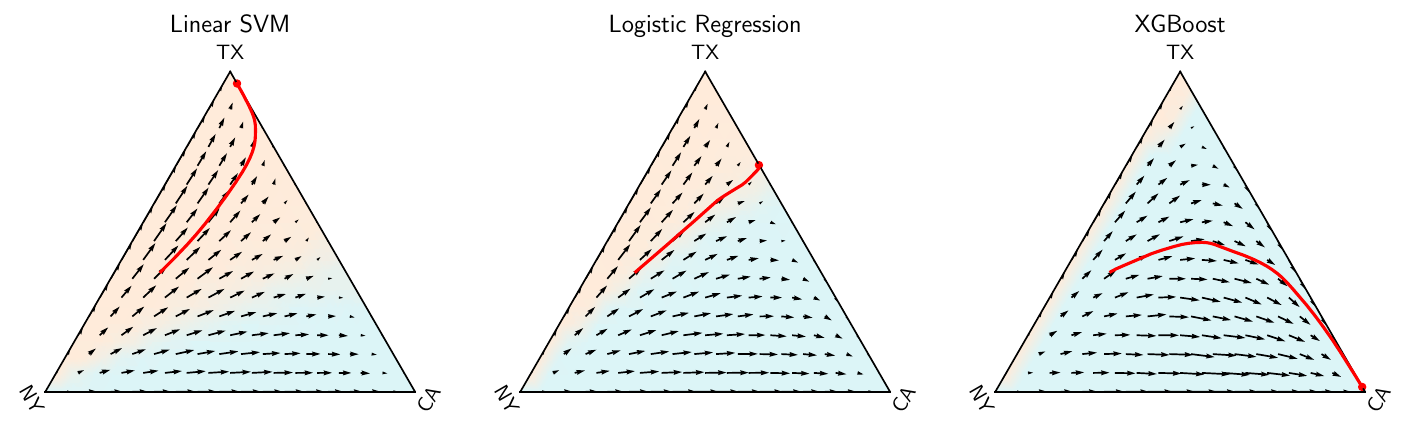}
    \vspace{-1em}
    \caption{
    Comparing three-group replicator dynamics across different learning algorithms for the ACSIncome prediction task (\Cref{app:fairness_additional_experiments}). Background colors represent basins of attraction, and red lines represent trajectories starting from the same initial condition. 
    }
\label{fig:foktables_algs}
\end{figure*}

\end{document}